\documentclass[11pt]{article}

\oddsidemargin 0in    %   Note \oddsidemargin = \evensidemargin
\evensidemargin 0in
\topmargin -0.5in
\textheight 8.5 true in       % Height of text (including footnotes & figures)
\textwidth 6.5 true in        % Width of text line.
\date{}

\usepackage[utf8]{inputenc} % allow utf-8 input
\usepackage[T1]{fontenc}    % use 8-bit T1 fonts
\usepackage{url}            % simple URL typesetting
\usepackage{booktabs}       % professional-quality tables
\usepackage{amsfonts}       % blackboard math symbols
\usepackage{nicefrac}       % compact symbols for 1/2, etc.
\usepackage{microtype,nicefrac}      % microtypography
\usepackage{xspace}
\usepackage{natbib}

\usepackage{comment}
\usepackage{amsmath}
\usepackage{amsthm}
\usepackage{graphicx}
\usepackage{rotating}
\usepackage{authblk}

\usepackage{amssymb}
\usepackage{autobreak}
\usepackage{mathtools}
\usepackage[dvipsnames]{xcolor}
\usepackage{bbm}
\usepackage[ruled,vlined]{algorithm2e}
\usepackage{algorithmic}

\usepackage{tikz}
\usetikzlibrary{positioning}
\usetikzlibrary{trees}

\usetikzlibrary{automata, positioning, arrows}
\tikzset{
->,% makes the edges directed
>=stealth, % makes the arrow heads bold
node distance=2cm, % specifies the minimum distance between two nodes. Change if necessary.
every state/.style={thick, fill=gray!10}, % sets the properties for each ’state’ node
initial text=$ $, % sets the text that appears on the start arrow
}
\usepackage{subcaption}
\captionsetup{compatibility=false}
\usepackage[colorlinks=true,linkcolor=blue,allcolors=blue]{hyperref}       % hyperlinks

\makeatletter

\DeclareMathOperator*{\argmax}{arg\,max}
\DeclareMathOperator*{\argmin}{arg\,min}

\newcommand{\E}{\mathbb{E}}
\newcommand{\R}{\mathbb{R}}

\newcommand{\Z}{\mathcal{Z}}
\newcommand{\A}{\mathcal{A}}

\newcommand{\Ss}{\mathcal{S}}
\newcommand{\F}{\mathcal{F}}
\newcommand{\U}{\mathcal{U}}
\newcommand{\cvar}{\text{CVaR}}
\newcommand{\cvara}{\text{CVaR}_{\alpha}}

\newtheorem{theorem}{Theorem}[section]
\newtheorem{corollary}[theorem]{Corollary}
\newtheorem{lemma}[theorem]{Lemma}
\newtheorem{proposition}[theorem]{Proposition}
\theoremstyle{remark}
\newtheorem{remark}[theorem]{Remark}
\theoremstyle{plain}
\newtheorem{assumption}[theorem]{Assumption}

\newtheoremstyle{named}{}{}{\itshape}{}{\bfseries}{.}{.5em}{\thmnote{#3}#1}
\theoremstyle{named}

\theoremstyle{definition}
\newtheorem{definition}{Definition}

\theoremstyle{definition}

\title{On the Convergence and Optimality of Policy Gradient for Markov Coherent Risk}

% \author{%
%   Herbie \\ 
%   Department of Not Kibbles\\
%   University of Nibbles \\
%   \texttt{herbie\_lecunn@nibbleyu.edu} \\
% }

\author[1]{Audrey Huang%\footnote{corresponding author: \texttt{audreyh@andrew.cmu.edu}.}
}
\author[1]{Liu Leqi}
\author[1]{Zachary C. Lipton}
\author[2]{Kamyar Azizzadenesheli}
\affil[ ]{\texttt{\href{mailto:audreyh@andrew.cmu.edu}{\textcolor{black}{audreyh@andrew.cmu.edu}},\href{mailto:leqil@cs.cmu.com}{\textcolor{black}{leqil@cs.cmu.edu}}, \href{mailto:zlipton@cmu.edu}{\textcolor{black}{zlipton@cmu.edu}},\href{mailto:kamyar@purdue.edu}{\textcolor{black}{kamyar@purdue.edu}}}}
\affil[1]{Machine Learning Department, Carnegie Mellon University}
\affil[2]{Department of Computer Science, Purdue University}

\begin{document}

\maketitle

\begin{abstract}
In order to model risk aversion in reinforcement learning,
an emerging line of research adapts familiar algorithms
to optimize \emph{coherent} risk functionals,
a class that includes conditional value-at-risk (CVaR).
Because optimizing the coherent risk 
is difficult in Markov decision processes,
recent work tends to focus
on the Markov coherent risk (MCR),
a time-consistent surrogate. 
While, policy gradient (PG) updates 
have been derived for this objective, 
it remains unclear
(i) whether PG finds a global optimum for MCR;
(ii) how to estimate the gradient in a tractable manner.
In this paper, we demonstrate that,
in general, MCR objectives
(unlike the expected return)
are not gradient dominated
and that stationary points 
are not, in general,
guaranteed to be globally optimal.
Moreover, we present a tight upper bound 
on the suboptimality of the learned policy,
characterizing its dependence 
on the nonlinearity of the objective
and the degree of risk aversion.
Addressing (ii), we propose 
a practical implementation of PG 
that uses state distribution reweighting 
to overcome previous limitations.
Through experiments, we demonstrate  
that when the optimality gap is small, 
PG can learn risk-sensitive policies.
However, we find that instances 
with large suboptimality gaps are abundant
and easy to construct, 
outlining an important challenge for future research.

\end{abstract}
\textit{Keywords:
  policy gradient, gradient dominance, coherent risk, Markov coherent risk.
}
\section{Introduction}
As reinforcement learning (RL) has emerged
as a prospective technology
in consequential and safety-critical domains,
a burgeoning body of research  
seeks to optimize objectives 
that incorporate notions of risk sensitivity~\citep{garcia2015comprehensive},
e.g., reducing variance 
or mitigating the worst case returns.
In particular, \emph{coherent} risk functionals~\citep{artzner1999coherent}
satisfy several natural and desirable properties,
subsuming the expected return,
conditional value-at-risk (CVaR),
mean upper semideviation, 
and spectral risk functionals~\citep{acerbi2002spectral}. 
However, when directly addressing the coherent risk 
in Markov decision process (MDPs),
owing to time inconsistency,
it can be difficult to construct 
% and solve 
dynamic programming formulations 
or to derive the corresponding Bellman operators.
Thus many researchers instead focus
on the \emph{Markov coherent risk} (MCR),
a time-consistent surrogate 
\citep{ruszczynski2006optimization, tamar2015policy}.

While recent papers derive policy gradient (PG) updates for MCR~\citep{tamar2015policy}, 
% the policy gradient (PG) learning algorithm for MCR
PG for this objective
has not been proven to enjoy the same guarantees 
%of convergence to global optima 
of global convergence
as it has for the expected return
\citep{agarwal2019theory,bhandari2019global}.
Moreover, while folk wisdom suggests that
%, even here, PG 
it
should converge to a stationary point of the MCR, 
this local convergence 
viewpoint 
leaves open the critical question of 
how far this point can be from the global optimum. 
%how far the solution can be from the global optimum.

In this paper, we analyze the global convergence 
of PG algorithms for optimizing the MCR, 
which, to the best of our knowledge, 
is the first investigation into its theoretical convergence. 
% applied to the MCR,
% investigating whether they are gradient dominated,
% and whether they converge to global optima. 
% Markov coherent risk. 
% To the best of our knowledge, 
% this is the first investigation 
% into the theoretical convergence 
% of policy gradient for MCR.
First, we prove that the MCR 
is not gradient dominated 
and thus, in general,
% policy gradient  
PG may not converge 
to a global optimum
(Section~\ref{SEC:GRADIENT_DOMINATION}). 
We upper bound the suboptimality of the policy 
using a scalar notion of first-order stationarity
plus necessary problem-dependent residuals 
reflecting the nonlinearity 
and risk sensitivity of the objective.
Moreover, we design MDP problem instances 
that illustrate how these residuals are inevitable,
and thus that our upper bound is tight.
% Next, we demonstrate that, 
% when the residuals are small, 
% projected gradient descent 
% with a directly parameterized policy 
% converges in $O(1/\sqrt{T})$ iterations 
% (Section \ref{sec:pgd}). 
Next, we show that projected gradient descent of the MCR objective converges in $O(1/\sqrt{T})$ iterations to a policy with suboptimality gap quantified by the same residuals (Section \ref{sec:pgd}). We later show that the mentioned suboptimality gap vanishes under the expected return, which is an MCR objective, recovering the convergence results of \citet{agarwal2019theory}. 

%  \leqi{Kamyar moved this as a separate paragraph, perhaps we could move it to the beggining of the previous paragraph? And also add a sentence about the performance difference lemma result.}

On the practical side, 
% We also provide methods 
we provide methods
to estimate the gradient of the MCR,
% which is difficult because 
a task that proves difficult
because 
it involves an expectation over 
a reweighted state distribution of the underlying MDP 
(Theorem~\ref{thm:mcrp_gradient}).
The weights are determined by the solution
%where the weights are the solution 
to the maximization problem in the MCR Bellman operator (Section \ref{SEC:VALUE}), unique 
to the coherent risk functional 
and dependent on the current policy. 
In \citet{tamar2015policy}, the authors 
make the strong assumption 
% of being able to 
that the agent can
sample from the reweighted distribution of the underlying MDP, 
which rarely holds in online settings. 
To relax this assumption, 
in the second part of our paper, 
we leverage recent advances 
in off-policy evaluation 
to propose an algorithm using 
state distribution correction~\citep{liu2018breaking} 
to tractably estimate the gradient, 
which can then be plugged 
into any policy gradient algorithm. 
This permits, to our knowledge, 
the first evaluations of 
% policy gradient optimization 
PG optimization
of MCR in MDPs.
% \leqi{check out the sentence ahead.}  
% Our method can be plugged 
% into any 
% % policy gradient
% PG 
% algorithm.
% 
% and we combine it with an
In our experiments (Section \ref{sec:experiments}), we integrate this method
into an actor-critic algorithm 
to demonstrate its effectiveness in optimizing MCR 
in a stochastic version of the Cliffwalk environment \citep{sutton2018reinforcement}.

\section{Related Work} 
Risk functionals, including 
exponential utility~\citep{fei2020risksensitive}, 
the mean-variance risk functional~\citep{di2012policy, prashanth2013actor}, 
the conditional value-at-risk (CVaR)~\citep{tamar2014optimizing, chow2014algorithms}
and cumulative prospect risks~\citep{prashanth2016cumulative}
are increasingly studied in the RL literature.
They have been explored empirically 
for a number of Atari games
\citep{bellemare2017distributional}, 
and in the context of such diverse applications 
as autonomous driving \citep{mavrin2019distributional} 
and healthcare \citep{keramati2019being}.
Many (but not all) of these risk functionals
are subsumed under the class of coherent risks
\citep{shapiro2009lectures},
which satisfy several desirable theoretical properties
(Section \ref{sec:preliminaries}). 

In the MDP setting, researchers have considered 
the dynamic coherent risk~\citep{chow2014algorithms},
which applies the coherent risk functional 
directly on the (random) return over entire trajectories
and the MCR objective \citep{tamar2015policy},
which applies it in a nested manner at each timestep.
\citet{pflug2016time} shows 
that the two formulations 
are generally not equal 
except for the expected return and max-risk
due to a property called \emph{time consistency}. 
Time consistent risk measures satisfy the property 
that if a policy is risk-optimal 
for $T$ timesteps onward, 
it is also risk-optimal for $t$ timesteps onward, 
where $t \leq T$.
Leveraging time consistency, 
one can rewrite MCR objectives 
in dynamic programming form 
using a Bellman operator, 
as derived by \citet{ruszczynski2006optimization} 
and studied in the PG setting 
by \citet{tamar2015policy}. 
By contrast, 
for the dynamic coherent risk,
the Bellman optimality operator
has been derived only for CVaR
\citep{chow2015risk}.
% and is much more difficult to solve for a number of reasons---it 
Even for CVaR, the dynamic coherent risk 
is difficult to work with,
requiring that we augment the state space 
with the continuous CVaR level $\alpha$ 
($\text{CVaR}_\alpha(Z)$ 
gives the expected value 
of the lower $\alpha$-quantile 
of the random variable $Z$), 
and learn the optimal value for all CVaR levels. 
The MCR upper bounds 
the dynamic coherent risk,
and though not always explicitly named, 
has seen wide usage in RL literature
\citep{tamar2014optimizing, bellemare2017distributional}.%keremati2020

In the primary work on PG for MCR objectives, \citet{tamar2015policy} studies 
the general class of coherent risk functionals,
introducing a (statistically) consistent
sampling-based estimate of the policy gradient.
However, they do not investigate convergence 
of the PG algorithm to local or global optima. 
Further, while they present 
an actor-critic algorithm 
for optimizing the MCR, 
they make strong sampling assumptions. 

Our work makes advances on both of these fronts. 
Inspired by
\citet{agarwal2019theory,kamyar,bhandari2019global},
who demonstrate the global optimality 
and convergence rate 
of PG for the expected return,
our convergence analysis characterizes
the suboptimality of PG for MCR.
Our proposed policy gradient algorithm 
is inspired by the algorithm 
presented in \citet{tamar2015policy},
utilizing state distribution correction methods 
first proposed by \citet{liu2018breaking} 
for off-policy evaluation 
and later adapted in \citet{liu2020off} 
for off-policy optimization.

\section{Preliminaries}\label{sec:preliminaries}
Let $(\Omega, \mathcal{F}, P)$ denote a probability space where $\Omega$ is a finite set of outcomes, 
$\mathcal{F}$ is the $\sigma$-algebra over $\Omega$, 
and $P \in \mathcal{B}$ is a probability measure over $\F$ and $\mathcal{B} = \left\{\xi: \int_{\omega \in \Omega}\xi(\omega) = 1, \xi(\omega) \geq 0\right\}$ is a set of probability measures.
Let $\mathcal{Z}$ be the space of real-valued random variables $Z:\Omega\rightarrow \R$ defined over $(\Omega, \F, P)$. 
A \emph{risk functional} $\rho: \Z \rightarrow \overline{\R}$ maps a random variable $Z$ 
to a value on the extended real line $\overline{\R} := \R \cup \{-\infty, \infty\}$. 
We call $\rho$ \emph{coherent} if it satisfies the following four properties \citep{artzner1999coherent}:
\begin{enumerate}
    \item Monotonicity: $\rho(Z_1) \leq \rho(Z_2)$ whenever $Z_1 \leq Z_2$ almost surely; 
    \item Convexity: $\rho(tZ_1 + (1-t)Z_2) \leq t\rho(Z_1) + (1-t)\rho(Z_2)$ for $t \in [0, 1]$;
    \item Translation equivariance: $\rho(Z + c) = \rho(Z) + c, \forall c \in \R$;
    \item Positive homogeneity: $\rho(tZ) = t\rho(Z)$ for $t > 0$.
\end{enumerate}
% Popular coherent risk functionals,
% including expected value, 
% conditional value-at-risk (CVaR), 
% and mean upper semideviation are coherent.
Importantly, coherent risk functionals 
can be uniquely expressed 
using the following dual representation:
\begin{theorem}
\citep[Theorem 6.4]{shapiro2009lectures}
\label{thm:dual}
A risk functional $\rho: \mathcal{Z} \to \overline{\R}$ 
is coherent if and only if there exists 
a convex bounded and closed set of \emph{feasible duals} $\U$ 
depending on $P$, such that 
\begin{equation}\label{eq:dualform}
    \rho(Z) = \sup_{\xi \in \U(P)}\E[\xi Z],%\kamyar{\text{I remember it was $\sup$ in the book}}
\end{equation}
where the risk envelope $\mathcal{U}(P) = \{\xi \in \text{dom}(\rho^*)%\partial \rho(0)
: \int_{\omega \in \Omega} \xi(\omega) dP(\omega) = 1, \xi \geq 0\}$ and 
$\text{dom}(\rho^*)$ is the domain of the Fenchel conjugate of $\rho$. 
%$\partial \rho(0)$ is the subdifferential of $\rho$ at $Z=0$. 
%($Z(\omega) = 0$ for all $\omega \in \Omega$). 
\end{theorem}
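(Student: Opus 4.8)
The plan is to prove both directions through convex duality, with the Fenchel--Moreau biconjugation theorem as the central tool. Since $\Omega$ is finite, $\Z$ is isomorphic to $\R^{|\Omega|}$, and a finite-valued convex function on $\R^{|\Omega|}$ is automatically continuous, hence lower semicontinuous and proper. Thus, granting convexity (axiom 2), $\rho$ coincides with its biconjugate $\rho = \rho^{**}$, where the conjugate is taken with respect to the pairing $\langle \xi, Z\rangle = \E[\xi Z]$, namely $\rho^*(\xi) = \sup_{Z \in \Z}\left(\E[\xi Z] - \rho(Z)\right)$ and $\rho^{**}(Z) = \sup_{\xi}\left(\E[\xi Z] - \rho^*(\xi)\right)$. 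The strategy is then to read off from each of the four coherence axioms a structural constraint on $\rho^*$, showing that it collapses to the indicator of precisely the risk envelope $\U(P)$.

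First I would exploit positive homogeneity (axiom 4). Replacing $Z$ by $tZ$ in the definition of $\rho^*$ and using $\rho(tZ) = t\rho(Z)$ shows that $\E[\xi Z] - \rho(Z)$ can only be $\leq 0$ for every $Z$ (else scaling $t\to\infty$ forces the supremum to $+\infty$); together with $\rho(0)=0$ this yields $\rho^*(\xi) \in \{0, +\infty\}$. Hence $\rho^*$ is the indicator $\delta_{\mathrm{dom}(\rho^*)}$, and the biconjugate collapses to $\rho(Z) = \sup_{\xi \in \mathrm{dom}(\rho^*)} \E[\xi Z]$, which is already the desired supremum form. It then remains to identify $\mathrm{dom}(\rho^*)$ with $\U(P)$. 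Monotonicity (axiom 1) supplies the sign constraint: if some coordinate $\xi(\omega_0)<0$, testing against $Z = -c\,\mathbbm{1}_{\{\omega_0\}} \leq 0$ (for which $\rho(Z)\le\rho(0)=0$) makes $\E[\xi Z]-\rho(Z)\to+\infty$ as $c\to\infty$, so $\xi\notin\mathrm{dom}(\rho^*)$; thus $\xi\ge 0$. Translation equivariance (axiom 3) supplies the normalization: replacing $Z$ by $Z+c$ contributes a term $c(\E[\xi]-1)$ to $\E[\xi Z]-\rho(Z)$, which is unbounded in $c$ unless $\E[\xi]=1$. Together these identify $\mathrm{dom}(\rho^*) = \U(P)$.

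To finish the forward direction I would verify the topological claims: $\rho^*$ is lower semicontinuous as a conjugate, so its indicator level set $\U(P)$ is closed; $\U(P)$ is convex as the effective domain of the convex function $\rho^*$; and it is bounded because $\{\xi \ge 0 : \E[\xi]=1\}$ is a (weighted) simplex in $\R^{|\Omega|}$. For the converse direction, I would simply take any convex, closed, bounded $\U(P)\subseteq\{\xi\ge 0:\E[\xi]=1\}$ and check the four axioms directly on $\rho(Z)=\sup_{\xi\in\U(P)}\E[\xi Z]$: monotonicity follows from $\xi\ge 0$, convexity from being a supremum of functions linear in $Z$, translation equivariance from $\E[\xi]=1$, and positive homogeneity from linearity of $\E[\xi\,\cdot\,]$ under positive scaling.

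I expect the main obstacle to be the forward direction's reliance on Fenchel--Moreau, specifically justifying that $\rho$ is proper and lower semicontinuous so that $\rho=\rho^{**}$ genuinely holds, and then carefully converting positive homogeneity into the $\{0,+\infty\}$-valued (indicator) structure of $\rho^*$ --- this is the step that makes a dual \emph{set}, rather than merely a dual representation with an additive penalty, appear. The monotonicity and translation-equivariance arguments are localized perturbation tests and should be routine once the indicator structure is in place.
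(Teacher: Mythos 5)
The paper does not prove this statement---it is imported verbatim from \citet[Theorem~6.4]{shapiro2009lectures}---so there is no internal proof to compare against; your Fenchel--Moreau biconjugation argument is precisely the standard proof given in that reference (conjugate duality plus reading each coherence axiom off as a constraint forcing $\rho^*$ to be the indicator of $\U(P)$), and it is correct. The only step you gloss is the properness/continuity needed to invoke $\rho=\rho^{**}$: finite-valuedness of $\rho$ is not assumed in the statement (the codomain is $\overline{\R}$) but follows on a finite $\Omega$ from monotonicity and translation equivariance via $-\Vert Z\Vert_\infty \le \rho(Z) \le \Vert Z\Vert_\infty$, which is worth one line; likewise the sign and boundedness claims for $\xi$ should be read $P$-almost everywhere, since coordinates with $P(\omega)=0$ are invisible to the pairing.
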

Because the set of subdifferentials is always a convex and compact set, following \cite{tamar2015policy}, we write $\U$ using the general form below, which includes additional assumptions on smoothness: 
\begin{assumption}[General Form of Risk Envelope]\label{assum:risk_envelope}
The risk envelope $\U(P)$ from Theorem \ref{thm:dual} can be written as 
\begin{align} \label{eq:risk-envelop}
    \U(P) = \{&\xi : g_e(\xi, P) = 0 \quad\forall e \in \mathcal{E},  f_i(\xi, P) \leq 0 \quad\forall i \in \mathcal{I}, 
    \xi \geq 0 , \E_{P}[\xi] = 1 \},
\end{align}
%\begin{align}
%   \U(P) = \{&\xi : g_e(\xi, P) = 0 \quad\forall e \in \mathcal{E},  \label{eq:risk-envelop} \\
%   &f_i(\xi, P) \leq 0 \quad\forall i \in \mathcal{I}, 
%    \xi \geq 0 , \E_{P}[\xi] = 1 \},\nonumber
%\end{align}
where $\mathcal{E}, \mathcal{I}$ denote 
a set of equality and inequality constraints, respectively. 
Each constraint $g_e$ is affine in $\xi$, 
each constraint $f_i$ is convex in $\xi$, 
and there exists a strictly feasible point $\overline{\xi}$.
Further, for any given $\xi$, 
both $g_e(\xi, P)$ and $f_i(\xi, P)$ 
are twice differentiable in $P$ 
and there exists $M > 0$ such that 
for all $e \in \mathcal{E}$ and $i \in \mathcal{I}$, 
{%\small
$$\Big\Vert \frac{dg_e(\xi, P)}{dP}  \Big\Vert_{op} \leq M \quad\text{and}\quad \Big\Vert \frac{df_i(\xi, P)}{dP}  \Big\Vert_{op} \leq M.$$
}%
\end{assumption}
Assumption \ref{assum:risk_envelope} is satisfied 
by the CVaR, mean semideviation, and spectral risk functionals,
and implies that the risk envelope $\U$ is known in explicit form.  
The dual representation in Theorem \ref{thm:dual} 
%can thus be interpreted as suggesting 
indicates
that coherent risk measures can be seen 
as the eifjcctrdujltgunddiuvvjvffvldfukdnueiejjrtik
worst-case expectation of $Z$ 
over probability distributions reweighted by $\xi$,
which must be chosen from the set of measures 
in the risk envelope $\U$~\citep{chow2015risk}.

\paragraph{Conditional Value-at-Risk (CVaR)}
CVaR, which arises frequently
in the portfolio optimization
and (more recently) RL literatures,
corresponds to the expected return over 
the worst $\alpha$ fraction of outcomes.
Formally, the CVaR at a level $\alpha \in [0, 1]$ 
of a random variable $Z$ is defined as 
$\rho_{\text{CVaR}_\alpha} = \inf_{t \in \R}\left\{t + \frac{1}{\alpha}\E[(Z - t)_+]\right\}$. 
It can equivalently be expressed 
in the dual formulation (Theorem \ref{thm:dual}) 
using the risk envelope 
$\U_{\text{CVaR}_\alpha}(P) = \left\{\xi : \xi \in \Big[0, \frac{1}{\alpha}\Big], \E_{P}[\xi] = 1 \right\}$, 
for which
the solution $\xi^*$ to the maximization problem in~\eqref{eq:dualform} 
can be calculated in closed form 
as $\xi^*(\omega) = \frac{1}{\alpha}$ when $Z(\omega) > \lambda^{*}$ 
and as $\xi^*(\omega) = 0$ when $Z(\omega) < \lambda^{*}$, 
where $\lambda^{*}$ is any $(1-\alpha)$-quantile of $Z$
\citep[Chapter 6]{shapiro2009lectures}.

\section{Problem Setting}\label{sec:problem_setting}
In this paper, we focus on the infinite horizon discounted 
Markov Decision Process (MDP) setting. 
An MDP is a tuple $\mathcal{M} = (\Ss, \A, C, P, \gamma, s_0)$,
where $\Ss$ is the state space, 
$\A$ is the action space, 
$C:\Ss\rightarrow[0, 1]$ is the cost function,  
% is \kamyar{maybe, simply just say $C:\Ss\rightarrow[0, C_{\text{max}}]$ is the cost function, no need to say " a per-state deterministic  and bounded cost,"} a per-state deterministic  and bounded cost,
$P(\cdot|s,a)$ is the transition kernel, $\gamma \in [0, 1)$ is the discount factor, and $s_0 \in \Ss$ is the starting state. 
%Actions are chosen according to 
A stationary Markov policy 
$\pi_\theta : \Ss \rightarrow \Delta(\A)$ 
parameterized by $\theta \in \R^d$ 
maps each state $s \in \Ss$ 
to a probability measure over actions $\A$, where $\Delta(\cdot)$ denotes the probability simplex. 

Given an MDP $\mathcal{M}$ and policy $\pi_\theta$, consider a filtration $\F=\{\mathcal{F}_t\}_{t\geq0}$ such that the observations sequence $(s_0, a_0, c_0, \ldots, s_t, a_t, c_t)$ is $\mathcal{F}_t$-measurable. 
%Let $Z_t := C(s_t)$ 
Let $C(s_t)$
be the $\F_t$-measurable random cost at time $t$. 
In this paper, we consider the 
\emph{Markov coherent risk} (MCR) objective 
defined over $\{C(s_t)\}_{t \geq 0}$.  
% defined over $\{Z_t\}_{t \geq 0}$.  

% Given an MDP $\mathcal{M}$ and policy $\pi_\theta$, for time $t \geq 0$ let the time-dependent random variable $Z_t:= C(s_t)$ be the cost obtained at time $t$ following policy $\pi_\theta$ in $\mathcal{M}$, and let $\{Z_t\}_{t \geq 0}$ be the sequence of these costs. 
% \kamyar{Consider a filtration $F=\{\mathcal{F}_t\}_t$ such that $(s_0, a_0, \ldots, s_t)$ is $\mathcal{F}_t$-measurable.}
% Let $\mathcal{F}_t := \sigma (s_0, a_0, \ldots, s_t)$ denote a $\sigma$-algebra 
% generated by the states up until time $t$ and actions before time $t$. 
% \kamyar{You can say, $Z_t$ is $\mathcal{F}_t$-measurable}
% $Z_t$ is measurable with respect to $(\Omega, \mathcal{F}_t, P_\theta)$. 

%\kamyar{we need to work out this definition. }
\begin{definition}[Markov Coherent Risk Objective] 
% Given a sequence of random costs $\{Z_t\}_{t \geq 0}$, 
Given a sequence of random costs $\{C(s_t)\}_{t \geq 0}$, 
the Markov coherent risk objective is defined as
% \begin{align}
%     \rho_\theta(s_0) &= C(s_0) + \gamma \rho\Bigg(C(s_1) + ... \nonumber+\gamma \rho\bigg(C(s_{T-1}) \\
%     &\qquad \qquad + \gamma \rho\Big(C(s_{T}) + \ldots \Big)\bigg)\Bigg| s_0; \pi_\theta\Bigg).\label{def:mcr}
% \end{align} 
{%\small
\begin{equation}\label{def:mcr}
    \rho_\theta(s_0) = \Bigg(C(s_0) + \gamma \rho \bigg(C(s_1) + ... \nonumber+ \gamma \rho \Big(C(s_{t}) + \ldots \bigg); %s_0, %\Big| 
    \pi_\theta\Bigg). 
\end{equation} 
}%
\end{definition}
% \kamyar{what is $T$ here?}\leqi{Should we change $|s_0;\pi_\theta$ to $;\theta$?}
The MCR is a nested discounted sum of the risk functional $\rho$
applied at each timestep to the randomness of the cost $C(s_t)$ given the previous state $s_{t-1}$, which arises from the stochastic policy $\pi_\theta$ and transitions of the MDP.
% The MCR is a nested discounted sum of the risk functional $\rho$
% applied to the cost $C(s_t)$ at each time step\kamyar{it is not just $Z_t$} \leqi{Maybe we can just say $\rho(Z_t)$?}, where the randomness in each timestep arises from the policy $\pi_\theta$ and the transitions of the MDP. \kamyar{Probably we can define $Z_t = C(s_t)+\rho(Z_t+1)$. This way, $\rho_\theta(s_0)$ is $Z_0$. No need for having $T$. Also no need for the notation in Eq.~\ref{def:mcr}}
% It was first proposed by \citet{ruszczynski2010risk} and studied under the policy gradient setup by \citet{tamar2015policy}. 
% It is an upper bound on $\rho$ applied to the random returns due to the properties of coherent risk, and minimizing the MCR can be seen as minimizing the upper bound of the coherent risk. [discuss more]
We are interested in solving the following optimization problem, which seeks the parameter $\theta$ 
that minimizes the MCR \eqref{def:mcr},
\begin{equation}\label{eqn:mcrp}
    \min_\theta \rho_\theta(s_0)
\end{equation}
using policy gradient methods. 
% In Section \ref{sec:pgd}, we formally investigate the global optimality of policy gradient for the MCR. 
% We consider policy gradient methods for optimizing this objective, and investigate the global optimality of s. 
% While the policy gradient methods in \citet{tamar2015policy} aim to find locally optimal points, 
% in this work we formally investigate 
% the global optimality of policy gradient for this objective. 

\subsection{Value Function and Gradient}\label{SEC:VALUE}\label{sec:value}
The risk-sensitive \emph{value function} for an MDP $\mathcal{M}$ under the policy $\pi_\theta$ is defined as $V_\theta(s) = \rho_\theta(s)$. As shown in \citet{ruszczynski2010risk} and \citet{tamar2015policy}, the Bellman operator for 
$V_\theta(s)$ is
\begin{equation}\label{eqn:mcr_value}
    V_\theta(s) = C(s) + \gamma \max_{\xi \in \U(P_\theta)}\sum_{s'}P_\theta(s'|s)\xi(s')V_\theta(s'),
\end{equation}
where $\U$ is the risk envelope defined in Theorem \ref{thm:dual}  
% with $Z = V(s')$ \kamyar{instead of $Z = V(s')$, we should somehow imply that the prior works showed that the solution to Eq.~\ref{eqn:mcr_value} is equal to $Z$} \leqi{Perhaps changing $Z$ to $X$ in Section 3?} 
and $P_\theta(s'|s) := \sum_{a \in \mathcal{A}} \pi_\theta(a|s) P(s'|s,a)$.  
Under Assumption \ref{assum:risk_envelope}, the value function can be re-written as %a max-min Lagrangian problem 
\begin{align*}
       V_\theta(s) = C(s) + \gamma \max_{\xi}\min_{\lambda^P, \lambda^{\mathcal{E}}, \lambda^{\mathcal{I}}}L_{\theta, s}(\xi, \lambda^P, \lambda^\mathcal{E},  \lambda^\mathcal{I}), 
\end{align*}
where $L_{\theta, s}$ is the corresponding Lagrangian
%\scriptsize
{%\small
\begin{align}
    L_{\theta, s}(\xi, \lambda^P, \lambda^\mathcal{E},  \lambda^\mathcal{I}) &= \sum_{s'}P_\theta(s'|s)\xi(s')V_\theta(s') - \lambda^P\Big(\sum_{s'}P_\theta(s'|s)\xi(s') - 1\Big) \nonumber \\
    &\quad
    - \sum_{e \in \mathcal{E}}\lambda^\mathcal{E}(e)g_e(\xi, P_\theta) -  \sum_{i \in \mathcal{I}}\lambda^\mathcal{I}(i)f_i(\xi, P_\theta). 
\label{eqn:mcr_value_lagrang}
\end{align}
%--- uai
%\begin{align}
%    &L_{\theta, s}(\xi, \lambda^P, \lambda^\mathcal{E},  \lambda^\mathcal{I}) = \sum_{s'}P_\theta(s'|s)\xi(s')V_\theta(s') \nonumber \\
%    &\quad\quad\quad- \lambda^P\Big(\sum_{s'}P_\theta(s'|s)\xi(s') - 1\Big) 
%    - \sum_{e \in \mathcal{E}}\lambda^\mathcal{E}(e)g_e(\xi, P_\theta) \nonumber \\
%    &\quad\quad\quad-  \sum_{i \in \mathcal{I}}\lambda^\mathcal{I}(i)f_i(\xi, P_\theta). 
%\label{eqn:mcr_value_lagrang}
%\end{align}
}%
% and where $\xi \in \R^{|\Ss|}$, $\lambda^P \in \R$, 
% $\lambda^\mathcal{E} \in \R^{|\mathcal{E}|}$, and 
% $\lambda^\mathcal{I} \in \R_{\geq 0}^{|\mathcal{I}|}$.
% For any vector $\lambda \in \mathbb{R}^d$ and $i \in \{1, \ldots, d\}$, 
% $\lambda(i)$ is the $i$-th coordinate of $\lambda$. 
% For $\xi \in \R^{|\Ss|}$, we use $\xi(s)$ to refer to 
% the entry of $\xi$ corresponding to the state $s$. 
For each state $s \in \Ss$, let 
$(\xi_{\theta, s}, \lambda_{\theta, s}^{P}, \lambda_{\theta, s}^{\mathcal{E}}, \lambda_{\theta, s}^{\mathcal{I}})$ 
denote a saddle point 
of \eqref{eqn:mcr_value_lagrang},
which is guaranteed to exist %\kamyar{you many not need to say "under  Assumption " since the equation holds under that case, and you stated it.} 
under Assumption \ref{assum:risk_envelope}. 
The gradient of the value 
$\nabla_\theta V_\theta(s_0) = \nabla_\theta \rho_\theta(s_0)$ 
is calculated using the saddle points 
$(\xi_{\theta, s}, \lambda_{\theta, s}^{P}, \lambda_{\theta, s}^{\mathcal{E}},  \lambda_{\theta, s}^{\mathcal{I}})$, and was derived in \citet{tamar2015policy}.
We restate this theorem below with proof in Appendix \ref{sec:appendix_value}.  

\begin{theorem}%[Gradient of MCRP \cite{tamar2015policy}]
\cite[Theorem 5.2]{tamar2015policy}
\label{thm:mcrp_gradient}
Under Assumption \ref{assum:risk_envelope} 
and assuming the likelihood ratio 
$\nabla_\theta \log \pi_\theta(a|s)$ 
is well-defined and bounded for all $(s, a)$,  
\begin{equation*}\label{eqn:mcr_gradient}
    \nabla_\theta V_\theta(s) = \E_{\xi_{\theta, s}}\left[ \sum_{t=0}^\infty \gamma^t \nabla_\theta \log \pi_\theta(a_t|s_t) h_\theta(s_t, a_t) | s_0 = s\right],
\end{equation*}
where $\E_{\xi_{\theta, s}}$ refers to expectation 
with respect to trajectories generated
by a Markov chain with transition probabilities 
$P_\theta(\cdot|s)\xi_{\theta, s}(\cdot)$, and 
%\scriptsize
\begin{align}\label{eqn:mcr_h_theta}
    h_\theta(s, a) &= \gamma\sum_{s'}P(s'|s,a)\bigg[\xi_{\theta, s}(s')\Big(V_\theta(s')   - \lambda_{\theta, s}^{{P}}\Big) \nonumber \\
    &\quad- \sum_i \lambda_{\theta, s}^{\mathcal{I}}(i)\frac{df_i(\xi_{\theta, s}, P_\theta)}{dP_\theta} - \sum_e \lambda_{\theta, s}^{\mathcal{E}}(e)\frac{dg_e(\xi_{\theta, s}, P_\theta)}{dP_\theta} \bigg]. 
\end{align}
\end{theorem}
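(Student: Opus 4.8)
The plan is to differentiate the Lagrangian form of the Bellman fixed point \eqref{eqn:mcr_value_lagrang} and then unroll the resulting recursion. Fix a state $s$ and write $V_\theta(s) = C(s) + \gamma \max_\xi \min_{\lambda^P,\lambda^\mathcal{E},\lambda^\mathcal{I}} L_{\theta,s}(\xi,\lambda^P,\lambda^\mathcal{E},\lambda^\mathcal{I})$. Since the score $\nabla_\theta\log\pi_\theta(a|s)$ is bounded, costs lie in $[0,1]$, and $\gamma<1$, the value function is differentiable in $\theta$; this can be established by applying the implicit function theorem to the fixed-point equation, or by noting that the risk Bellman operator is a $\gamma$-contraction with smooth iterates. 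I would then apply $\nabla_\theta$ to both sides: the term $C(s)$ drops out, and the work is in differentiating the $\max$--$\min$ of the Lagrangian.

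The central observation is that $(\xi_{\theta,s},\lambda_{\theta,s}^{P},\lambda_{\theta,s}^{\mathcal{E}},\lambda_{\theta,s}^{\mathcal{I}})$ is a saddle point of $L_{\theta,s}$, so the first-order optimality conditions force the partial derivatives of $L_{\theta,s}$ with respect to each optimization variable to vanish. By the envelope (Danskin) theorem, the total derivative $\nabla_\theta[\max_\xi\min_\lambda L_{\theta,s}]$ equals the partial derivative of $L_{\theta,s}$ in its explicit $\theta$-dependence, evaluated at the saddle point with $(\xi,\lambda^P,\lambda^\mathcal{E},\lambda^\mathcal{I})$ held fixed. Crucially, $V_\theta(s')$ appearing inside $L_{\theta,s}$ is not an optimization variable yet still depends on $\theta$, so it must be differentiated by the chain rule and produces a recursive term. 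This yields
\begin{equation*}
\nabla_\theta V_\theta(s) = \gamma\, \frac{\partial L_{\theta,s}}{\partial\theta}\Big|_{\mathrm{expl}} + \gamma\sum_{s'} P_\theta(s'|s)\,\xi_{\theta,s}(s')\,\nabla_\theta V_\theta(s'),
\end{equation*}
where the explicit partial collects every appearance of $P_\theta$ in \eqref{eqn:mcr_value_lagrang} other than through $V_\theta(s')$.

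Next I would expand $\frac{\partial L_{\theta,s}}{\partial\theta}\big|_{\mathrm{expl}}$ and apply the log-derivative identity $\nabla_\theta P_\theta(s'|s)=\sum_a \pi_\theta(a|s)\nabla_\theta\log\pi_\theta(a|s)P(s'|s,a)$ to each $\nabla_\theta P_\theta$ factor arising from the objective, the $\lambda^P$-normalization, and the constraints $g_e$, $f_i$. Grouping the terms by successor state and factoring out the score shows
\begin{equation*}
\gamma\,\frac{\partial L_{\theta,s}}{\partial\theta}\Big|_{\mathrm{expl}} = \sum_a \pi_\theta(a|s)\,\nabla_\theta\log\pi_\theta(a|s)\,h_\theta(s,a) = \E_{a\sim\pi_\theta(\cdot|s)}\big[\nabla_\theta\log\pi_\theta(a|s)\,h_\theta(s,a)\big],
\end{equation*}
where $h_\theta(s,a)$ is exactly \eqref{eqn:mcr_h_theta}: the piece $\xi_{\theta,s}(s')(V_\theta(s')-\lambda_{\theta,s}^{P})$ comes from the objective together with the normalization constraint, while the two remaining sums come from differentiating $g_e$ and $f_i$ through $P_\theta$. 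Substituting back gives the one-step recursion $\nabla_\theta V_\theta(s) = \E_{a\sim\pi_\theta(\cdot|s)}[\nabla_\theta\log\pi_\theta(a|s)h_\theta(s,a)] + \gamma\sum_{s'}P_\theta(s'|s)\xi_{\theta,s}(s')\nabla_\theta V_\theta(s')$. Iterating this identity and reading $P_\theta(\cdot|s)\xi_{\theta,s}(\cdot)$ as the transition kernel of the reweighted Markov chain telescopes into the claimed discounted sum $\E_{\xi_{\theta,s}}[\sum_{t\geq0}\gamma^t\nabla_\theta\log\pi_\theta(a_t|s_t)h_\theta(s_t,a_t)\mid s_0=s]$.

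The main obstacle is the rigorous justification of the envelope step in this saddle-point setting: I must verify that Assumption \ref{assum:risk_envelope} (affine $g_e$, convex $f_i$, and a strictly feasible point supplying a constraint qualification) guarantees a saddle point at which strong duality and stationarity hold, so that the derivatives through $(\xi,\lambda^P,\lambda^\mathcal{E},\lambda^\mathcal{I})$ genuinely cancel rather than being discarded, and I must handle the component-wise interpretation of $\frac{dg_e}{dP_\theta}$ and $\frac{df_i}{dP_\theta}$ consistently with the inner sum over $s'$ in \eqref{eqn:mcr_h_theta}. A secondary technical point is controlling the unrolling: because $V_\theta$ and $h_\theta$ are bounded and $\gamma<1$, the partial sums converge geometrically, which both certifies differentiability of $V_\theta$ and legitimizes interchanging $\nabla_\theta$ with the infinite sum defining the reweighted-chain expectation.
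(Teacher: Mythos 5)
Your proposal is correct and follows essentially the same route as the paper's proof: apply the envelope theorem to the saddle point of the Lagrangian form of the Bellman equation, collect the explicit $\theta$-dependence through $P_\theta$ into $h_\theta(s,a)$ via the score-function identity, and unroll the resulting recursion $\nabla_\theta V_\theta(s) = \sum_a \nabla_\theta\pi_\theta(a|s)h_\theta(s,a) + \gamma\sum_{s'}P_\theta(s'|s)\xi_{\theta,s}(s')\nabla_\theta V_\theta(s')$ along the $\xi$-reweighted chain. The technical caveats you flag (constraint qualification for the envelope step, geometric convergence of the unrolling) are valid points that the paper's proof passes over silently.
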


\paragraph{Notation}
% \kamyar{define these in the preliminaries. We start section 5, and say, hey we are going to show you so many great results. Then we say, wait, shoot, I forgot to tell you the notations. This should go to the preliminaries. It breaks the threat of thoughts in short papers.}
We now introduce some notation key 
to our analysis of policy gradient in the following sections. 
First, define the infinite discounted 
state visitation distribution 
under a policy $\pi_\theta$ 
and the starting state $s_0$ to be
$d_{\pi_\theta} (s) := (1-\gamma)\sum_{t=0}^\infty \gamma^t P_{\pi_\theta}(s_t=s|s_0)$,
where $P_{\pi_\theta}(s_t=s|s_0)$ 
is the probability that $s_t = s$ 
after executing $\pi_\theta$ 
starting from state $s_0$. 
We adopt the shorthand for these terms 
as $d_\theta$ and $P_\theta$, respectively, 
and $d_*$ and $P_*$ for the optimal policy $\pi_*$. 
Without loss of generality, 
we assume that that the starting state is deterministic
and thus remove $s_0$ from our notation to avoid clutter. 
We next define the MDP transitions 
induced by $\pi_\theta$ 
and reweighted by $\{\xi_s\}_{s \in \Ss}$ 
to be $P_{\theta}^\xi$. 
Under this reweighting, the transitions are given 
by $P_\theta^\xi(s'|s) := P_\theta(s'|s)\xi_s(s')$. 
The state visitation distribution 
under these reweighted transitions 
is $d^{\xi}_\theta (s) := (1-\gamma)\sum_{t=0}^\infty \gamma^t P_{\theta}^\xi(s_t=s|s_0)$.

\section{Global Optimality}\label{SEC:GRADIENT_DOMINATION}\label{sec:gradient_domination}
% \begin{figure}
%     \centering 
%     \includegraphics[width=1\linewidth]{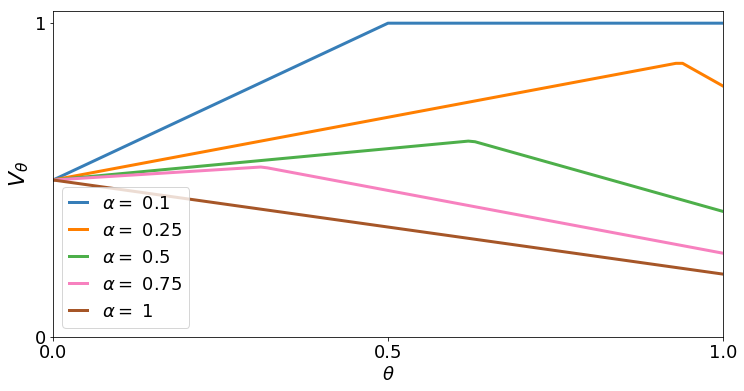}
%     \caption{The optimization landscape for MCR using CVaR at different $\alpha$ levels in our illustrative 2-armed bandit problem. } 
%     %\label{fig:bandit}
%     \label{fig:bandit_landscape} 
% \end{figure}

In this section, we investigate 
the global optimality and convergence 
of PG for the MCR objective~\eqref{eqn:mcrp}
and consider policies directly parameterized by 
$\theta \in \R^{|\Ss|\times|\A|}$, i.e. $\pi_\theta(a|s) = \theta_{s,a}$. 
We begin with an illustrative example demonstrating that risk-sensitive MCR exhibits local minima for even simple problems, which suggests that unlike the expected return, it is not gradient dominated. 
% to demonstrate the sensitivity 
% of PG algorithms for MCR
% to the initial policy.
% As seen in Figure \ref{fig:bandit_landscape},
% the MCR exhibits local minima,
% which already suggests 
% that unlike expected return, 
% it is not gradient dominated.
We formalize this result in Lemma \ref{LEM:GRADIENT_DOMINATION},
which states that the value optimality gap 
$V_\theta - V_*$ under the MCR 
is upper bounded by the magnitude 
of the gradient plus residual terms. 
As a result, stationary points of the MCR objective 
are not guaranteed to be global optima.
The residuals are a consequence of the fact that, 
unlike in the expected return, 
the MCR Bellman operator~\eqref{eqn:mcr_value} is nonlinear in the policy $\pi_\theta$ due to the maximization over $\U$, 
which is dependent on $\pi_\theta$. 
We show that the residuals vanish under the expected return, which is a Markov coherent risk objective, demonstrating that Lemma \ref{LEM:GRADIENT_DOMINATION} is a generalization of the gradient domination lemma previously developed in \citet{agarwal2019theory} for expected return.
% Moreover, we prove that the residuals 
% are not merely an artifact of our analysis
% but represent an actual obstacle to optimization,
% by introducing a lower bound (Theorem~\ref{thm:lower_bound})
Moreover, we prove that there exist problem instances for which the gradient domination inequality holds with equality, demonstrating the tightness of our result.  
Finally, using the gradient domination lemma, we establish the convergence rate of projected gradient descent for the MCR objective.

\begin{figure}
    \centering
    \begin{subfigure}[b]{0.4\textwidth}
    \centering
    \begin{tikzpicture}[scale=0.7, transform shape]
        \node[state, ] (q1) {$ $};
        \node[state, right of=q1, xshift=0.5cm] (q3) {$C=0.5$};
        \node[state, above of=q3] (q2) {$C=0$};
        \node[state, below of=q3] (q4) {$C=1$};
        \node[state, accepting, right of=q3] (q5) {$ $};
        \draw 
                (q1) edge[above] node{$.2\theta$} (q2)
                (q1) edge[above] node{$1-\theta$} (q3)
                (q1) edge[below] node{$.8\theta$} (q4)
                (q2) edge[above] node{1} (q5)
                (q3) edge[above] node{1} (q5)
                (q4) edge[above] node{1} (q5)
                (q5) edge[loop right] node{} (q5);
    \end{tikzpicture}
    \caption{} 
    \label{fig:bandit_problem}
    \end{subfigure}
    \hfill
    \begin{subfigure}[b]{0.55\textwidth}
    \centering 
    \includegraphics[width=1\linewidth]{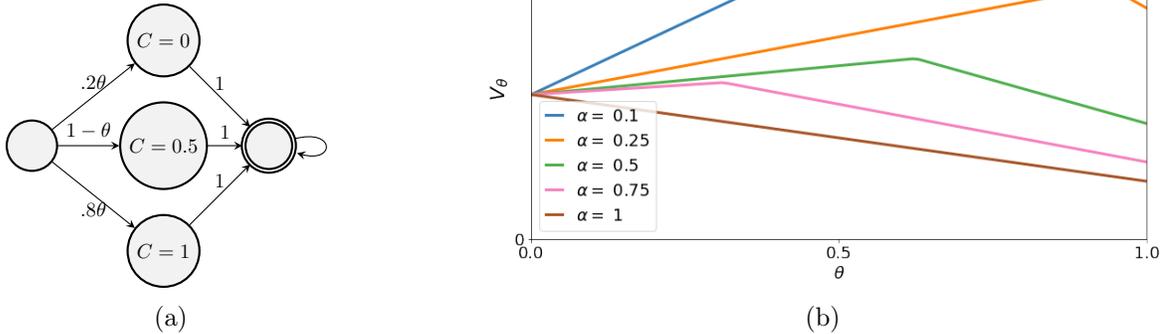}
    \caption{} 
    %\label{fig:bandit}
    \label{fig:bandit_landscape} 
    \end{subfigure}
    \caption{(a) The MDP for the bandit problem given in Section~\ref{sec:motivating}. (b) The optimization landscape for MCR using CVaR at different $\alpha$ levels in this MDP.}
    \label{fig:lower_bound_mdp}
\end{figure}

\subsection{Motivating Example}
\label{sec:motivating}
Consider a 2-armed bandit problem
where the cost of Arm 1 is a Bernoulli random variable 
which assigns probability $0.8$ 
to receiving cost $1$,   
and the cost of Arm 2 is always $0.5$.
Suppose we work with a directly parameterized policy, 
where the probability of playing arm 1 
is given by $\theta$ 
and the probability of playing arm 2 
is given by $1-\theta$. 
This problem can equivalently be written as a five-state MDP, with transitions given in Figure \ref{fig:bandit_problem}.

We use the MCR objective 
with the $\cvara$ risk functional, and in Figure \ref{fig:bandit_landscape},
% deterministically 
% the constant .
%in Figure \ref{fig:bandit_problem}. 
% Arm 1 gives cost 0 with 2 or 1 with some nonzero probability\leqi{Maybe we should be explicit about what the probability is in this plot.}, and arm 2 always gives an intermediate cost of 0.5. 
% 
% Though we frame this motivating example as a bandit problem, 
% which end in a self-absorbing terminal state\kamyar{transitions? and reward? You may want to add a drawing. Or you may just start with the MDP and leave the bandit description aside.}.
plot the optimization landscape for this problem, 
i.e., the MCR objective over the range 
of possible policies $\theta$,
at different risk sensitivity levels $\alpha$. 
It is clear that, for a range of $\alpha < 1$ levels, 
two stationary points exist 
at the boundary $\theta=0$ and $\theta = 1$.
% which point PG converges to depends on the initialization. 
Which one PG converges to depends 
on the initial value of $\theta$.
% For a range of initial policies, gradient descent will converge to a suboptimal stationary point. 
Only when CVaR  $\alpha = 1$,
which corresponds to the expected return, 
will the stationary point 
be unique and globally optimal. 
This matches previous results \citep{agarwal2019theory} demonstrating that the expected return
% which is linear in the policy $\theta$,
is gradient dominated.  

In addition, this counterexample exposes another source 
of poor convergence and suboptimality in MCR objectives. 
Because certain coherent risk functionals, 
such as $\cvara$, consider a worst-case 
reweighted fraction of the cost distribution, 
they are more 
% prone to suffering from
vulnerable to vanishing gradients 
than the expected return.
% cost objective is. 
This problem is exacerbated 
for higher degrees of risk sensitivity, 
e.g., lower $\alpha$ for CVaR. 
As can be seen from the $\alpha = 0.1$ line 
in Figure \ref{fig:bandit_landscape}, 
the gradient is zero for a significant fraction 
of random initializations of $\theta$,
while expected value ($\alpha = 1$) 
has no such problem.  

\subsection{Gradient Domination}
We formalize the intuition of our motivating example 
using a property called gradient domination. 
Any stationary point 
of a gradient dominated function 
is also globally optimal.  
Even if a gradient dominated function 
is highly non-convex, 
any optimization algorithm 
that converges to a stationary point 
will also converge to a global optimum. 
The expected 
return
% value objective 
has previously been shown
to be gradient dominated. 
We follow the notion of gradient domination 
introduced by \citet{bhandari2019global} 
and formally articulated in Definition \ref{def:gradient_domination}. 

\begin{definition}\label{def:gradient_domination}
For $\theta \in \R^d$, a function $f$ 
is $(c, \mu)$- gradient dominated over $\Theta$ 
if there exists constants $c > 0$ and $\mu \geq 0$ 
such that for all $\theta \in \Theta$, 
\begin{equation}\label{eq:gd}
    \min_{\theta' \in \Theta}f(\theta') \geq f(\theta) + \min_{\theta' \in \Theta}\Big[c\langle\nabla f(\theta), \theta' -\theta \rangle + \frac{\mu}{2}\Vert \theta - \theta'\Vert_2^2 \Big]. 
\end{equation}
\end{definition}

%Eq. \eqref{eq:gd} 
Definition~\ref{def:gradient_domination} 
has two important implications. 
First, when $\langle \nabla f(\theta), \theta' - \theta \rangle \geq 0$ for all $\theta'$,
which occurs when $\theta$ is a stationary point, 
we have $\min_{\theta' \in \Theta} f(\theta') \geq f(\theta)$,
implying that $\theta$ is globally optimal.
Second, for any $\theta$, 
the optimality gap $f(\theta) - \min_{\theta' \in \Theta}f(\theta')$ 
can be upper bounded by the minimization problem 
on the right hand side, 
which is a measure of how far 
$\theta$ is from stationary.
We demonstrate through 
Lemma \ref{lem:gradient_domination} 
that for 
% Markov coherent risk, 
MCR
although the first statement does not hold, 
we can still expressively bound the optimality gap 
with a similar optimization problem. 

We first define the two residuals 
presented in Lemma \ref{lem:gradient_domination}, 
which arise from two sources of nonlinearity 
in the Markov coherent risk value function: 
(i) the dependence of the saddle point solutions
$\xi_{\theta,s}$ and $\lambda_{\theta,s}$ on $\pi_\theta$;
and (ii) in the constraints $f_i$ and $g_e$ which, depending on the coherent risk measure, 
may be highly nonlinear in $\pi_\theta$.
The first source of nonlinearity 
contributes to our analysis is the difference 
between the primal and dual solutions 
of $\pi_\theta$ and $\pi_*$, 
which we upper bound by $\epsilon_{L}$ 
in Assumption \ref{assum:eps_l}. 
The second source of nonlinearity 
contributes a residual which we upper bound
by $\epsilon_{U}$ in Assumption \ref{assum:eps_u}. 
The residual $\epsilon_{U}$ is related to 
the difference between the constraint terms 
in the Lagrangian~\eqref{eqn:mcr_value_lagrang}
and their first-order approximation 
under $\pi_\theta$ in the gradient update, 
and can be interpreted as a measure 
of how nonlinear the constraints are 
in the policy $\pi_\theta$. 

\begin{assumption}[Policy Residual]\label{assum:eps_l}  
Given a policy $\theta$ and any other policy $\overline\pi$, 
{%\small 
\begin{align}\label{eqn:policy_res}
    \epsilon_L(\theta, \overline\pi) := \gamma\sum_{s}d_\theta^{\xi_\theta}(s)\Big|\Big(\lambda_{\theta,s}^{P} - \lambda_{\overline\pi,s}^{P}\Big)\sum_{s'}\Big(P_\theta - P_{\overline\pi}\Big)(s'|s)\xi_{\theta,s}(s')\Big|
\end{align}
}%
Suppose that for all $\theta$ and $\overline\pi$, $\epsilon_L(\theta, \overline\pi) \leq \epsilon_L$. % some abuse of notation?
% \leqi{Sould we change $;$ to $,$? I think people may think $\pi$ is parametrized by $\theta$?}
\end{assumption}

\begin{assumption}[Constraint Residual]\label{assum:eps_u}
Given a policy $\theta$ and any policy $\overline\pi$, 
{%\small
\begin{align}
    \epsilon_U(\theta, \overline\pi) := &\gamma \sum_s d_{\theta}^{\xi_\theta}(s) \Big|\sum_i \lambda_{\overline\pi,s}(i) \Delta f_i + \sum_e\lambda_{\overline\pi,s}(e) \Delta g_e \nonumber \\
    &+ \Big(\pi_\theta - \overline\pi\Big)^\top \Big(\sum_i \lambda_{\theta, s}(i)\nabla_\theta f_i(\xi_\theta, P_\theta) + \sum_e\lambda_{\theta, s}(e)\nabla_\theta g_e(\xi_\theta, P_\theta) \Big) \Big| \label{eqn:constraint_res}
\end{align}
}%
where $\Delta f_i = f_i(\xi_\theta, P_{\overline\pi}) - f_i(\xi_\theta, P_\theta)$
and $\Delta g_e$ is defined similarly.
Suppose that for all $\theta$ and $\overline\pi$, $\epsilon_U(\theta, \overline\pi) \leq \epsilon_U$. 
\end{assumption}

% \begin{assumption}[Residual Error]
% Suppose that the following holds for all $\theta$: 
% \begin{align*}
%  \E_{\Big[\Big|\Big(\lambda_{\theta, s}^{*, P} - \lambda_{*, s}^{*, P}\Big)
%  \sum_{s'}P_\theta(s'|s)\Big(\xi_{*,s}(s') - \xi_{\theta, s}(s')\Big)\Big|\Big] 
%  \leq \epsilon_{\text{res}},
% \end{align*}
% \end{assumption}

% \begin{assumption}[Constraint Residual]\label{assum:error_constraint} 
% Suppose that the following holds for all $\theta$: 
%     \begin{align*}
%     &\mathbb{E}_{s \sim d_{\xi_*}}\bigg[\Big|\sum_{i \in \mathcal{I}} \lambda_{*, s}^{*, \mathcal{I}}(i)\Big(f_i(\xi_{\theta,  s}, P_*) - f_i(\xi_{\theta, s}, P_\theta)\Big) + \sum_{e \in \mathcal{E}} \lambda_{*, s}^{*, \mathcal{E}}(e)\Big(g_e(\xi_{\theta,  s}, P_*) - g_e(\xi_{\theta,  s}, P_t)\Big)\\
%     &+ \sum_{a} \Delta \pi_\theta(a|s) \sum_{s'} P(s'|s,a) \Big(\sum_{i} \lambda_{\theta,  s}^{*, \mathcal{I}}(i)\frac{df_{i}(\xi_{\theta, s}, P_\theta)}{dp(s')} + \sum_{e} \lambda_{\theta, s}^{*, \mathcal{E}}(e)\frac{dg_{e}(\xi_{\theta, s}, P_\theta)}{dp(s')}\Big)\Big| \bigg] \leq
%     \epsilon_{\text{cons}},
%     \end{align*}
%     \normalsize
% where $\Delta \pi_\theta(a|s) = \pi_\theta(a|s) - \pi_*(a|s)$.
% \end{assumption}

% \leqi{Perhaps we could add a paragraph about performance difference lemma here and refer the readers to the appendix?}
We first derive a performance difference lemma for the MCR (Appendix Lemma \ref{lem:performance_difference}), which upper bounds the optimality gap using an MCR advantage function. We then use it to establish the gradient domination bound in Lemma \ref{LEM:GRADIENT_DOMINATION}. 

\begin{lemma}[Gradient Domination]\label{LEM:GRADIENT_DOMINATION}\label{lem:gradient_domination} %\label{lem:gradient_domination} 
The optimality gap $V_\theta(s_0) - V_*(s_0)$ is upper bounded as 
%\aud{Actually here, we decided not to use Assumption 2 and 3 right?} 
% Suppose that Assumptions~\ref{assum:eps_l} 
% and~\ref{assum:eps_u} hold for a policy $\pi_\theta$\leqi{Assumption 2 seems to require a $\theta$ and a $\pi$?}, then 
% \begin{align*}
%     V_\theta(s_0) - V_*(s_0) &\leq \Big\Vert \frac{d}{d^\theta}\Big\Vert_\infty \max_{\overline{\pi} \in \Delta(\A)^{\Ss}} (\overline{\pi} - \pi_\theta)^\top \nabla_\theta V_\theta(s_0) \\
%     &\quad+ \gamma \Big\Vert \frac{d_{\xi_\theta}}{d_{\xi_*}} \Big\Vert_\infty \Big(\epsilon_{\text{res}} + \epsilon_{\text{cons}}\Big). 
% \end{align*}
\begin{align*}
    V_\theta(s_0) - V_*(s_0) &\leq \Big\Vert \frac{d_*^{\xi_\theta}}{d_\theta^{\xi_\theta}}\Big\Vert_\infty \Big(\max_{\overline{\pi} \in \Delta(\A)^{\Ss}} (\overline{\pi} - \pi_\theta)^\top \nabla_\theta V_\theta(s_0) + \epsilon_{L}(\theta, \pi_*) + \epsilon_{U}(\theta, \pi_*)\Big). 
\end{align*}
% where $\epsilon_L(\pi_*, \theta)$ and $\epsilon_U(\pi_*, \theta)$ are defined in Definitions \ref{assum:eps_l} and \ref{assum:eps_u} respectively. 
\end{lemma}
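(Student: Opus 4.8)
The plan is to first prove a performance difference lemma specialized to the MCR (Appendix Lemma~\ref{lem:performance_difference}) and then sharpen it into the stated bound using the saddle-point structure of~\eqref{eqn:mcr_value_lagrang} together with the gradient formula of Theorem~\ref{thm:mcrp_gradient}. For the performance difference step I would telescope $V_\theta(s_0)-V_*(s_0)$ against the $\theta$-reweighted chain $P_\theta^{\xi_\theta}(s'|s)=P_\theta(s'|s)\xi_{\theta,s}(s')$. Since the normalization constraint $\E_{P_\theta}[\xi_{\theta,s}]=1$ in $\U(P_\theta)$ guarantees $\sum_{s'}P_\theta^{\xi_\theta}(s'|s)=1$, this is a genuine Markov chain and $V_\theta$ is its discounted-cost fixed point, so that expressing $V_*$ through the telescoping identity along this chain yields $V_\theta(s_0)-V_*(s_0)=\tfrac1{1-\gamma}\sum_s d_\theta^{\xi_\theta}(s)\,\bar A(s)$ with the one-step MCR advantage $\bar A(s)=C(s)+\gamma\sum_{s'}P_\theta(s'|s)\xi_{\theta,s}(s')V_*(s')-V_*(s)$.

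The core step is to bound $\bar A(s)$, and here the obstacle is that $\xi_{\theta,s}$ need not lie in the optimal policy's risk envelope $\U(P_*)$. I would circumvent this with weak duality: since $V_*(s)-C(s)=\gamma\max_{\xi\in\U(P_*)}\sum_{s'}P_*(s'|s)\xi(s')V_*(s')\ge \gamma L_{*,s}(\xi_{\theta,s},\lambda_{*,s})$ for the optimal multipliers $\lambda_{*,s}$, evaluating the Lagrangian~\eqref{eqn:mcr_value_lagrang} at the infeasible point $\xi_{\theta,s}$ produces exactly the correction terms I need. Expanding gives a transition-difference term $\gamma\sum_{s'}(P_\theta-P_*)(s'|s)\xi_{\theta,s}(s')\big(V_*(s')-\lambda_{*,s}^{P}\big)$; a normalization-defect term multiplied by $\lambda_{*,s}^{P}$ that, using $\sum_{s'}(P_\theta-P_*)\xi_{\theta,s}=1-\sum_{s'}P_*\xi_{\theta,s}$, folds into the first term; and constraint terms $\sum_e\lambda_{*,s}^{\mathcal{E}}(e)\,g_e(\xi_{\theta,s},P_*)+\sum_i\lambda_{*,s}^{\mathcal{I}}(i)\,f_i(\xi_{\theta,s},P_*)$, which because $\xi_{\theta,s}$ is feasible under $P_\theta$ equal $\Delta g_e,\Delta f_i$ up to a nonpositive complementary-slackness term I can drop.

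Next I would reorganize $\bar A(s)$ into the target pieces by splitting $V_*(s')-\lambda_{*,s}^{P}=\big(V_\theta(s')-\lambda_{\theta,s}^{P}\big)+\big(\lambda_{\theta,s}^{P}-\lambda_{*,s}^{P}\big)+\big(V_*(s')-V_\theta(s')\big)$. The $\big(\lambda_{\theta,s}^{P}-\lambda_{*,s}^{P}\big)$ contribution is precisely the integrand of $\epsilon_L$ (Assumption~\ref{assum:eps_l}); the $\big(V_\theta-\lambda_{\theta,s}^{P}\big)$ contribution, together with the constraint-derivative terms of $h_\theta$ in~\eqref{eqn:mcr_h_theta}, assembles—after comparing $\Delta f_i,\Delta g_e$ with their first-order expansions—into $\sum_a(\pi_*-\pi_\theta)(a|s)h_\theta(s,a)$ plus the integrand of $\epsilon_U$ (Assumption~\ref{assum:eps_u}). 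The crucial trick is the $\big(V_*-V_\theta\big)$ contribution: it equals $-\gamma\big(P_\theta^{\xi_\theta}(V_\theta-V_*)\big)(s)+\gamma\big(P_*^{\xi_\theta}(V_\theta-V_*)\big)(s)$, so when substituted back into the recursion $V_\theta-V_*=\gamma P_\theta^{\xi_\theta}(V_\theta-V_*)+\bar A$ it cancels the $P_\theta^{\xi_\theta}$ term and replaces it with $P_*^{\xi_\theta}$. Unrolling the resulting inequality recursion (legitimate since $\gamma P_*^{\xi_\theta}$ is entrywise nonnegative) therefore produces the optimal-transition visitation $d_*^{\xi_\theta}$ rather than $d_\theta^{\xi_\theta}$.

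Finally, I would bound the per-state gradient density by its maximum over $\overline\pi\in\Delta(\A)^{\Ss}$ and replace the $\pi_*$-specific residual integrands by their absolute values, making the bracket nonnegative; this licenses the change of measure $\sum_s d_*^{\xi_\theta}(s)[\cdot]\le\|d_*^{\xi_\theta}/d_\theta^{\xi_\theta}\|_\infty\sum_s d_\theta^{\xi_\theta}(s)[\cdot]$. Identifying $\tfrac1{1-\gamma}\sum_s d_\theta^{\xi_\theta}(s)\sum_a(\overline\pi-\pi_\theta)(a|s)h_\theta(s,a)$ with $(\overline\pi-\pi_\theta)^\top\nabla_\theta V_\theta(s_0)$ via Theorem~\ref{thm:mcrp_gradient} and the direct parameterization (so that $\partial_{\theta_{s,a}}V_\theta(s_0)=\tfrac1{1-\gamma}d_\theta^{\xi_\theta}(s)h_\theta(s,a)$), and moving the per-state maximum outside the state sum since $\overline\pi$ is chosen independently per state, recovers exactly $\max_{\overline\pi}(\overline\pi-\pi_\theta)^\top\nabla_\theta V_\theta(s_0)+\epsilon_L(\theta,\pi_*)+\epsilon_U(\theta,\pi_*)$. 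I expect the duality bookkeeping of the second and third steps to be the main obstacle: the nonlinearity of $\U(P_\theta)$ in $\pi_\theta$ is what prevents the clean telescoping available for the expected return, and extracting $h_\theta$, $\epsilon_L$, and $\epsilon_U$ in exactly their stated forms—while correctly tracking how the value-gap cross term shifts the recursion onto the $d_*^{\xi_\theta}$ chain—requires careful accounting.
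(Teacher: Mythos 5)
Your proposal is correct and follows essentially the same route as the paper: a performance-difference step obtained by evaluating the optimal policy's Lagrangian at the cross point $(\xi_{\theta,s},\lambda_{*,s})$ (weak duality plus feasibility of $\xi_{\theta,s}$ under $P_\theta$ and complementary slackness), unrolling along the $P_*(s'|s)\xi_{\theta,s}(s')$ chain to obtain $d_*^{\xi_\theta}$, and then the change of measure together with the decomposition of the resulting advantage into $(\overline{\pi}-\pi_\theta)^\top\nabla_\theta V_\theta$ plus the $\epsilon_L$ and $\epsilon_U$ integrands. The only difference is organizational—you start the telescoping from the $d_\theta^{\xi_\theta}$ recursion and shift it onto $P_*^{\xi_\theta}$ via the $(V_*-V_\theta)$ cross term, whereas the paper expands $V_*$ along that chain from the outset—but the two computations are identical in substance.
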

Lemma \ref{lem:gradient_domination} demonstrates 
that it is not 
% simply 
sufficient 
for the gradient of the value to be small 
for the policy to be nearly optimal.
% in terms of its value.
As with previous results for expected return, 
the $\xi_\theta$-reweighted 
state visitation distribution $d_\theta^{\xi_\theta}$ 
must adequately cover the support of 
the reweighted state visitation distribution $d_*^{\xi_\theta}$ under the optimal policy. 
Whenever the fraction 
$\Vert d_* / d_\theta \Vert_\infty$,
which is present in the gradient domination bound
for the expected return \citep{agarwal2019theory}
is finite,
the term $\Vert d_*^{\xi_\theta} / d_\theta^{\xi_\theta} \Vert_\infty$
is also finite by definition 
(the converse is not necessarily true).

Further, even if $\theta$ is a stationary point, i.e., $(\overline{\pi} - \pi_\theta)^\top \nabla_\theta V_\theta(s_0) \leq 0$ for all $\overline{\pi}$, 
the residuals $\epsilon_{L}(\theta, \pi_*)$ and $\epsilon_{U}(\theta, \pi_*)$ 
can be nonzero and global optimality is not guaranteed. 
% The residuals serve measure of the distance between
The residuals capture the distance between 
$\pi_\theta$ and an optimal policy $\pi_*$ 
in terms of the Lagrangian primal and dual variables 
that are used to calculate the objective function. 
In general, an optimal policy 
for a given MDP is not known a priori, 
but we can upper bound the optimality gap 
at a stationary point $\theta$ using $\max_{\overline{\pi}}\Big[\epsilon_{L}(\theta, \overline{\pi}) + \epsilon_{U}(\theta, \overline{\pi})\Big] \leq \epsilon_L + \epsilon_U$ from Assumptions \ref{assum:eps_l} and \ref{assum:eps_u}. 
The upper bound is problem-dependent 
and reflects the magnitude of risk aversion of the chosen coherent risk functional, 
and degree of nonlinearity in the policy arising from the risk envelope $\U$ in the Bellman operator~\eqref{eqn:mcr_value}. 
% This nonlinearity is a consequence of the maximization over the risk envelope $\U$, which is dependent on the policy $\pi_\theta$. 
% As previously mentioned, 
The residual $\epsilon_L$ has a larger upper bound 
when $\xi$ is permitted to be large, e.g., 
smaller fractions of the cost distribution 
are given a high weight. 
The residual $\epsilon_U$ is a measure of the error 
of a first-order approximation 
of the constraints $f_i$ and $g_e$, 
and is larger when the constraints 
are highly nonlinear in $\pi_\theta$. 
Roughly, the magnitude of $\epsilon_L$ is small 
when the saddle points of $\pi_\theta$ 
are close to the saddle points 
of the optimal policy $\pi_*$, 
and the residual $\epsilon_U$ is small 
if the constraints are additionally 
linear in $\pi_\theta$.  

The expected value objective is equivalently formulated as the MCR objective for $\cvar$ with $\alpha = 1$, and its risk envelope is a singleton of $\U_{\cvar_1}(P_\theta) = \{\xi:\xi = 1\}$. 
As a result, $\xi_{\theta,s} = 1$ for all $\theta, s$ and the expected value Bellman operator is linear in the policy. 
Then by definition, the residuals $\epsilon_{U} = \epsilon_{L} = 0$ for expected value, and 
Lemma~\ref{lem:gradient_domination} 
is a generalization of the gradient domination lemma 
for expected return, i.e., 
(Lemma 4.1 in \citet{agarwal2019theory}).
For $\text{CVaR}_\alpha$, 
we have $\epsilon_{U} = 0$
as the inequality constraints $0 \leq \xi \leq \frac{1}{\alpha}$ are independent of $\pi_\theta$, 
and $\epsilon_L \propto \frac{1}{\alpha}$. 
As a result, the upper bound on the optimality gap 
is larger when $\alpha$ is smaller 
and the objective is more risk sensitive. 

Moreover, Theorem \ref{thm:lower_bound} guarantees
that the bound in Lemma \ref{lem:gradient_domination} 
is in fact tight. We provide a proof sketch, with formal argument in Appendix \ref{sec:appendix_gradient_domination}.

\begin{theorem}%[Lower Bound of Gradient Domination]
\label{THM:LOWER_BOUND}\label{thm:lower_bound}
There exists an MDP $\mathcal{M}$ 
and $\pi_\theta$ such that 
\begin{align*}
    V_\theta(s_0) - V_*(s_0) &= \Big\Vert \frac{d_*^{\xi_\theta}}{d_\theta^{\xi_\theta}}\Big\Vert_\infty \Big(\max_{\overline{\pi} \in \Delta(\A)^{\Ss}} (\overline{\pi} - \pi_\theta)^\top \nabla_\theta V_\theta(s_0) + \epsilon_{L}(\theta, \pi_*) + \epsilon_{U}(\theta, \pi_*)\Big).
\end{align*}
\end{theorem}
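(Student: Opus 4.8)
The plan is to exhibit the tight instance using the two-armed bandit MDP from the motivating example of Section~\ref{sec:motivating} (Figure~\ref{fig:bandit_problem}) equipped with the $\cvara$ risk functional, and to argue that at a suboptimal stationary point every inequality used in the proof of Lemma~\ref{lem:gradient_domination} collapses to an equality. Concretely, I would fix a risk level $\alpha < 0.8$ and take $\pi_\theta$ to be the deterministic policy $\theta = 1$ that always plays Arm~1, whose first-step cost equals $1$ with probability $0.8$ and $0$ with probability $0.2$; the optimal policy $\pi_*$ is $\theta = 0$, which always plays Arm~2 with deterministic cost $0.5$. Since $\cvara(\text{Arm 1}) = 1$ while $\cvara(\text{Arm 2}) = 0.5$, the optimality gap $V_\theta(s_0) - V_*(s_0)$ is a fixed positive constant (up to the discount factor), giving a nonzero left-hand side to match.

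There are three reductions I would verify in turn. First, because for $\alpha < 0.8$ the worst-$\alpha$ fraction of Arm~1's cost distribution is entirely supported on the cost-$1$ outcome, infinitesimal mixing toward Arm~2 does not change $\cvara$, so $\theta = 1$ sits on a flat plateau and $\nabla_\theta V_\theta(s_0) = 0$; hence the gradient term $\max_{\overline{\pi}} (\overline{\pi} - \pi_\theta)^\top \nabla_\theta V_\theta(s_0)$ is exactly zero and contributes nothing. Second, for $\cvara$ the inequality constraints $0 \leq \xi \leq 1/\alpha$ are independent of the policy, so $\epsilon_U(\theta, \pi_*) = 0$ identically, as already noted below Lemma~\ref{lem:gradient_domination}. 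It therefore remains to establish the single scalar identity $V_\theta(s_0) - V_*(s_0) = \Vert d_*^{\xi_\theta}/d_\theta^{\xi_\theta}\Vert_\infty \, \epsilon_L(\theta, \pi_*)$, which I would prove by direct evaluation of the saddle-point quantities.

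To compute $\epsilon_L(\theta, \pi_*)$ in \eqref{eqn:policy_res}, I would solve the $\cvara$ saddle point at the single decision state $s_0$ under both $\pi_\theta$ and $\pi_*$: the dual weight $\xi_{\theta, s_0}$ concentrates uniformly at level $1/\alpha$ on the worst outcomes, while $\lambda_{\theta, s_0}^{P}$ and $\lambda_{*, s_0}^{P}$ are the corresponding $(1-\alpha)$-quantile thresholds, which can be read off in closed form from the two cost distributions. Because the bandit has a single state at which $P_\theta$ and $P_*$ differ, the reweighted visitation distributions $d_\theta^{\xi_\theta}$ and $d_*^{\xi_\theta}$ are supported on a controlled set whose ratio is constant on the support of the summand appearing in $\epsilon_L$, so the H\"older step that introduces $\Vert d_*^{\xi_\theta}/d_\theta^{\xi_\theta}\Vert_\infty$ is tight. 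Tracing back through the performance difference lemma (Appendix Lemma~\ref{lem:performance_difference}), I would then check that the quantity inside the absolute value in \eqref{eqn:policy_res} carries a fixed sign matching the signed advantage term it bounds, so that replacing it by its absolute value loses nothing and $\pi_*$ simultaneously attains the maximizer in the (vanishing) gradient term. Substituting the computed thresholds should make the product $\Vert d_*^{\xi_\theta}/d_\theta^{\xi_\theta}\Vert_\infty \, \epsilon_L(\theta, \pi_*)$ equal the optimality gap, closing the chain of equalities.

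I expect the main obstacle to be precisely this sign-matching in the final step: the definition of $\epsilon_L$ wraps the primal--dual mismatch $(\lambda_{\theta,s_0}^{P} - \lambda_{*,s_0}^{P})\sum_{s'}(P_\theta - P_*)(s'|s_0)\xi_{\theta,s_0}(s')$ in an absolute value, whereas this term enters the performance difference decomposition with a definite sign. I must therefore verify that the two quantile thresholds for $\pi_\theta$ and $\pi_*$ are ordered in exactly the way that makes the absolute value agree with the signed contribution; otherwise the bound would be strict. Establishing this ordering, together with confirming that the infinity norm of the visitation ratio is attained on precisely the support where the residual is nonzero, is the delicate part, while the remaining arithmetic of substituting the $\cvara$ quantiles is routine.
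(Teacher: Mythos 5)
Your high-level strategy is the same as the paper's: pick a CVaR instance where the first-order term and $\epsilon_U$ vanish and show that $\epsilon_L(\theta,\pi_*)$ exactly accounts for the gap. But your specific choice of instance and policy breaks the argument. The paper does \emph{not} use the motivating bandit with a deterministic boundary policy; it uses a slightly different two-arm MDP (Arm 1 deterministically reaches the cost-$1$ state $s_2$, Arm 2 reaches $s_2$ with probability $0.1$ and the cost-$0$ state $s_1$ with probability $0.9$), takes $\alpha=0.5$, and crucially evaluates at the \emph{interior} policy $\theta = 4/9$. At that $\theta$ the induced next-state distribution is exactly $(0.5,0.5)$ on $(s_1,s_2)$, so the CVaR saddle point is $\xi_{\theta,s_0}(s_1)=0$, $\xi_{\theta,s_0}(s_2)=2$, $\lambda^P_\theta=1$, which makes $h_\theta(s_0,a)=0$ for \emph{both} actions (a flat interior stationary point), keeps $d_\theta$ fully supported, makes $\Vert d_*^{\xi_\theta}/d_\theta^{\xi_\theta}\Vert_\infty = 1$, and yields $\epsilon_L = 0.8\gamma = V_\theta(s_0)-V_*(s_0)$ by direct computation.

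The genuine gap in your version is the support mismatch created by taking $\theta=1$. In your bandit, Arm 2's outcome (the $C=0.5$ state) is reached with probability zero under $\pi_\theta$, so $d_\theta(\cdot)$ and $d_\theta^{\xi_\theta}(\cdot)$ vanish there while $\pi_*$ puts all its mass there. Consequently $\xi_{\theta,s_0}$ at that successor is a free variable of the saddle-point problem (the paper itself notes that $\xi_s(s')$ is undetermined when $P_\theta(s'|s)=0$), and either (i) you set it positive, in which case $d_*^{\xi_\theta}$ has mass where $d_\theta^{\xi_\theta}=0$ and $\Vert d_*^{\xi_\theta}/d_\theta^{\xi_\theta}\Vert_\infty=\infty$, or (ii) you set it to zero, in which case a direct evaluation of \eqref{eqn:policy_res} gives $\epsilon_L(\theta,\pi_*)=\gamma\, d_\theta^{\xi_\theta}(s_0)\cdot 0.5\cdot|1-0|$, which does not equal the gap $0.5\gamma$ for any canonical normalization of $d_\theta^{\xi_\theta}(s_0)$. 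Your claim that ``the reweighted visitation distributions are supported on a controlled set whose ratio is constant on the support'' is exactly what fails here. The structural reason the paper's instance works and yours does not is that in the paper's MDP both arms can reach the high-cost state, so an interior $\theta$ can already place the entire worst-$\alpha$ mass on $s_2$ while retaining full support; in your instance the only way to kill the gradient is to go to the boundary, which is where the degeneracy bites. Your closing worry about the sign inside the absolute value in $\epsilon_L$ is legitimate but secondary; the support issue must be fixed first, most simply by adopting an interior stationary policy in an MDP where the two actions' successor sets overlap.
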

\begin{proof}[Proof Sketch.]
We give an informal argument on
the existence of the lower bound 
and the necessity of the residual terms. 
Take the CVaR at level $\alpha$ objective for example, 
and consider a 2-action MDP, 
where one action always returns a set high cost, 
and the other action returns a set low cost most of the time and the high cost the rest. 
It is easy to see that the optimal policy will always choose the latter action. 
However, there is a large fraction of policies in this MDP where, 
even if the policy chooses both actions with reasonable probabilities,
the highest $\alpha$-quantile 
of the return distribution 
may include only the high cost. 
Thus, even though the policy may be arbitrarily suboptimal, it will have a gradient of 0. 
The suboptimality of the policy can be quantified exactly using the residuals terms $\epsilon_{L}(\theta, \pi_*)$ and $\epsilon_{U}(\theta, \pi_*)$. 
\end{proof}

\begin{remark}
Previous studies on the global optimality 
of PG for the expected return 
have taken into account 
two sources of suboptimality:
vanishing gradients \cite{agarwal2019theory} 
and closure under policy improvement, 
which \citet{bhandari2019global} used 
as a condition for why some problems 
with constrained policy classes 
are gradient dominated 
and others are not. 
Our results demonstrate that 
neither of these is sufficient 
to explain suboptimality 
in the case of coherent risk objectives. 
For the latter, it is clear 
in our motivating example 
that the policy class $\theta \in [0, 1]$ 
is closed under policy improvement. 
\end{remark}

\subsection{Convergence Rates}\label{sec:pgd}
We consider the convergence rate 
of projected gradient descent (PGD) 
on the MCR under direct policy parameterization,
% which takes updates of the form 
where updates take the following for:
\begin{equation}\label{eqn:pgd}
    \theta \leftarrow \text{Proj}_{\Delta(\A)^{\Ss}}\Big(\theta - \eta \nabla_\theta V_{\theta}(s_0)\Big),
\end{equation}
where $\text{Proj}_{\Delta(\A)^{\Ss}}$ is a projection 
on the probability simplex $\Delta(\A)^{\Ss}$ 
in the Euclidean norm 
and $\eta$ is the learning rate. 
Using the gradient domination lemma, 
we can provide the following 
iteration complexity bound on PGD,
with proof in Appendix \ref{proof:pgd}:  
\begin{proposition}\label{thm:pgd}\label{THM:PGD}
If $V_\theta$ is $\beta$-smooth in $\theta$, 
then the PGD algorithm in (\ref{eqn:pgd}) 
on $V_\theta(s_0)$ with step size 
$\eta = \frac{1}{\beta}$ satisfies
{%\small
\begin{align*}
    \min_t V_{\theta_t}(s_0) - V_*(s_0) &\leq D\sqrt{\frac{32|\Ss|\beta}{(1-\gamma)T}} + \epsilon_L + \epsilon_U,
\end{align*}
}
where $D = \max_t\Vert d_*^{\xi_{\theta_t}} / d_{\theta_t}^{\xi_{\theta_t}}\Vert_\infty$.
\end{proposition}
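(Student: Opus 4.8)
The plan is to combine the gradient domination inequality of Lemma~\ref{lem:gradient_domination} with the standard analysis of projected gradient descent on a $\beta$-smooth objective over the compact convex set $\Delta(\A)^{\Ss}$. Lemma~\ref{lem:gradient_domination} already reduces the optimality gap at any iterate $\theta_t$ to a first-order stationarity gap of the form $\max_{\overline\pi}(\overline\pi - \pi_{\theta_t})^\top\nabla_\theta V_{\theta_t}(s_0)$, scaled by the density ratio $\Vert d_*^{\xi_{\theta_t}}/d_{\theta_t}^{\xi_{\theta_t}}\Vert_\infty$, plus the residuals $\epsilon_L(\theta_t,\pi_*)+\epsilon_U(\theta_t,\pi_*)$, which Assumptions~\ref{assum:eps_l} and~\ref{assum:eps_u} bound uniformly by $\epsilon_L+\epsilon_U$. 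It therefore suffices to show that PGD drives the stationarity gap of its best iterate below $\sqrt{32|\Ss|\beta/((1-\gamma)T)}$, and then substitute into Lemma~\ref{lem:gradient_domination} at that iterate with $D=\max_t\Vert d_*^{\xi_{\theta_t}}/d_{\theta_t}^{\xi_{\theta_t}}\Vert_\infty$.

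First I would run the usual descent-plus-telescoping argument. Writing $G_t := \beta(\theta_t - \theta_{t+1})$ for the gradient mapping induced by the update \eqref{eqn:pgd} with $\eta=1/\beta$, the $\beta$-smoothness of $V_\theta$ together with the projection step yields the descent inequality $V_{\theta_{t+1}}(s_0)\leq V_{\theta_t}(s_0)-\tfrac{1}{2\beta}\Vert G_t\Vert_2^2$. Summing over $t=0,\dots,T-1$ and telescoping bounds $\tfrac{1}{2\beta}\sum_t\Vert G_t\Vert_2^2$ by $V_{\theta_0}(s_0)-V_*(s_0)$. Since the cost lies in $[0,1]$ and $\rho$ is coherent (monotone, translation equivariant, positively homogeneous), a straightforward induction on the Bellman recursion \eqref{eqn:mcr_value} shows $V_\theta(s_0)\in[0,1/(1-\gamma)]$, so $V_{\theta_0}(s_0)-V_*(s_0)\leq 1/(1-\gamma)$. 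Hence $\min_t\Vert G_t\Vert_2^2\leq \tfrac{1}{T}\sum_t\Vert G_t\Vert_2^2\leq \tfrac{2\beta}{(1-\gamma)T}$.

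Next I would relate the gradient mapping to the stationarity gap appearing in Lemma~\ref{lem:gradient_domination}. The variational inequality characterizing the Euclidean projection onto $\Delta(\A)^{\Ss}$ gives, for every feasible $\overline\pi$, $\nabla_\theta V_{\theta_t}(s_0)^\top(\theta_{t+1}-\overline\pi)\leq G_t^\top(\theta_{t+1}-\overline\pi)$; combining this with $\theta_t-\theta_{t+1}=\eta G_t$ and Cauchy--Schwarz bounds the gap by a constant multiple of $\Vert G_t\Vert_2$ times the Euclidean diameter of $\Delta(\A)^{\Ss}$. Because $\Delta(\A)^{\Ss}$ is a product of $|\Ss|$ simplices, each of diameter $\sqrt{2}$, its diameter is $\sqrt{2|\Ss|}$, which is precisely where the $|\Ss|$ dependence enters. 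Tracking this diameter together with the descent-lemma constant and the projection inequality converts the $\min_t\Vert G_t\Vert_2$ bound from the previous step into $\min_t\max_{\overline\pi}(\overline\pi-\pi_{\theta_t})^\top\nabla_\theta V_{\theta_t}(s_0)\leq\sqrt{32|\Ss|\beta/((1-\gamma)T)}$. Plugging this and the uniform residual bounds into Lemma~\ref{lem:gradient_domination} at the best iterate then yields the claim.

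I expect the main obstacle to be the second step: cleanly translating the Euclidean gradient mapping $G_t$ into the policy-space first-order gap of Lemma~\ref{lem:gradient_domination} while keeping the direction/sign consistent with a \emph{minimization} problem, and pinning down the exact numerical constant ($32$ arises from combining the squared diameter $2|\Ss|$, the descent-lemma factor, and the projection inequality). A secondary point to handle carefully is the bookkeeping of the $D$ factor from Lemma~\ref{lem:gradient_domination}, which formally multiplies the residuals as well (note $D\geq 1$ since both $d_*^{\xi_\theta}$ and $d_\theta^{\xi_\theta}$ are distributions); the stated bound absorbs this into the residual terms. The remaining ingredients—the descent inequality, the value range $V_\theta(s_0)\in[0,1/(1-\gamma)]$, and the uniform bounds $\epsilon_L,\epsilon_U$—are either standard or supplied directly by Assumptions~\ref{assum:eps_l} and~\ref{assum:eps_u}.
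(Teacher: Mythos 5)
Your proposal follows essentially the same route as the paper's proof: bound the norm of the projected-gradient mapping via $\beta$-smoothness and telescoping (the paper cites this as a known result rather than re-deriving it), convert that bound into the first-order stationarity gap $\max_{\overline\pi}(\overline\pi-\pi_{\theta_t})^\top\nabla_\theta V_{\theta_t}(s_0)$ via the projection variational inequality (imported in the paper as Proposition B.1 of Agarwal et al.), pick up the $\sqrt{|\Ss|}$ factor from the diameter of the product of simplices, and substitute into Lemma~\ref{lem:gradient_domination} together with $V_{\theta_0}(s_0)-V_*(s_0)\le 1/(1-\gamma)$ and the uniform residual bounds. Your side remark that the factor $D\ge 1$ formally multiplies the residuals as well is apt --- the paper's own final step also silently drops $D$ from the $\epsilon_L+\epsilon_U$ terms.
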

% \kamyar{does not seem to be necessary to be stated. Instead we can talk more about the results itself.}
% \aud{I originally took this sentence from Alekh's paper, since I was also wondering at the time why the result wasn't given for the last value $V_{\theta_T}$. 
% We can take it out if you don't think it's necessary. }
This guarantee is provided 
for the best policy over $T$ rounds,
as is standard in the nonconvex optimization literature. 
This lemma demonstrates that gradient descent converges in $O(1/\sqrt{T})$ iterations, but is upper bounded by the same residuals present in the gradient domination lemma (Lemma \ref{LEM:GRADIENT_DOMINATION}). Even if $T \rightarrow \infty$ and the first term vanishes, it is possible for the policy to be suboptimal because the MCR objective is not gradient dominated. This phenomenon is evident in the motivating example of Section \ref{sec:motivating} and the optimization landscape in Figure \ref{fig:bandit_landscape}. 
However, if $\epsilon_U$ and $\epsilon_L$ are additionally small, which occurs when $\theta$ is near $\pi_*$ or the MCR Bellman operator is (close to) linear in the policy, Proposition \ref{THM:PGD} demonstrates that $\theta$ is near-optimal as well. 
Finally, as shown previously, when the Bellman operator is linear in the policy (as is the case for expected value), the residuals vanish and we recover the convergence guarantees for PGD from \citet{agarwal2019theory} for expected value. 

% The optimality gap shrinks as $O(1/\sqrt{T})$,
% as long as $\epsilon_{L}$
% and $\epsilon_{U}$ are relatively small.

Coherent risk functionals are not, 
in general, $\beta$-smooth.
CVaR, for example, is considered a non-smooth 
optimization problem \citep{alexander2006minimizing}, 
and in optimizing this objective,
the authors in \citet{tamar2014optimizing} 
make the assumptions that $\frac{\partial \rho_\theta(Z)}{\partial \theta}$ 
and $\frac{\partial \lambda_\theta(\omega)}{\partial \theta}$ 
exist and are bounded. 
Under similar assumptions, 
we can guarantee $\beta$-smoothness of $V_\theta$ and 
applicability of the bound in Proposition \ref{thm:pgd} 
(proof in Appendix \ref{lem:smoothness}):
\begin{corollary}\label{cor:smoothness}
If the gradients of the primal and dual solutions 
of the MCR value function (\ref{eqn:mcr_value}) exist 
and are bounded everywhere, i.e.,
%\small
$$\Big\Vert \frac{d\xi_\theta}{d\theta}\Big\Vert_\infty \leq c\quad \text{and}\quad \Big\Vert \frac{d\lambda_\theta}{d\theta}\Big\Vert_\infty \leq c'$$ 
%\normalsize
and the second derivative of the constraints exist 
and are bounded, i.e., for all 
$e \in \mathcal{E}$ and $i \in \mathcal{I}$
%\small
$$\Big\Vert \frac{d^2g_e(\xi, P)}{dP^2}  \Big\Vert \leq M' \quad\text{and}\quad \Big\Vert \frac{d^2f_i(\xi, P)}{dP^2}  \Big\Vert \leq M'$$
\normalsize
then there exists $\beta$ such that 
the value $V_\theta$ is $\beta$-smooth 
for starting state $s_0$, 
and Proposition \ref{thm:pgd} holds for $V_\theta$. 
\end{corollary}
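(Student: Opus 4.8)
The plan is to establish $\beta$-smoothness by producing a uniform bound on the operator norm of the Hessian $\nabla_\theta^2 V_\theta(s_0)$. Since for a twice-differentiable function $\beta$-smoothness (Lipschitz gradient with constant $\beta$) is equivalent to $\sup_\theta \Vert \nabla_\theta^2 V_\theta(s_0)\Vert_{op}\le\beta$ over the compact domain $\Delta(\A)^{\Ss}$, it suffices to exhibit such a $\beta$, after which the hypothesis of Proposition \ref{thm:pgd} is met and its conclusion holds. The stated assumptions, namely existence and boundedness of $d\xi_\theta/d\theta$, $d\lambda_\theta/d\theta$, and of the second derivatives $d^2 g_e/dP^2, d^2 f_i/dP^2$, are precisely what is needed to guarantee that $V_\theta$ is twice differentiable and that the Hessian's non-recursive terms are controlled.

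First I would work with the saddle-point form of the Bellman recursion. Under the direct parameterization $\pi_\theta(a|s)=\theta_{s,a}$, the induced kernel $P_\theta(s'|s)=\sum_a \theta_{s,a}P(s'|s,a)$ is affine in $\theta$, so $\partial P_\theta/\partial\theta$ is constant and $\partial^2 P_\theta/\partial\theta^2 = 0$, which lets me differentiate directly and avoid the blow-up of $\nabla_\theta\log\pi_\theta$ near the simplex boundary. At a saddle point, primal feasibility and complementary slackness ($g_e=0$, $\lambda^{\mathcal{I}}(i)f_i=0$, $\E_{P_\theta}[\xi_{\theta,s}]=1$) collapse the Lagrangian \eqref{eqn:mcr_value_lagrang} to $V_\theta(s) = C(s) + \gamma\sum_{s'}P_\theta(s'|s)\xi_{\theta,s}(s')V_\theta(s')$. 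The key structural fact I would exploit is that the reweighted kernel $P_\theta^{\xi}(s'|s)=P_\theta(s'|s)\xi_{\theta,s}(s')$ is itself a probability distribution over $s'$, since the risk-envelope constraint forces $\sum_{s'}P_\theta(s'|s)\xi_{\theta,s}(s')=1$ and $\xi\ge0$.

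Next I would differentiate the gradient of Theorem \ref{thm:mcrp_gradient} once more (equivalently, differentiate the recursion a second time). The first differentiation uses the envelope theorem, so $\nabla_\theta V_\theta$ depends on $\xi_{\theta,s},\lambda_{\theta,s}$ and $df_i/dP, dg_e/dP$ but not on their $\theta$-derivatives. The second differentiation no longer enjoys the envelope simplification, so the Hessian picks up terms involving $d\xi_\theta/d\theta, d\lambda_\theta/d\theta$ (bounded by $c,c'$) and $d^2 f_i/dP^2, d^2 g_e/dP^2$ (bounded by $M'$), together with the recursive term $\gamma\sum_{s'}P_\theta^\xi(s'|s)\nabla_\theta^2 V_\theta(s')$. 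Collecting all non-recursive contributions into a single finite constant $B$, which is finite by the assumed bounds, by $M$ from Assumption \ref{assum:risk_envelope}, by boundedness of $V_\theta\in[0,1/(1-\gamma)]$ (costs lie in $[0,1]$), and by boundedness of the multipliers $\lambda_{\theta,s}$ (from the bounded closed dual set together with the Slater point $\overline\xi$), I obtain the scalar recursion $H := \sup_s\Vert\nabla_\theta^2 V_\theta(s)\Vert_{op} \le B + \gamma H$, where the contraction factor is exactly $\gamma$ precisely because $P_\theta^\xi(\cdot|s)$ is a probability distribution. Solving yields $\beta := B/(1-\gamma)$, and this bound transfers to the starting state $s_0$.

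I expect the main obstacle to be the careful bookkeeping of this second-derivative recursion: one must verify that every non-recursive term is genuinely bounded, in particular the cross terms multiplying $d\xi_\theta/d\theta$ or $d\lambda_\theta/d\theta$ against first derivatives of the constraints, and the terms contracting $d^2 f_i/dP^2$ against two copies of the constant $\partial P_\theta/\partial\theta$, and that the multipliers $\lambda_{\theta,s}$ are uniformly bounded rather than merely differentiable. Once the recursion is shown to have the form \emph{bounded constant plus a $\gamma$-discounted recursion on a proper kernel}, the contraction argument and geometric summation are routine, $\beta$-smoothness at $s_0$ follows, and the conclusion of Proposition \ref{thm:pgd} applies.
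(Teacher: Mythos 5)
Your proposal is correct and follows essentially the same route as the paper's proof (Lemma \ref{lem:smoothness}): differentiate the Bellman/Lagrangian recursion twice, using the envelope theorem for the first derivative, bound the non-recursive terms via the assumed bounds on $d\xi_\theta/d\theta$, $d\lambda_\theta/d\theta$ and the second derivatives of the constraints, and exploit that the $\xi$-reweighted kernel is a proper probability kernel so the recursion contracts with factor $\gamma$ --- the paper merely phrases your fixed-point inequality $H \le B + \gamma H$ as the power-series bound on $(\mathbf{I}-\gamma\widetilde{P}_\xi(\alpha))^{-1}$ along a directional slice $\theta+\alpha u$. The one bookkeeping item you leave implicit is that the cross terms in the second derivative involve $\nabla_\theta V_\theta(s')$, so a uniform bound on the first derivative must be established as an intermediate step (as the paper does) before the constant $B$ is truly finite.
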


\section{Gradient Estimation}\label{SEC:GRADIENT_ESTIMATION} \label{sec:gradient_estimation} 

In our analysis of the global optimality of PG for MCR, 
we assumed access to the gradient~\eqref{eqn:mcr_gradient}.
In practice however, the gradient is difficult to estimate 
because it is calculated with respect 
to an expectation over reweighted transitions $P_\theta^{\xi_\theta}$. 
\citet{tamar2015policy} propose 
an actor-critic method for optimizing the MCR,
but require the ability to sample from an MDP with reweighted transitions $P_\theta^{\xi_\theta}$,
which is often unavailable in practice. 
% \kamyar{a privileged access} access to a privileged access to a simulator 
% in order to estimate the gradient.
% Specifically, they require the ability 
% to sample from $P_\theta^{\xi_\theta}$,
% which may not be available in practice. 
% Without access to such a simulator, 
Absent such a resource, one naive method 
for optimizing the MCR 
is to draw samples from $d_\theta$ 
and multiply each sample 
by its importance weight 
$\prod_{j=1}^{t-1}\xi_{\theta, s_{j-1}}(s_{j})$, 
shown in Algorithm \ref{algo:is_gradient}. 

\begin{algorithm}
\SetAlgoLined
\KwIn{}
\For{episode $k=1...K$}{
Generate $N$ trajectories $\tau_k = \{s_0^{(n)}, a_0^{(n)}, c_0^{(n)}, ..., s_T^{(n)}, a_T^{(n)}, c_T^{(n)}\}_{n = 1}^N$\; 
Solve the maximization problem in \eqref{eqn:mcr_value_lagrang} to obtain $\widehat\xi_k, \widehat\lambda_k^P, \widehat\lambda_k^{\mathcal E}, \widehat\lambda_k^{\mathcal I}$ for all $(s, s')$\;
Compute $\widehat h_k(s,a)$ in \eqref{eqn:mcr_h_theta}\;
Compute $\nabla_\theta V_\theta = \sum_{n=1}^N \sum_{t=0}^T \gamma^t \Big(\prod_{j=1}^{t-1}\widehat\xi_{k, s_{j-1}}(s_{j})\Big)\widehat{h}_k(s_t,a_t)$\;
Update $\theta_{k+1} \leftarrow \theta_k - \eta \nabla_\theta V_\theta$\;
}
\caption{Importance Sampling Gradient Estimation}\label{algo:is_gradient}
\end{algorithm}
However, as demonstrated by \citet{liu2018breaking}, importance sampling suffers from variance 
that is exponential in the time horizon. 
This especially concerning 
for smaller CVaR levels $\alpha$, 
as the maximum per-step importance weight 
is $\frac{1}{\alpha}$, 
which can cause the magnitude of the gradient 
to become intractably large in longer horizons. 
Instead, we propose a practical method 
for optimizing the MCR that leverages 
recent advances in off-policy 
evaluation and optimization.
Moreover, we empirically demonstrate 
its improved stability and efficacy.

\subsection{State Distribution Reweighting}
Note that, using the stationary state distribution, 
we can write the gradient in \eqref{thm:mcrp_gradient} 
as 
\begin{align}
    \nabla_\theta V_\theta &= \sum_{s}d_{\theta}^{\xi_\theta}(s) \sum_a \pi_\theta(a|s) \nabla_\theta \log \pi_\theta(a|s) h_\theta(s,a) \nonumber \\
    &= \sum_{s : d_\theta(s) > 0}d_{\theta}(s)w_{\xi_\theta}(s)\sum_a \pi_\theta(a|s) \nabla_\theta \log \pi_\theta(a|s) h_\theta(s,a), \label{eqn:ssd_grad} 
\end{align}
where $w_{\xi_\theta}(s) := d_\theta^{\xi_\theta}(s) / d_\theta(s)$, 
which is well-defined because 
$d_\theta^{\xi_\theta}$ is always 0 
whenever $d_\theta$ is 0 by definition. 
If we can estimate the ratio $w_{\xi_\theta}$, 
then we can calculate the gradient 
by sampling from the original 
stationary state distribution $d_\theta$. 
We now state a theorem that can be used 
to derive the state distribution 
correction factor $w_{\xi_\theta}$, 
inspired by Theorem 4 of \citet{liu2018breaking}. 

\begin{theorem}\label{thm:correction}\label{THM:CORRECTION}
For any $\gamma \in (0, 1)$ 
and reweighting $\{\xi_s\}_{s \in \Ss}$ where $\xi_s:\Ss\rightarrow\R$, define  
\begin{align}
    L(w,f)&:= \gamma \E_{s,s' \sim d_\theta}\Big[\Delta(w; s,a,s')f(s') \Big] + (1-\gamma)\E_{s \sim d_0}\Big[\Big(1-w(s)\Big)f(s)\Big], \label{eqn:correction}
\end{align}
where $\Delta(w; s,a,s') := w(s)\xi_{s}(s') - w(s')$. 
Then $w(s)$ equals $w_{\xi}(s):= d^{\xi}_\theta(s)/d_\theta(s)$ 
if and only if $L(w,f) = 0$ 
for any measurable test function $f$. 
\end{theorem}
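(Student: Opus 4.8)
The plan is to show that $L(w,\cdot)$ vanishes identically if and only if $w$ solves the reweighted Bellman flow (balance) equation that $w_\xi$ is already known to satisfy, and then to invoke uniqueness of the solution of that linear fixed-point system. The two structural facts I would rely on are the stationarity equations for the discounted visitation distributions, namely $d_\theta(s') = (1-\gamma)d_0(s') + \gamma\sum_s d_\theta(s)P_\theta(s'|s)$ and its reweighted counterpart $d_\theta^{\xi}(s') = (1-\gamma)d_0(s') + \gamma\sum_s d_\theta^{\xi}(s)P_\theta^{\xi}(s'|s)$. Both follow directly by unrolling the definition $d_\theta(s)=(1-\gamma)\sum_{t}\gamma^{t}P_\theta(s_t=s|s_0)$, splitting off the $t=0$ term, and re-indexing the remaining sum by one transition step.

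First I would expand the two expectations in $L(w,f)$ into explicit sums, reading $\E_{s,s'\sim d_\theta}$ as $s\sim d_\theta$ followed by $s'\sim P_\theta(\cdot|s)$ (the dummy action $a$ never enters $\Delta$). This writes $L(w,f)$ as $\gamma\sum_{s,s'}d_\theta(s)P_\theta(s'|s)\bigl(w(s)\xi_s(s') - w(s')\bigr)f(s') + (1-\gamma)\sum_s d_0(s)\bigl(1-w(s)\bigr)f(s)$. The one term whose free variable is not yet aligned is $-\gamma\sum_{s,s'}d_\theta(s)P_\theta(s'|s)w(s')f(s')$; there I would use the flow equation for $d_\theta$ to replace $\gamma\sum_s d_\theta(s)P_\theta(s'|s)$ by $d_\theta(s') - (1-\gamma)d_0(s')$. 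After this substitution the $(1-\gamma)d_0$ pieces carrying $w$ cancel against the corresponding piece of the second expectation, and collecting the coefficient of each $f(s')$ yields $L(w,f) = \sum_{s'} f(s')\,g(s')$ with $g(s') = (1-\gamma)d_0(s') + \gamma\sum_s d_\theta(s)P_\theta(s'|s)w(s)\xi_s(s') - d_\theta(s')w(s')$.

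Because $f$ ranges over all test functions on $\Ss$, the condition $L(w,f)=0$ for every $f$ is equivalent to $g(s')=0$ for every $s'$. Writing $u(s):=d_\theta(s)w(s)$ and recalling $P_\theta^{\xi}(s'|s)=P_\theta(s'|s)\xi_s(s')$, the condition $g\equiv 0$ reads $u(s') = (1-\gamma)d_0(s') + \gamma\sum_s P_\theta^{\xi}(s'|s)u(s)$, i.e.\ $u = (1-\gamma)d_0 + \gamma (P_\theta^{\xi})^\top u$. For the forward direction I would simply verify that $u = d_\theta^{\xi}$ (equivalently $w = w_\xi$) satisfies this identity, since it is exactly the reweighted flow equation recorded above; hence $w_\xi$ is a zero of $L$.

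For the converse I would argue that this fixed point is unique. Since each $\xi_s$ is a feasible reweighting from the risk envelope, we have $\xi_s\ge 0$ and $\E_{P_\theta}[\xi_s]=\sum_{s'}P_\theta(s'|s)\xi_s(s')=1$, so $P_\theta^{\xi}$ is row-stochastic and its spectral radius (and that of its transpose) equals $1$; consequently $I-\gamma(P_\theta^{\xi})^\top$ is invertible for $\gamma\in(0,1)$, the Neumann series $\sum_{k\ge0}\gamma^{k}\bigl((P_\theta^{\xi})^\top\bigr)^{k}$ converges, and $u$ is determined uniquely. This fixes $w=u/d_\theta$ on the support $\{s:d_\theta(s)>0\}$, while off the support $w$ is immaterial because $d_\theta^{\xi}$ vanishes there too. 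I expect this uniqueness step to be the main obstacle: it is where $\gamma<1$ is essential, and it requires care about the admissible class of $\xi_s$ — to obtain the statement for a genuinely arbitrary $\xi_s:\Ss\to\R$ as written, the row-stochastic argument would have to be replaced by a direct invertibility estimate bounding $\gamma\,\|(P_\theta^{\xi})^\top\|$, together with the handling of states where $d_\theta(s)=0$.
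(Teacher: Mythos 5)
Your proof is correct and follows essentially the same route as the paper's: both rest on the one-step balance (flow) equations for $d_\theta$ and $d_\theta^{\xi}$, and on testing $L(w,\cdot)$ against arbitrary $f$ to reduce $L(w,f)=0$ to the linear fixed-point equation $u=(1-\gamma)d_0+\gamma (P_\theta^{\xi})^{\top}u$ for $u=d_\theta\, w$. Where you go beyond the paper is the ``only if'' direction: the paper shows that $L(w,f)=0$ for all $f$ is equivalent to this balance relation and that $w_\xi$ satisfies it, but then concludes $w=w_\xi$ without explicitly arguing that the solution is unique. Your Neumann-series/invertibility argument for $I-\gamma(P_\theta^{\xi})^{\top}$ supplies exactly that missing step, and your observation that it relies on $\xi_s\ge 0$ and $\sum_{s'}P_\theta(s'|s)\xi_s(s')=1$ (i.e.\ $\xi$ drawn from the risk envelope, rather than an arbitrary $\xi_s:\Ss\to\R$ as the theorem is literally stated) is a legitimate caveat: for a general real-valued reweighting the operator $I-\gamma(P_\theta^{\xi})^{\top}$ can be singular and uniqueness can fail. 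Your handling of states with $d_\theta(s)=0$ is likewise consistent with the paper's remark that $d_\theta^{\xi}$ vanishes wherever $d_\theta$ does.
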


This theorem suggests a method 
% by which to
for estimating the MCR gradient 
using samples collected from $d_\theta$,
by first solving the maximization in~\eqref{eqn:mcr_value} to obtaining $\xi_\theta$,
% \kamyar{So, we fist compute the value using 4.} \aud{We need to calculate $\xi_\theta$ in order to get the value in (4). This is done in practice by solving the maximization problem for all $s$ first, or updating some function approximation of $\xi$.  }\kamyar{we need to first compute $h$, right?, then we get $\xi_\theta$} 
then using it to solve~\eqref{eqn:correction} to obtain
the state distribution correction $w_{\xi_\theta}$. 
The gradient can then be calculated 
using a sample-based estimate of~\eqref{eqn:ssd_grad}. 
% \kamyar{My point was on RKHS. We do not need that sentence. But if you want to keep the citation and RKHS, you should say in high dimensional settings. Not in practice.}\aud{I changed to can be. I'm hesitant to say we used $f(s) = 1$ for all $s$, for example. }
% \kamyar{not necessary. In practical setting of 4 state 4 action setting, do I need RKHS?} 
In high-dimensional settings, an RKHS kernel can be used for $f$ \citep{liu2018breaking}. 

Now that we have a method for estimating 
the the policy gradient of the MCR, 
we can use it with any PG algorithm. 
In Algorithm \ref{algo:ssd},
we propose an actor-critic algorithm 
for optimizing the MCR. 
%inspired by \citet{tamar2015policy}. 
In episode $k$, we draw trajectories 
using policy $\pi_{\theta_k}$ 
and use them to calculate 
the solutions $(\widehat{\xi}_k, \widehat{\lambda}_k)$ 
to the min-max Lagrangian problem in~\eqref{eqn:mcr_value_lagrang}. 
We then derive the state distribution corrections $\widehat{w}_{k}$ from~\eqref{eqn:correction}
using $\widehat{\xi}_k$
and calculate the gradient 
according to Theorem \ref{eqn:mcr_gradient}. 
The state distribution corrections $w$ 
and Lagrangian variables $\xi, \lambda$ 
can also be learned using neural networks
(see discussion in Appendix \ref{sec:ssd_function}).  

\begin{algorithm}%[H]
\SetAlgoLined
\KwIn{Risk envelope $\U$, learning rate $\eta$, MDP $\mathcal{M}$. }
\For{episode $k=1...K$ }{
Generate $N$ trajectories $\tau_k = \{s_0^{(n)}, a_0^{(n)}, c_0^{(n)}, ...\}_{n = 1}^N$ using $\pi_{\theta_k}$ in $\mathcal{M}$\; 
Pad $\tau_k$ if necessary\;
Solve the maximization problem in Eq. \eqref{eqn:mcr_value} to obtain $\widehat\xi_k, \widehat\lambda_k^P, \widehat\lambda_k^{\mathcal E}, \widehat\lambda_k^{\mathcal I}$ for all $(s, s')$\;
Solve the optimization problem in Eq. \eqref{eqn:correction} to obtain $\widehat{w}_k$ for all $s$\;
Compute $\widehat h_k(s,a)$ in \eqref{eqn:mcr_h_theta}\;
Compute $\nabla_\theta V_\theta = \sum_{a,s \sim \tau_k} \widehat{w}_k(s) \nabla_\theta \log \pi_{\theta_k}(a|s)\widehat{h}_k(s,a)$\;
Update $\theta_{k+1} \leftarrow \theta_k - \eta \nabla_\theta V_\theta$\;
}
\caption{State Distribution Correction Actor-Critic}\label{algo:ssd}
\end{algorithm}
% need to demonstrate variance reduction ?

\begin{figure*}
    \centering
    \begin{subfigure}[b]{0.64\textwidth}
    \includegraphics[width=1\textwidth]{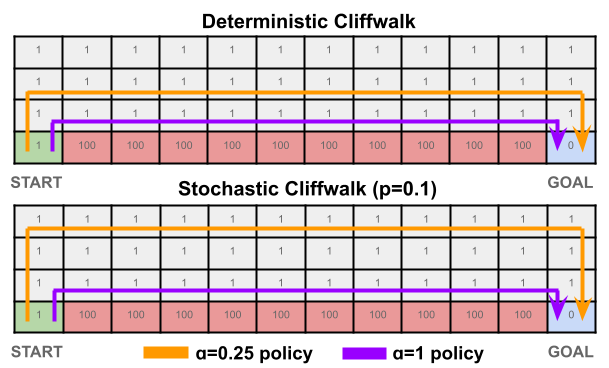}
    \caption{ } 
    \label{fig:cliff_path}
    \end{subfigure} 
    \hfill
    \begin{subfigure}[b]{0.33\textwidth}
    \includegraphics[width=1\textwidth]{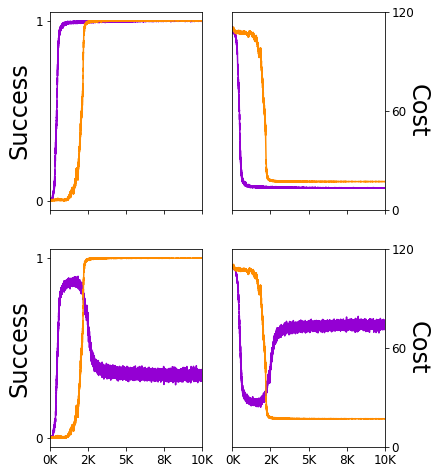}
    \caption{} 
    \label{fig:cliff_cost}
    \end{subfigure} 
    \caption{(a) $\cvara$ policies for $\alpha = 0.25$ and $\alpha=1$ learned using Algorithm \ref{algo:ssd}. (b) Average cost and success (from 1000 trajectories) over training. While the learned policy for $\alpha = 1$ always takes the shortest path, it fails to consistently reach the goal when the environment is stochastic. }\label{fig:cliffwalk}
\end{figure*}

\subsection{Convergence Result}
We provide a convergence result for Algorithm \ref{algo:ssd} to a stationary point of the MCR objective. First, we state the following assumption for our convergence guarantee: 
%, similar to that in Assumption 2 of \citet{liu2020off}:
\begin{assumption}\label{assum:stationary}
For all state-action pairs $(s,a)$ and $\theta \in \Theta$, suppose that 
\begin{enumerate}
    \item $\Big\Vert \frac{\partial \pi_\theta(a|s)}{\partial\theta}\Big\Vert \leq G$
    \item $V_\theta(s), h_\theta(s,a) \leq C_{max}$
    \item $\sigma^2_w := \E_{d_\theta}[w_{\xi_\theta}(s)^2] < \infty$
    \item the MCR is $L$-Lipschitz and $\beta$-smooth (see Corollary \ref{cor:smoothness}) 
\end{enumerate}
\end{assumption}
% \kamyar{We need to outline setting where these assumtions holds. An example would suffice.}
We have previously discussed the validity of Assumption 4 in Corollary \ref{cor:smoothness}, and Assumption 3 was previously made in \citet{liu2020off}. 
Assumption 1 can be achieved by an appropriate policy parameterization, such as through a linear or differentiable neural network function approximation with softmax output.   
Assumption 2 follows from the boundedness of $V_\theta$, an assumption common in the literature, and boundedness of the Lagrangian solutions $(\xi_\theta, \lambda)$ present in the calculation of $h_\theta$~\eqref{eqn:mcr_h_theta}. Boundedness of $\xi$ is guaranteed by the constraints in $\U$. 
The MCR objective with $\cvara$ satisfies Assumption 2, for example, because  $|\xi_\theta| \leq \frac{1}{\alpha}$ by definition, $\lambda^P_\theta$ is bounded whenever the value is bounded because it is any $(1-\alpha)$ quantile of the distribution of returns, and the remainder of the terms in $h_\theta$ are 0. 
% Similar assumptions are made in the analysis of \citet{liu2020off}. 
% Of these, assumption 1 is standard, and similar assumptions to 2 and 3 are made in \citet{liu2020off}. 
% The boundedness of $\h_\theta$ follows from the boundedness of $V_\theta$ and the solutions to the Lagrangian ~\eqref{eqn:mcr_value_lagrang} formulation of the value function.  
Following this, for Algorithm \ref{algo:ssd},
we can make the following convergence guarantee 
to a stationary point, with proof in Appendix \ref{proof:algo_stationary}. 
\begin{theorem}\label{thm:stationarity}\label{THM:STATIONARITY}
Suppose Algorithm \ref{algo:ssd} at episode $k$ and parameters $\theta_k
$ is provided with critic estimates $\widehat{V}_k$, Lagrangian variable estimates $\widehat{\xi}_k$ and $\widehat{\lambda}_k$, and distribution ratio estimates $\widehat{w}_k$ satisfying $\E_{s \sim d}(w_{\theta_k}(s) - \widehat{w}_k(s))^2 \leq \epsilon_{w,k}^2$ 
%\kamyar{who provide us with this bound, we do provide any bound here, right?} 
and $\E_{s \sim d}(h_{\theta_k}(s,a) - \widehat{h}_{\theta_k}(s,a))^2 \leq \epsilon_{h,k}^2$ for all episodes $k$. Then if Assumption \ref{assum:stationary} is satisfied, 
% \begin{align*}
%     \frac{1}{K}\sum_{k=1}^K&\E\Big[\Big\| \nabla_\theta V_{\theta_k} \Big\|^2  \Big] \leq \frac{2C_{max}}{K} + {\frac{1}{K}\sum_{k=1}^K O\Big((\epsilon_{w,k}^2C_{max}^2 + \epsilon_{h, k}^2(\sigma_w^2 + \epsilon_{w,k}^2))G^2}\Big)
\begin{align*}
    \frac{1}{K}\sum_{k=1}^K \E\Big[\Big\| \nabla_\theta V_{\theta_k} \Big\|^2  \Big] &\leq \frac{2\beta C_{max}}{\sqrt{K}} + \frac{1}{K}\sum_{k=1}^K \bigg( \frac{6}{\sqrt{K}}\Big((\epsilon_{w,k}^2C_{max}^2 + \epsilon_{h, k}^2(\sigma_w^2 + \epsilon_{w,k}^2))G^2\Big) \\
    &\quad\quad+ 2L\Big((\epsilon_{w,k}C_{max} + \epsilon_{h, k}\sqrt{\sigma_w + \epsilon_{w,k}})G\Big)\bigg) 
\end{align*}

% \end{align*}
\end{theorem}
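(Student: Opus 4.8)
The plan is to carry out a standard biased-stochastic-gradient analysis for a nonconvex objective, using the $\beta$-smoothness of $V_\theta$ guaranteed by Assumption \ref{assum:stationary}(4) together with the update rule $\theta_{k+1} = \theta_k - \eta \widehat g_k$, where $\widehat g_k$ is the plug-in gradient computed in the penultimate line of Algorithm \ref{algo:ssd}. First I would invoke the descent inequality
\[
V_{\theta_{k+1}}(s_0) \leq V_{\theta_k}(s_0) - \eta \langle \nabla_\theta V_{\theta_k}, \widehat g_k \rangle + \frac{\beta \eta^2}{2}\big\Vert \widehat g_k \big\Vert^2,
\]
take conditional expectation over the estimation randomness, and decompose $\E[\widehat g_k] = \nabla_\theta V_{\theta_k} + b_k$, where $b_k$ is the bias introduced by substituting $\widehat w_k, \widehat h_k$ for the true $w_{\xi_\theta}, h_\theta$ in the reweighted gradient identity \eqref{eqn:ssd_grad}.

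The crux is to control both $b_k$ and the second moment $\E\Vert \widehat g_k\Vert^2$ in terms of the given error budgets $\epsilon_{w,k}$ and $\epsilon_{h,k}$. Starting from \eqref{eqn:ssd_grad}, I would factor the integrand error as $\widehat w_k \widehat h_k - w_{\xi_\theta} h_\theta = h_\theta(\widehat w_k - w_{\xi_\theta}) + \widehat w_k(\widehat h_k - h_\theta)$, and observe that the score function contributes a bounded factor $G$ after using the identity $\pi_\theta \nabla_\theta \log \pi_\theta = \nabla_\theta \pi_\theta$ and Assumption \ref{assum:stationary}(1). Applying Cauchy--Schwarz term by term, with $|h_\theta| \leq C_{max}$ from Assumption \ref{assum:stationary}(2), $\E_{d_\theta}[w_{\xi_\theta}^2] = \sigma_w^2$ from Assumption \ref{assum:stationary}(3), and the propagated bound $\E_{d_\theta}[\widehat w_k^2] \leq \sigma_w^2 + \epsilon_{w,k}^2$, I expect to obtain $\Vert b_k\Vert \leq G\big(C_{max}\epsilon_{w,k} + \epsilon_{h,k}\sqrt{\sigma_w^2 + \epsilon_{w,k}^2}\big)$ and the companion estimate $\E\Vert \widehat g_k - \nabla_\theta V_{\theta_k}\Vert^2 \leq G^2\big(C_{max}^2\epsilon_{w,k}^2 + \epsilon_{h,k}^2(\sigma_w^2 + \epsilon_{w,k}^2)\big)$, matching the residual groupings in the theorem statement.

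I would then substitute these into the descent inequality. The inner product $-\eta\langle \nabla_\theta V_{\theta_k}, b_k\rangle$ is the genuinely new term relative to unbiased SGD; I bound it by $\eta L\Vert b_k\Vert$ using the $L$-Lipschitz half of Assumption \ref{assum:stationary}(4) (so that $\Vert \nabla_\theta V_{\theta_k}\Vert \leq L$), which produces the $2L(\cdots)$ contribution. Rearranging to isolate $\eta\Vert \nabla_\theta V_{\theta_k}\Vert^2$, splitting $\E\Vert \widehat g_k\Vert^2$ into a gradient-norm part and the second-moment error to absorb the former into the left-hand side for small $\eta$, then summing over $k=1,\dots,K$ and telescoping the $V_{\theta_k} - V_{\theta_{k+1}}$ terms (bounded by $C_{max}$ since the nonnegative discounted costs give $V_\theta \in [0, C_{max}]$) yields the averaged squared-gradient bound. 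Choosing $\eta \sim 1/\sqrt{K}$ balances the telescoped descent term against the $\tfrac{\beta\eta}{2}\E\Vert \widehat g_k\Vert^2$ term, producing the leading $\tfrac{2\beta C_{max}}{\sqrt K}$ rate and the $\tfrac{6}{\sqrt K}$ prefactor on the second-moment residual.

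The main obstacle is exactly the bias: because $\widehat g_k$ is not unbiased for $\nabla_\theta V_{\theta_k}$, the usual SGD cancellation fails and I must carry $\langle \nabla_\theta V_{\theta_k}, b_k\rangle$ through the analysis, which is precisely why $L$-Lipschitzness is assumed on top of smoothness. A secondary delicacy is that the density-ratio estimate $\widehat w_k$ is controlled only through its $L^2$ error $\epsilon_{w,k}$ relative to $w_{\xi_\theta}$, not by a direct bound; I therefore must route through $\E_{d_\theta}[\widehat w_k^2] \leq \sigma_w^2 + \epsilon_{w,k}^2$ so that the $\epsilon_{h,k}$ term is not multiplied by an uncontrolled quantity, which is what yields the $\epsilon_{h,k}^2(\sigma_w^2 + \epsilon_{w,k}^2)$ and $\epsilon_{h,k}\sqrt{\sigma_w^2 + \epsilon_{w,k}^2}$ forms appearing in the bound.
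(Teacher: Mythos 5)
Your proposal is correct and follows essentially the same route as the paper: the paper first proves a generic biased-SGD stationarity lemma (descent inequality from $\beta$-smoothness, the bias inner product controlled by $L$-Lipschitzness, telescoping with $\eta = 1/(\beta\sqrt{K})$), and then bounds the bias and second moment of the plug-in gradient estimator via exactly your decomposition $\widehat{w}\,\widehat{h} - w\,h = (\,\widehat{w}-w)h + \widehat{w}(\widehat{h}-h)$, Cauchy--Schwarz, and the propagated bound $\E_{d_\theta}[\widehat{w}^2]\leq \sigma_w^2+\epsilon_w^2$. The only cosmetic difference is that the paper packages the SGD step as a standalone lemma with an abstract $(\sigma,B)$-accurate oracle rather than carrying the bias inline.
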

Theorem \ref{thm:stationarity} demonstrates 
that when Assumptions \ref{assum:stationary} hold, 
the average gradient magnitude over the $K$ episodes is upper bounded by three terms. 
The first two terms vanish as $K \rightarrow\infty$, but the third does not. 
However, since the third term is in terms of $h$ and $w$ estimation errors, 
given estimators $\widehat{h}$ and $\widehat{w}$ 
with small average error $\epsilon_{h,k}, \epsilon_{w,k}$ across the episodes, 
the third term vanishes as $K \rightarrow \infty$ as well. 
Under such conditions, Theorem \ref{thm:stationarity} shows that 
Algorithm \ref{algo:ssd} will converge 
to an approximate stationary point. 
Estimators for $\widehat{h}$ and $\widehat{w}$ 
with small average error can be achieved 
with a reasonable online critic 
and $w, \xi, \lambda$ learning algorithms.
For $\cvara$, $C_{max} \propto \frac{1}{\alpha}$ due to the presence of $\xi$ in the calculation of $h_\theta$~\eqref{eqn:mcr_h_theta}, 
implying that with higher risk sensitivity, 
convergence to stationary points occurs more slowly.

\section{Experiments}\label{sec:experiments}
In this section, we examine 
the empirical efficacy of Algorithm \ref{algo:ssd} 
with two primary questions in mind:
\begin{itemize}
    \item Does the state distribution correction 
    reduce the variance of PG on MCR in MDPs? 
    \item Does increasing the risk envelope set $\U$ lead to more risk-sensitive behavior? 
\end{itemize}
\paragraph{Baseline and Implementation. }
To answer these questions, 
we optimize the MCR with the $\cvara$ objective 
at different levels $\alpha$, 
where smaller $\alpha$ corresponds 
to higher risk aversion. 
To optimize this objective,
we run Algorithm \ref{algo:ssd}
and compare its performance against 
importance sampling baselines
(Algorithm \ref{algo:is_gradient}). 
For both, we use a softmax tabular policy 
and tabular critic. 
Full implementation details 
and a function approximation version of this algorithm
are provided in Appendix \ref{appendix:hyperparams}. 

\paragraph{Simulation Domain. }
We compare the algorithms on the classic $4\times12$ tabular Cliffwalk environment, 
where the agent needs to travel 
from a start to a goal state 
while incurring as little cost as possible. 
Each action incurs 1 cost, 
but the shortest path lies next to a cliff 
and entering the cliff corresponds 
to a cost of 100.
% which incurs cost of 100. 
Because the classic Cliffwalk environment is deterministic, 
we also compare our algorithms 
on a stochastic version,
where the row of cells above the cliff is ``slippery", 
and induce a transition from these cells 
into the cliff with probability $p$. 
Although our algorithm is intended 
for an infinite-horizon MDP, 
the horizon is fixed to a maximum of 500 timesteps 
for computational feasibility. 

\paragraph{Results. } 
Figure \ref{fig:cliff_path} displays the learned policies for different CVaR levels $\alpha$  
in the deterministic and stochastic environments. 
Interestingly, the $\alpha = 0.25$ policy 
finds a conservative policy 
and takes a longer path 
in both deterministic and stochastic Cliffwalk, 
even though the optimal policy 
in the deterministic environment 
is to take the shortest path 
for all $\alpha$ levels.
In the stochastic cliffwalk, 
the $\alpha = 1$ policy 
learns to take the greedy path 
but risks falling into the cliff, 
resulting in failure to consistently succeed at the task. 
We display the learned state distribution correction 
and its discussion in Appendix~\ref{appendix:hyperparams}. 
In both environments, the gradient of 
the naive importance sampling method diverged, 
and learning a policy was not possible in either case. 
In our experiments, a policy in only 
a truncated Cliffwalk of size $2\times4$ 
was learnable via importance sampling.

\section{Discussion}\label{sec:discussion}
In this paper, we answered several open questions 
concerning PG for coherent risk functionals. 
First, we demonstrated that MCR objectives
are not in general gradient dominated, 
giving tight upper bounds 
on the suboptimality 
of the learned policy. 
Second, we proposed an algorithm 
based on stationary state distribution reweighting 
to relax stringent assumptions 
of previous work and tractably estimate the MCR gradient. 
We demonstrated that our algorithm 
is guaranteed to converge to stationary points, 
and demonstrated the stability and efficacy 
of our algorithm on the CliffWalk environment, 
in which importance sampling cannot learn policies 
due to exploding gradient magnitudes.    
% investigated the global optimality of  theoretical presented policy gradient algorithms for general coherent risk measures under the Markov coherent risk objective under both direct policy parameterization and under function approximation, and provided analysis on the global convergence
% of these algorithms. 

While coherent risk objectives 
are often used in practice, 
our results highlight the challenges 
of obtaining global optimality guarantees 
for the final policy.
One important direction of future work 
lies in improving these convergence results 
through better characterization of the residual terms 
over the learning process, 
and expanding it to other PG algorithms and settings. 
Further, the lower bound in Theorem \ref{THM:LOWER_BOUND} indicates that poorly conditioned 
starting states or initial policies 
also cause convergence to suboptimal policies.
In the future, we will investigate 
how different exploration strategies, 
such using a combination 
of expected value and coherent risk, 
may help to mitigate this issue 
both in theory and practice.

\section*{Acknowledgements}
Liu Leqi is generously supported by an Open Philanthropy AI Fellowship. Zachary Lipton thanks Amazon AI, Salesforce Research, the Block Center, the PwC Center, Abridge, UPMC, the NSF, DARPA, and SEI for supporting ACMI lab’s research on robust and socially aligned machine learning.

%\clearpage 

\bibliography{Arxiv}

\begin{thebibliography}{31}
\providecommand{\natexlab}[1]{#1}
\providecommand{\url}[1]{\texttt{#1}}
\expandafter\ifx\csname urlstyle\endcsname\relax
  \providecommand{\doi}[1]{doi: #1}\else
  \providecommand{\doi}{doi: \begingroup \urlstyle{rm}\Url}\fi

\bibitem[Acerbi(2002)]{acerbi2002spectral}
Carlo Acerbi.
\newblock Spectral measures of risk: A coherent representation of subjective
  risk aversion.
\newblock \emph{Journal of Banking \& Finance}, 26\penalty0 (7):\penalty0
  1505--1518, 2002.

\bibitem[Agarwal et~al.(2019)Agarwal, Kakade, Lee, and
  Mahajan]{agarwal2019theory}
Alekh Agarwal, Sham~M. Kakade, Jason~D. Lee, and Gaurav Mahajan.
\newblock On the theory of policy gradient methods: Optimality, approximation,
  and distribution shift, 2019.

\bibitem[Agrawal et~al.(2018)Agrawal, Verschueren, Diamond, and
  Boyd]{agrawal2018rewriting}
Akshay Agrawal, Robin Verschueren, Steven Diamond, and Stephen Boyd.
\newblock A rewriting system for convex optimization problems.
\newblock \emph{Journal of Control and Decision}, 5\penalty0 (1):\penalty0
  42--60, 2018.

\bibitem[Alexander et~al.(2006)Alexander, Coleman, and
  Li]{alexander2006minimizing}
Siddharth Alexander, Thomas~F Coleman, and Yuying Li.
\newblock Minimizing cvar and var for a portfolio of derivatives.
\newblock \emph{Journal of Banking \& Finance}, 30\penalty0 (2):\penalty0
  583--605, 2006.

\bibitem[Artzner et~al.(1999)Artzner, Delbaen, Eber, and
  Heath]{artzner1999coherent}
Philippe Artzner, Freddy Delbaen, Jean-Marc Eber, and David Heath.
\newblock Coherent measures of risk.
\newblock \emph{Mathematical finance}, 9\penalty0 (3):\penalty0 203--228, 1999.

\bibitem[Azizzadenesheli et~al.(2018)Azizzadenesheli, Yue, and
  Anandkumar]{kamyar}
Kamyar Azizzadenesheli, Yisong Yue, and Animashree Anandkumar.
\newblock Policy gradient in partially observable environments: Approximation
  and convergence.
\newblock \emph{arXiv e-prints}, pages arXiv--1810, 2018.

\bibitem[Beck(2017)]{beck2017first}
Amir Beck.
\newblock \emph{First-order methods in optimization}.
\newblock SIAM, 2017.

\bibitem[Bellemare et~al.(2017)Bellemare, Dabney, and
  Munos]{bellemare2017distributional}
Marc~G Bellemare, Will Dabney, and R{\'e}mi Munos.
\newblock A distributional perspective on reinforcement learning.
\newblock \emph{arXiv preprint arXiv:1707.06887}, 2017.

\bibitem[Bhandari and Russo(2019)]{bhandari2019global}
Jalaj Bhandari and Daniel Russo.
\newblock Global optimality guarantees for policy gradient methods.
\newblock \emph{arXiv preprint arXiv:1906.01786}, 2019.

\bibitem[Chow and Ghavamzadeh(2014)]{chow2014algorithms}
Yinlam Chow and Mohammad Ghavamzadeh.
\newblock Algorithms for cvar optimization in mdps.
\newblock In \emph{Advances in neural information processing systems}, pages
  3509--3517, 2014.

\bibitem[Chow et~al.(2015)Chow, Tamar, Mannor, and Pavone]{chow2015risk}
Yinlam Chow, Aviv Tamar, Shie Mannor, and Marco Pavone.
\newblock Risk-sensitive and robust decision-making: a cvar optimization
  approach.
\newblock In \emph{Advances in Neural Information Processing Systems}, pages
  1522--1530, 2015.

\bibitem[Di~Castro et~al.(2012)Di~Castro, Tamar, and Mannor]{di2012policy}
Dotan Di~Castro, Aviv Tamar, and Shie Mannor.
\newblock Policy gradients with variance related risk criteria.
\newblock \emph{arXiv preprint arXiv:1206.6404}, 2012.

\bibitem[Diamond and Boyd(2016)]{diamond2016cvxpy}
Steven Diamond and Stephen Boyd.
\newblock {CVXPY}: {A} {P}ython-embedded modeling language for convex
  optimization.
\newblock \emph{Journal of Machine Learning Research}, 17\penalty0
  (83):\penalty0 1--5, 2016.

\bibitem[Fei et~al.(2020)Fei, Yang, Chen, Wang, and Xie]{fei2020risksensitive}
Yingjie Fei, Zhuoran Yang, Yudong Chen, Zhaoran Wang, and Qiaomin Xie.
\newblock Risk-sensitive reinforcement learning: Near-optimal risk-sample
  tradeoff in regret, 2020.

\bibitem[Fiez and Ratliff(2020)]{fiez2020gradient}
Tanner Fiez and Lillian Ratliff.
\newblock Gradient descent-ascent provably converges to strict local minmax
  equilibria with a finite timescale separation, 2020.

\bibitem[Garc{\i}a and Fern{\'a}ndez(2015)]{garcia2015comprehensive}
Javier Garc{\i}a and Fernando Fern{\'a}ndez.
\newblock A comprehensive survey on safe reinforcement learning.
\newblock \emph{Journal of Machine Learning Research}, 16\penalty0
  (1):\penalty0 1437--1480, 2015.

\bibitem[Ghadimi and Lan(2016)]{ghadimi2016accelerated}
Saeed Ghadimi and Guanghui Lan.
\newblock Accelerated gradient methods for nonconvex nonlinear and stochastic
  programming.
\newblock \emph{Mathematical Programming}, 156\penalty0 (1-2):\penalty0 59--99,
  2016.

\bibitem[Keramati et~al.(2019)Keramati, Dann, Tamkin, and
  Brunskill]{keramati2019being}
Ramtin Keramati, Christoph Dann, Alex Tamkin, and Emma Brunskill.
\newblock Being optimistic to be conservative: Quickly learning a cvar policy.
\newblock \emph{arXiv preprint arXiv:1911.01546}, 2019.

\bibitem[Liu et~al.(2018)Liu, Li, Tang, and Zhou]{liu2018breaking}
Qiang Liu, Lihong Li, Ziyang Tang, and Dengyong Zhou.
\newblock Breaking the curse of horizon: Infinite-horizon off-policy
  estimation.
\newblock \emph{arXiv preprint arXiv:1810.12429}, 2018.

\bibitem[Liu et~al.(2020)Liu, Swaminathan, Agarwal, and Brunskill]{liu2020off}
Yao Liu, Adith Swaminathan, Alekh Agarwal, and Emma Brunskill.
\newblock Off-policy policy gradient with stationary distribution correction.
\newblock In \emph{Uncertainty in Artificial Intelligence}, pages 1180--1190.
  PMLR, 2020.

\bibitem[Mavrin et~al.(2019)Mavrin, Zhang, Yao, Kong, Wu, and
  Yu]{mavrin2019distributional}
Borislav Mavrin, Shangtong Zhang, Hengshuai Yao, Linglong Kong, Kaiwen Wu, and
  Yaoliang Yu.
\newblock Distributional reinforcement learning for efficient exploration.
\newblock \emph{arXiv preprint arXiv:1905.06125}, 2019.

\bibitem[Milgrom and Segal(2002)]{milgrom2002envelope}
Paul Milgrom and Ilya Segal.
\newblock Envelope theorems for arbitrary choice sets.
\newblock \emph{Econometrica}, 70\penalty0 (2):\penalty0 583--601, 2002.

\bibitem[Pflug and Pichler(2016)]{pflug2016time}
Georg~Ch Pflug and Alois Pichler.
\newblock Time-consistent decisions and temporal decomposition of coherent risk
  functionals.
\newblock \emph{Mathematics of Operations Research}, 41\penalty0 (2):\penalty0
  682--699, 2016.

\bibitem[Prashanth and Ghavamzadeh(2013)]{prashanth2013actor}
LA~Prashanth and Mohammad Ghavamzadeh.
\newblock Actor-critic algorithms for risk-sensitive mdps.
\newblock In \emph{Advances in neural information processing systems}, pages
  252--260, 2013.

\bibitem[Prashanth et~al.(2016)Prashanth, Jie, Fu, Marcus, and
  Szepesv{\'a}ri]{prashanth2016cumulative}
LA~Prashanth, Cheng Jie, Michael Fu, Steve Marcus, and Csaba Szepesv{\'a}ri.
\newblock Cumulative prospect theory meets reinforcement learning: Prediction
  and control.
\newblock In \emph{International Conference on Machine Learning}, pages
  1406--1415, 2016.

\bibitem[Ruszczy{\'n}ski(2010)]{ruszczynski2010risk}
Andrzej Ruszczy{\'n}ski.
\newblock Risk-averse dynamic programming for markov decision processes.
\newblock \emph{Mathematical programming}, 125\penalty0 (2):\penalty0 235--261,
  2010.

\bibitem[Ruszczy{\'n}ski and Shapiro(2006)]{ruszczynski2006optimization}
Andrzej Ruszczy{\'n}ski and Alexander Shapiro.
\newblock Optimization of convex risk functions.
\newblock \emph{Mathematics of operations research}, 31\penalty0 (3):\penalty0
  433--452, 2006.

\bibitem[Shapiro et~al.(2009)Shapiro, Dentcheva, and
  Ruszczynski]{shapiro2009lectures}
A~Shapiro, D~Dentcheva, and A~Ruszczynski.
\newblock Lectures on stochastic programming: Modeling and theory, mos-siam
  ser.
\newblock \emph{Optim., SIAM, Philadelphia}, 2009.

\bibitem[Sutton and Barto(2018)]{sutton2018reinforcement}
Richard~S Sutton and Andrew~G Barto.
\newblock \emph{Reinforcement learning: An introduction}.
\newblock MIT press, 2018.

\bibitem[Tamar et~al.(2015{\natexlab{a}})Tamar, Chow, Ghavamzadeh, and
  Mannor]{tamar2015policy}
Aviv Tamar, Yinlam Chow, Mohammad Ghavamzadeh, and Shie Mannor.
\newblock Policy gradient for coherent risk measures.
\newblock In \emph{Advances in Neural Information Processing Systems}, pages
  1468--1476, 2015{\natexlab{a}}.

\bibitem[Tamar et~al.(2015{\natexlab{b}})Tamar, Glassner, and
  Mannor]{tamar2014optimizing}
Aviv Tamar, Yonatan Glassner, and Shie Mannor.
\newblock Optimizing the cvar via sampling.
\newblock In \emph{Proceedings of the Twenty-Ninth AAAI Conference on
  Artificial Intelligence}, pages 2993--2999, 2015{\natexlab{b}}.

\end{thebibliography}
\bibliographystyle{plainnat}

\newpage
\appendix

\begin{center}
{\huge Appendix}
\end{center}
\section{
%\kamyar{Proofs of statements in Section .. along with the required auxiliary lemmas} 
Proofs for Section~\ref{SEC:VALUE}}\label{sec:appendix_value}

\begin{proof}[Proof of Theorem \ref{thm:mcrp_gradient}]
The proof of this theorem is given in Appendix D of \cite{tamar2015policy}, and we repeat it here for completeness. 
The value $V_\theta$ is defined as:
\begin{align*}
    V_\theta(s) &= C(s) + \gamma \max_{\xi \in \U(P_\theta)}\sum_{s'}P_\theta(s'|s)\xi(s')V_\theta(s')\nonumber\\
    &= C(s) + \gamma \max_{\xi} \min_\lambda L_{\theta, s}(\xi, \lambda^P, \lambda^{\mathcal{I}}, \lambda^{\mathcal{E}})
\end{align*}
Using the envelope theorem \citep{milgrom2002envelope}, 
\begin{align*}
    \nabla_\theta V_\theta(s) &= \gamma \nabla_\theta \max_{\xi} \min_\lambda L_{\theta, s}(\xi, \lambda^P, \lambda^{\mathcal{I}}, \lambda^{\mathcal{E}})\nonumber\\
    &= \gamma \nabla_\theta L_{\theta, s}(\xi, \lambda^P, \lambda^{\mathcal{I}}, \lambda^{\mathcal{E}})|_{(\xi_\theta, \lambda_\theta^{P}, \lambda_\theta^{\mathcal{E}}, \lambda_\theta^{\mathcal{I}})}
\end{align*}
The derivative of the Lagrangian evaluated at the saddle point $(\xi_\theta, \lambda_\theta^{P}, \lambda_\theta^{\mathcal{E}}, \lambda_\theta^{\mathcal{I}})$ is:
\begin{align*}
    &\nabla_\theta L_{\theta, s}(\xi_{\theta, s}, \lambda_{\theta, s}^{P}, \lambda_{\theta, s}^{\mathcal{E}}, \lambda_{\theta, s}^{\mathcal{I}})\\ 
    &\quad= \sum_{s'}\xi_{\theta, s}(s')\sum_a \nabla_\theta \pi_\theta(a|s) P(s'|s,a) V_\theta(s') + \sum_{s'}P_\theta(s'|s)\xi_{\theta, s}(s')\nabla_\theta V_\theta(s') \nonumber\\
    &\quad- \lambda_{\theta, s}^{P} \sum_{s'}\xi_{\theta, s}(s')\sum_a \nabla_\theta \pi_\theta(a|s) P(s'|s,a) \\
    &\quad- \sum_{i \in \mathcal{I}} \lambda_{\theta, s}^{\mathcal{I}}(i)\frac{df_i(\xi_{\theta, s}, P_\theta)}{dp(s')}\sum_a \nabla_\theta \pi_\theta(a|s) P(s'|s,a) - \sum_{e \in \mathcal{E}} \lambda_{\theta, s}^{\mathcal{E}}(e)\frac{dg_e(\xi_{\theta, s}, P_\theta)}{dp(s')}\sum_a \nabla_\theta \pi_\theta(a|s)P(s'|s,a)
\end{align*}
Then we have 
\begin{align*}
    \nabla_\theta V_\theta(s) &= \gamma \sum_{s'}P_\theta(s'|s)\xi_{\theta, s}(s')\nabla_\theta V_\theta(s') + \sum_a  \nabla_\theta \pi_\theta(a|s)\bigg[\gamma \sum_{s'}P(s'|s,a)\Big[\xi_{\theta,s}\Big( V_\theta(s') - \lambda_{\theta, s}^{P}\Big) \\
    &-  \sum_{i \in \mathcal{I}} \lambda_{\theta, s}^{\mathcal{I}}(i)\frac{df_i(\xi_{\theta, s}, P_\theta)}{dp(s')} - \sum_{e \in \mathcal{E}} \lambda_{\theta, s}^{\mathcal{E}}(e)\frac{dg_e(\xi_{\theta, s}, P_\theta)}{dp(s')}\Big]\bigg]\\
    &= \sum_a  \nabla_\theta \pi_\theta(a|s)h_\theta(s,a) + \gamma \sum_{s'}P_\theta(s'|s)\xi_{\theta, s}(s')\nabla_\theta V_\theta(s')
\end{align*}
with $h_\theta(s,a)$ as defined in \eqref{eqn:mcr_h_theta}. Unfolding the recursion, this gives us 
\begin{align*}
    \nabla_\theta V_\theta(s) &= \sum_a  \nabla_\theta \pi_\theta(a_0|s_0)h_\theta(s_0,a_0) + \gamma \sum_{s_1}P_\theta(s_1|s_0)\xi_{\theta, s_0}(s_1)\Big( \nabla_\theta \pi_\theta(a_1|s_1)h_\theta(s_1,a_1) \\
    &\qquad+ \gamma \sum_{s_2}P_\theta(s_2|s_1)\xi_{\theta, s_1}(s_2)\nabla_\theta V_\theta(s_2) \Big)
\end{align*}
and unfolding the recursion further gives the result. 

% \kamyar{we need to unfold the recursion here, at least for one step. The way I do in PG paper, or Rich does in his PG paper.}
% I thought we already unfold for one step (we show unfolding in s_1). 
\end{proof}

\newpage 
 
%\clearpage 

\section{Proofs for Section~\ref{SEC:GRADIENT_DOMINATION}}\label{sec:appendix_gradient_domination}

\subsection{Proof of Lemma~\ref{LEM:GRADIENT_DOMINATION}}
We first establish the performance difference lemma (Lemma~\ref{lem:performance_difference}) for MCR, which upper bounds the optimality gap using an advantage function. This lemma holds for all policies, not only the directly parameterized policies we consider in Section \ref{SEC:GRADIENT_DOMINATION}. 
%Unlike in the performance difference lemma for expected value \citep{kakade2001natural}, the advantage function $A_\pi$ for MCR is a function of the primal solution $\xi_{\pi, s}$ of policy $\pi$ and the dual solution $\lambda_{*, s}$ of the optimal policy $\pi_*$ of the Lagrangian in ~\eqref{eqn:mcr_value_lagrang}.  
\begin{lemma}[Performance Difference Lemma for Markov Coherent Risk]\label{lem:performance_difference}
For all policies $\pi$ and starting state $s_0$
\begin{equation}
    V_\pi(s_0) - V_*(s_0) \leq \E_{s \sim d_{*}^{\xi_\pi}}\E_{a \sim \pi_*}\Big[A_{\pi}(s,a) + \gamma b_{\pi, s} \Big]
\end{equation}
where
%\footnotesize
\begin{align*}
    A_\pi(s,a) &= u_\pi(s) - t_\pi(s,a) \\
    t_\pi(s,a) &= \gamma \sum_{s'}P(s'|s,a)\xi_{\pi, s}(s')\Big(V_\pi(s') - \lambda_{*, s}\Big)\\
    u_\pi(s) &= \sum_a\pi(a|s)t_\pi(s,a)\\
    b_{\pi, s} &= \sum_{i \in \mathcal{I}} \lambda_{*, s}^{\mathcal{I}}(i)\Big(f_i(\xi_{\pi, s}, P_*) - f_i(\xi_{\pi, s}, P_\pi)\Big) + \sum_{e \in \mathcal{E}} \lambda_{*, s}^{\mathcal{E}}(e)\Big(g_e(\xi_{\pi, s}, P_*) - g_e(\xi_{\pi, s}, P_\pi)\Big)
\end{align*}
\normalsize
\end{lemma}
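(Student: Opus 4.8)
The plan is to prove the bound by deriving a one-step recursive inequality for the pointwise optimality gap $D(s) := V_\pi(s) - V_*(s)$ and then telescoping it along the $\xi_\pi$-reweighted dynamics of the optimal policy. The only genuine inequality in the whole argument enters at the one-step stage, where we lower-bound $V_*(s)$; everything afterward is an exact unrolling. Throughout I would invoke Assumption~\ref{assum:risk_envelope} to guarantee existence of the saddle points and strong duality for the inner min-max in \eqref{eqn:mcr_value_lagrang}.

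For the one-step step, I would write both values in the Lagrangian/saddle-point form \eqref{eqn:mcr_value_lagrang}. At the saddle point $(\xi_{\pi,s},\lambda_{\pi,s})$ for $\pi$, all normalization and constraint terms vanish (by primal feasibility $g_e(\xi_{\pi,s},P_\pi)=0$ and $\E_{P_\pi}[\xi_{\pi,s}]=1$, and by complementary slackness), so $V_\pi(s) = C(s) + \gamma\sum_{s'}P_\pi(s'|s)\xi_{\pi,s}(s')V_\pi(s')$. For $V_*$, I would exploit that $\xi_{*,s}$ maximizes $L_{*,s}(\cdot,\lambda_{*,s})$: substituting the \emph{suboptimal} reweighting $\xi_{\pi,s}$ gives the lower bound $V_*(s)\ge C(s)+\gamma L_{*,s}(\xi_{\pi,s},\lambda_{*,s})$. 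Subtracting yields an upper bound on $D(s)$ expressed through $V_\pi$ and $L_{*,s}(\xi_{\pi,s},\lambda_{*,s})$.

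Next I would match the resulting pieces to the advantage and the residual $b_{\pi,s}$. Substituting $V_*(s')=V_\pi(s')-D(s')$ isolates the recursive term $\gamma\sum_{s'}P_*(s'|s)\xi_{\pi,s}(s')D(s') = \gamma\sum_{s'}P_*^{\xi_\pi}(s'|s)D(s')$. Using $\E_{P_\pi}[\xi_{\pi,s}]=1$, the normalization term $\lambda_{*,s}^{P}\big(\sum_{s'}P_*(s'|s)\xi_{\pi,s}(s')-1\big)$ rewrites as a $(P_\pi-P_*)$ difference, and combining it with the leftover value term produces exactly $\gamma\sum_{s'}(P_\pi-P_*)(s'|s)\xi_{\pi,s}(s')\big(V_\pi(s')-\lambda_{*,s}\big)$, which is $\E_{a\sim\pi_*}[A_\pi(s,a)]$ after applying $\sum_a\pi_*(a|s)P(s'|s,a)=P_*(s'|s)$ and the definitions of $t_\pi$ and $u_\pi$. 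For the constraint terms evaluated at $P_*$, I would add and subtract their $P_\pi$ counterparts to form $\gamma b_{\pi,s}$; the leftover $P_\pi$ pieces, $\gamma\sum_e\lambda_{*,s}^{\mathcal E}(e)g_e(\xi_{\pi,s},P_\pi)+\gamma\sum_i\lambda_{*,s}^{\mathcal I}(i)f_i(\xi_{\pi,s},P_\pi)$, are non-positive—the first vanishes because $g_e(\xi_{\pi,s},P_\pi)=0$, and the second is $\le 0$ by dual non-negativity $\lambda_{*,s}^{\mathcal I}\ge 0$ together with primal feasibility $f_i(\xi_{\pi,s},P_\pi)\le 0$ under $P_\pi$. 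Bounding them by zero upgrades the $P_*$ constraint terms to $\gamma b_{\pi,s}$, leaving $D(s)\le \E_{a\sim\pi_*}[A_\pi(s,a)]+\gamma b_{\pi,s}+\gamma\sum_{s'}P_*^{\xi_\pi}(s'|s)D(s')$.

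Finally, I would telescope this recursion. Because the risk-envelope constraint forces $\sum_{s'}P_*(s'|s)\xi_{\pi,s}(s')=1$, the reweighted kernel $P_*^{\xi_\pi}$ is itself a valid transition kernel, so $d_*^{\xi_\pi}$ is well-defined and unrolling from $s_0$ produces a discounted trajectory sum under the $\xi_\pi$-reweighted $\pi_*$-chain, collapsing to $\E_{s\sim d_*^{\xi_\pi}}\E_{a\sim\pi_*}[A_\pi(s,a)+\gamma b_{\pi,s}]$; since the per-step coefficients are non-negative, substituting the upper bound for each $D(s')$ preserves the inequality regardless of the sign of $D$. The main obstacle is the one-step bound: one must justify strong duality and saddle-point existence to license the lower bound on $V_*$, and then track signs carefully so that feasibility of $\xi_{\pi,s}$ under $P_\pi$ (rather than under $P_*$) makes precisely the right normalization and constraint terms vanish or drop in the direction that preserves the upper bound. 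The telescoping itself is then routine.
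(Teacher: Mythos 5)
Your proof is correct and follows essentially the same route as the paper's: you lower-bound $V_*(s)$ by plugging the suboptimal feasible point $\xi_{\pi,s}$ into $L_{*,s}(\cdot,\lambda_{*,s})$, recover $\E_{a\sim\pi_*}[A_\pi(s,a)]$ from the $(P_\pi-P_*)$ difference after cancelling the normalization term, absorb the constraint terms into $b_{\pi,s}$ via feasibility and sign arguments (your complementary-slackness step is exactly the paper's ``$\lambda_{\pi,s}$ minimizes $L_{\pi,s}$'' inequality in disguise), and telescope---your one-step recursion on $D(s)$ is just a cleaner packaging of the paper's direct unrolling of the nested Lagrangian bound. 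One small correction: the envelope constraint gives $\sum_{s'}P_\pi(s'|s)\xi_{\pi,s}(s')=1$, not $\sum_{s'}P_*(s'|s)\xi_{\pi,s}(s')=1$, so $P_*^{\xi_\pi}$ is generally \emph{not} a probability kernel; the telescoping still goes through because $d_*^{\xi_\pi}$ is simply defined as the discounted occupancy of the nonnegative kernel $P_*(s'|s)\xi_{\pi,s}(s')$, and nonnegativity is all you need to substitute the per-step upper bound on $D(s')$.
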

\begin{proof}
For organizational purposes, we denote 
$$f_{\pi, s} := \sum_{i \in \mathcal{I}} \lambda_{*, s}^{\mathcal{I}}(i)f_i(\xi_{\pi, s}, P_\pi) \quad \text{and} \quad f_{*, s}: = \sum_{i \in \mathcal{I}} \lambda_{*, s}^{\mathcal{I}}(i)f_i(\xi_{\pi, s}, P_*), $$
$$g_{\pi, s} := \sum_{e \in \mathcal{E}} \lambda_{*, s}^{\mathcal{E}}(e)g_e(\xi_{\pi, s}, P_\pi) \quad \text{and} \quad g_{*, s} := \sum_{e \in \mathcal{E}} \lambda_{*, s}^{\mathcal{E}}(e)g_e(\xi_{\pi, s}, P_*).$$
Then 
% \begin{equation}\label{eqn:v_id_9}
%     V_\pi(s) = C(s,a)u_\pi(s) - \gamma \lambda_{\pi, s}^{P} - \gamma f_{\pi, s} - \gamma g_{\pi, s} \quad \text{and} \quad V_*(s) = u(s) - \gamma \lambda_{*, s}^{P} - \gamma f_{*, s} - \gamma g_{*, s}
% \end{equation}
\begin{equation*}\label{eqn:b_s_id_9}
b_{\pi, s} = f_{*, s} - f_{\pi, s} + g_{*, s} - g_{\pi, s}
\end{equation*}
First we expand $V_*(s_t)$ using the definition of the Lagrangian in~\eqref{eqn:mcr_value_lagrang}, where $(\xi_{*, s_t}, \lambda_{*, s_t}^{P}, \lambda_{*, s_t}^{\mathcal{I}}, \lambda_{*, s_t}^{\mathcal{E}}) = \displaystyle\argmax_{\xi}\argmin_{\lambda^P, \lambda^{\mathcal{I}}, \lambda^{\mathcal{E}}}L_{*, s_t}(\xi, \lambda^P, \lambda^{\mathcal{I}}, \lambda^{\mathcal{E}})$ 
\allowdisplaybreaks
\begin{align*}
    V_*(s_0) - V_\pi(s_0) &= C(s_0) + \gamma L_{*, s_0}(\xi_{*, s_0}, \lambda_{*, s_0}^{P}, \lambda_{*, s_0}^{\mathcal{I}}, \lambda_{*, s_0}^{\mathcal{E}}) - V_\pi(s_0) \\
    \intertext{Because $\xi_{*, s_0}$ maximizes the Lagrangian $L_{*, s_0}$, we can lower bound using $\xi_{\pi, s_0}$, which maximizes $L_{\pi, s_0}$: }
    &\geq C(s_0) + \gamma L_{*, s_0}(\xi_{\pi, s_0}, \lambda_{*, s_0}^{P}, \lambda_{*, s_0}^{\mathcal{I}}, \lambda_{*, s_0}^{\mathcal{E}}) - V_\pi(s_0) \\
    \intertext{Expanding $L_{*, s_0}$ using~\eqref{eqn:mcr_value_lagrang}, } 
    &= \sum_{a_0} \pi_*(a_0|s_0)\Bigg(C(s_0) - \gamma \lambda_{*, s_0}\sum_{s_1}P(s_1|s_0,a_0)\xi_{\pi, s_0}(s_1) - \gamma \lambda_{*, s_0} - \gamma f_{*, s_0} - \gamma g_{*, s_0}\\
    &\quad\quad\quad\quad+ \gamma \sum_{s_1}P(s_1|s_0,a_0)\xi_{\pi, s_0}(s_1)V_*(s_1)\bigg) - V_\pi(s_0)\\
    \intertext{We can apply this Lagrangian lower bound to every nested value $V_*(s_t)$ and expand accordingly. }
    &\geq \sum_{a_0} \pi_*(a_0|s_0)\Bigg(C(s_0) - \gamma \lambda_{*, s_0}\sum_{s_1}P(s_1|s_0,a_0)\xi_{\pi, s_0}(s_1) - \gamma \lambda_{*, s_0} - \gamma f_{*, s_0} - \gamma g_{*, s_0}\\
    &\quad\quad\quad\quad+ \gamma \sum_{s_1}P(s_1|s_0,a_0)\xi_{\pi, s_0}(s_1)\bigg(\sum_{a_1} \pi_*(a_1|s_1)\Big(C(s_1) + ... \Big)\bigg) - V_\pi(s_0)\\
     \intertext{Then adding and subtracting $V_\pi(s_t)$ at each timestep $t$, and rearranging: }
    &= \sum_{a_0} \pi_*(a_0|s_0)\Bigg(C(s_0) + V_\pi(s_0) - V_\pi(s_0) - \gamma \lambda_{*, s_0}\sum_{s_1}P(s_1|s_0,a_0)\xi_{\pi, s_0}(s_1) \\
    &\quad\quad\quad\quad- \gamma \lambda_{*, s_0} - \gamma f_{*, s_0} - \gamma g_{*, s_0} + \gamma \sum_{s_1}P(s_1|s_0,a_0)\xi_{\pi, s_0}(s_1)\bigg(\sum_{a_1} \pi_*(a_1|s_1)\Big(C(s_1) \\
    &\quad\quad\quad\quad+ V_\pi(s_1) - V_\pi(s_1) + ... \Big)\bigg) - V_\pi(s_0)\\
    &= \sum_{a_0} \pi_*(a_0|s_0)\Bigg(C(s_0) - V_\pi(s_0) - \gamma \lambda_{*, s_0}\sum_{s_1}P(s_1|s_0,a_0)\xi_{\pi, s_0}(s_1) \\
    &\quad\quad\quad\quad- \gamma \lambda_{*, s_0}- \gamma f_{*, s_0} - \gamma g_{*, s_0} + \gamma \sum_{s_1}P(s_1|s_0,a_0)\xi_{\pi, s_0}(s_1)V_\pi(s_1)\\
    &\quad\quad\quad\quad+ \gamma \sum_{s_1}P(s_1|s_0,a_0)\xi_{\pi, s_0}(s_1)\bigg(\sum_{a_1} \pi_*(a_1|s_1)\Big(C(s_1) - V_\pi(s_1) + ... \Big)\bigg)\\
    \intertext{Using the definition of $d_*^{\xi_\pi}(s)$, }
    &= \sum_s d_*^{\xi_\pi}(s) \sum_a  \pi_*(a|s) \bigg[C(s) +  \gamma \sum_{s'}P(s'|s,a)\xi_{\pi, s}(s')V_\pi(s') \\
    &\quad\quad\quad\quad- \gamma \lambda_{*, s}\sum_{s'}P(s'|s,a)\xi_{\pi, s}(s') - \gamma \lambda_{*, s} - \gamma f_{*, s} - \gamma g_{*, s}  - V_\pi(s)\bigg]\\
    \intertext{Then  substituting~\eqref{eqn:mcr_value_lagrang} for $V_\pi(s)$, }
    &= \sum_s d_*^{\xi_\pi}(s) \sum_a  \pi_*(a|s) \bigg[C(s) +  \gamma \sum_{s'}P(s'|s,a)\xi_{\pi, s}(s')V_\pi(s') \\
    &\quad\quad\quad\quad- \gamma \lambda_{*, s}\sum_{s'}P(s'|s,a)\xi_{\pi, s}(s') - \gamma \lambda_{*, s_0} - \gamma f_{*, s} - \gamma g_{*, s}  \\
    &\quad\quad\quad\quad - \Big(C(s) + \gamma L_{\pi, s}( \xi_{\pi, s}, \lambda_{\pi, s}^{P}, \lambda_{\pi, s}^{\mathcal{I}}, \lambda_{\pi, s}^{\mathcal{E}} )\Big)\bigg]\\
    \intertext{Because $\lambda_{\pi, s}$ minimizes the $L_{\pi, s}$, we lower bound using $\lambda_{*, s}$:}
    &\geq \sum_s d_*^{\xi_\pi}(s) \sum_a  \pi_*(a|s) \bigg[C(s) +  \gamma \sum_{s'}P(s'|s,a)\xi_{\pi, s}(s')V_\pi(s') \\
    &\quad\quad\quad\quad- \gamma \lambda_{*, s}\sum_{s'}P(s'|s,a)\xi_{\pi, s}(s') - \gamma \lambda_{*, s_0} - \gamma f_{*, s} - \gamma g_{*, s}  \\
    &\quad\quad\quad\quad - \Big(C(s) + \gamma L_{\pi, s}( \xi_{\pi, s}, \lambda_{*, s}^{P}, \lambda_{*, s}^{\mathcal{I}}, \lambda_{*, s}^{\mathcal{E}} )\Big)\bigg]\\
    \intertext{Finally, expanding the Lagrangian and grouping terms,}
    &= \sum_s d_*^{\xi_\pi}(s) \sum_a  \pi_*(a|s) \bigg[ \gamma \sum_{s'}P(s'|s,a)\xi_{\pi, s}(s')V_\pi(s') - \gamma \lambda_{*, s}\sum_{s'}P(s'|s,a)\xi_{\pi, s}(s') \\
    &\quad\quad\quad\quad\quad- \gamma \lambda_{*, s}- \gamma f_{*, s} - \gamma g_{*, s}  - \Big(u_\pi(s) - \gamma \lambda_{*, s} - \gamma f_{\pi, s} - \gamma g_{\pi, s}\Big)\bigg]\\
    &= \sum_s d_*^{\xi_\pi}(s) \sum_a  \pi_*(a|s) \bigg[t_\pi(s,a) - u_\pi(s) - \gamma b_{\pi, s} \bigg]\\
    % &= \sum_s d_{s_0, \xi_\pi}(s) \sum_a  \pi_*(a|s) \bigg[A^\theta(s,a) + \gamma b_s \bigg]
\end{align*}
From the above, flipping the direction of the inequality we obtain 
\begin{equation*}
    V_\pi(s_0) - V_*(s_0) \leq \sum_s d_*^{\xi_\pi}(s)  \sum_a  \pi_*(a|s) \Big(u_\pi(s) - t_\pi(s,a) + \gamma b_{\pi, s} \Big)
\end{equation*}
Rearranging and using the definition of $A_\pi(s,a)$ gives the result. 
\end{proof}

%\paragraph{Proof of Lemma \ref{lem:gradient_domination}}
\begin{proof}[Proof of Lemma \ref{LEM:GRADIENT_DOMINATION}]
We now give a proof for the gradient domination lemma for directly parameterized policies, and going forward we interchangeably refer to the policy as $\pi$ and $\theta$. From the performance difference lemma \ref{lem:performance_difference}, we have 
\begin{align*}
    V_\theta(s_0) - V_*(s_0) &\leq \sum_s d_*^{\xi_\theta}(s) \Big(\sum_a \pi_*(a|s) A_\theta(s,a)  + \gamma b_{\theta, s} \Big)\\
    \intertext{Using the fact that $\sum_a \pi_\theta(a|s)A_\theta(s,a) = 0$, }
    V_\theta(s_0) - V_*(s_0) &\leq \sum_s d_*^{\xi_\theta}(s)\sum_a \Big(\pi_*(a|s) - \pi_\theta(a|s)\Big) A_\theta(s,a)  + \gamma b_{\theta, s} \\
    \intertext{Since $\sum_a(\pi(a|s) - \pi'(a|s))u_\theta(s) = 0$ for any two policies $\pi, \pi'$ }
    &= \sum_s d_*^{\xi_\theta}(s) \sum_a \Big(\pi_\theta(a|s) - \pi_*(a|s)\Big) t_\theta(s,a)  + \gamma b_{\theta, s} \\
\end{align*}
Under direct parameterization where $\theta_{s,a} = \pi_\theta(a|s)$, the gradient from Theorem \eqref{eqn:mcr_gradient} is 
    $$\frac{\partial V_\theta(s_0)}{\partial \theta_{s,a}} = d_\theta^{ \xi_\theta}(s)h_\theta(s,a)$$
Due to the nonlinearity in the objective, while $t_\theta$ contains $\lambda_{*, s}$, $h_\theta$ is defined using $\lambda_{\theta, s}$. Further, $h_\theta$ involves the derivatives of the terms in $b_{\theta, s}$. Rearranging the previous inequality, 
% \begin{align*}
%      \text{res}_{\theta, s} &= \Big(\lambda_{\theta, s}^{P} - \lambda_{*, s}^{P}\Big)\sum_{a}\Big(\pi_\theta(a|s) - \pi_*(a|s)\Big) \sum_{s'}P(s'|s,a)\xi_{\theta, s}(s')\\
%      \text{lin}_{\theta, s} &= \gamma\bigg[\sum_{a}\Big(\pi_\theta(a|s) - \pi_*(a|s) \Big)\sum_{s'} P(s'|s,a) \Big(\sum_{i} \lambda_{\theta, s}^{\mathcal{I}}(i)\frac{df_{i}(\xi_{\theta, s}, P_\theta)}{dp(s')} + \sum_{e} \lambda_{\theta, s}^{\mathcal{E}}(e)\frac{dg_{e}(\xi_{\theta, s}, P_\theta)}{dp(s')}\Big) + b_{\theta, s}\bigg]\\
% \end{align*}
\begin{align*}
    V_\theta(s_0) - V_*(s_0) &\leq \sum_s d_*^{\xi_\theta}(s) \sum_a \Big(\pi_\theta(a|s) - \pi_*(a|s)\Big) t_\theta(s,a) + \gamma b_{\theta, s}\\
    &\leq \max_{\overline{\pi} \in \Delta(\A)^{\Ss}}\sum_s d_*^{\xi_\theta}(s) \sum_a \Big(\pi_\theta(a|s) - \overline{\pi}(a|s)\Big)t_\theta(s,a) + \gamma b_{\theta, s}\\
    &\leq \Big\Vert \frac{d_*^{\xi_\theta}}{d_\theta^{\xi_\theta}} \Big\Vert_\infty  \max_{\overline{\pi} \in \Delta(\A)^{\Ss}}\sum_s d_*^{\xi_\theta}(s) \sum_a \Big(\pi_\theta(a|s) - \overline{\pi}(a|s)\Big) t_\theta(s,a) + \gamma b_{\theta, s}\\
    &\leq \Big\Vert \frac{d_*^{\xi_\theta}}{d_\theta^{\xi_\theta}} \Big\Vert_\infty  \bigg(\max_{\overline{\pi} \in \Delta(\A)^{\Ss}}\sum_s d_*^{\xi_\theta}(s) \sum_a \Big(\pi_\theta(a|s) - \overline{\pi}(a|s)\Big) h_\theta(s,a) + \epsilon_L(\theta, \pi_*) + \epsilon_U(\theta, \pi_*)\bigg)\\
    &= \Big\Vert \frac{d_*^{\xi_\theta}}{d_\theta^{\xi_\theta}} \Big\Vert_\infty  \bigg(\max_{\overline{\pi} \in \Delta(\A)^{\Ss}}\sum_s d_*^{\xi_\theta}(s)  \Big(\pi_\theta - \overline{\pi}\Big)^\top \nabla_\theta V_\theta + \epsilon_L(\theta, \pi_*) + \epsilon_U(\theta, \pi_*)\bigg)\\
\end{align*}
where the second to last line uses the residual definitions~\eqref{eqn:policy_res} and~\eqref{eqn:constraint_res}, and the last line uses the definition of the gradient. 

\end{proof}
%\clearpage 

\subsection{Proof of Theorem \ref{THM:LOWER_BOUND}}\label{proof:lower_bound}

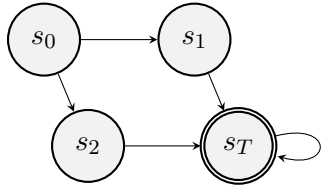
\begin{figure}[h]
    \centering
    \begin{tikzpicture}
        \node[state, ] (q1) {$s_0$};
        \node[state, right of=q1] (q2) {$s_1$};
        \node[state, below left of=q2] (q3) {$s_2$};
        \node[state, accepting, right of=q3] (q4) {$s_T$};
        \draw 
                (q1) edge[above] node{} (q2)
                (q1) edge[above] node{} (q3)
                % (q2) edge[loop right] node{0} (q2)
                % (q3) edge[loop right] node{0} (q3)
                (q3) edge[above] node{} (q4)
                (q2) edge[above] node{} (q4)
                (q4) edge[loop right] node{} (q4);
                % (q2) edge[loop above] node{1} (q2)
                % (q2) edge[bend left, above] node{0} (q3)
                % (q3) edge[bend left, below] node{0, 1} (q2);
    \end{tikzpicture}
    \caption{The MDP for the proof of Theorem~\ref{thm:lower_bound}}
    \label{fig:apdx_lower_bound_mdp}
\end{figure}
Consider the MDP of Figure \ref{fig:apdx_lower_bound_mdp}.
The MDP has the following transitions: 
\begin{align*}
    &P(s_2 | s_0, a_1) = 1\\
    &P(s_1 | s_0, a_2) = 0.9\\
    &P(s_2 | s_0, a_2) = 0.1\\
    &P(s_T|s_1) = P(s_T|s_2) = 1
\end{align*}
and the following costs: 
\begin{equation*}
    C(s_0) = 0, \quad C(s_1) = 0,\quad C(s_2) = 1, \quad C(s_T) = 0
\end{equation*}
There are 4 states, with $s_T$ being absorptive, and only 2 actions. The policy $\pi_\theta$ affects only the transitions from the starting state $s_0$ to either of the next states $s_1$ or $s_2$. Note that this MDP follows the same dynamics as a binomial 2-armed bandit, where arm 1 always returns cost 1 and arm 2 returns a cost of 0 90\% of the time and 10\% of the time returns cost 1.
% put MDP figure & explanation here 
Suppose we work with direct parameterization, where $\pi_\theta(a_1|s) = \theta_{a_1, s}$ and $\pi_\theta(a_2, s) = 1 - \theta_{a_2, s}$. We limit our consideration to $\theta_{a_1, s_0} = \theta$ and $\theta_{a_2, s_0} = 1 - \theta$ to avoid overparameterizing our policy, as the policy does not affect the cost accrued after $s_0$. Let us consider $\alpha = 0.5$, the expectation over the \emph{highest} 50\% of the distribution of the total discounted returns. We will show that there exists $\theta$ in this MDP such that the gradient domination bound in Lemma \ref{LEM:GRADIENT_DOMINATION} is tight, thus giving Theorem \ref{THM:LOWER_BOUND}. 

We can deterministically calculate the Markov coherent risk value of states $s_1, s_2$ using the definition \eqref{eqn:mcr_value}:
\begin{align*}
    V^\text{MC}(s_1) &= 0\\
    V^\text{MC}(s_2) &= 1\\
\end{align*}

\paragraph{Optimal Policy $\theta_*$: } The optimal policy in the setting is $\theta = 0$, which always takes action $a_2$. 

\paragraph{Policy $\theta$: } Set $\theta = \frac{4}{9}$.

The value of the initial state $V^\text{MC}$ can then be calculated using \eqref{eqn:mcr_value}. The maximization problem can be solved by either
% \begin{equation*}
%     V_\theta(s_0) = \gamma \max_{\xi \in \U}\Big(\theta \xi(s_1) V(s_1) + (1-\theta)\xi(s_2)V(s_2)\Big)
% \end{equation*}
% where for $\text{CVaR}_{0.5}$ the risk envelope $\U$ is
% \begin{equation*}
%     \U = \left\{\xi : \theta \xi(s_1) + (1-\theta)\xi(s_2) = 1, \xi \in [0, 2]\right\}
% \end{equation*}
using the analytical solution to the CVaR risk envelope or by convex programming solver, which gives us the primal solution $\xi_{s_0}$ and dual solution $\lambda_{s_0}$ for $\theta_*$ and $\theta$, which are shown in the table below:
\begin{center}
\begin{tabular}{ |c|c|c|c|c| } 
\hline
      & $\xi_{s_0}(s_1)$ & $\xi_{s_0}(s_2)$ & $\lambda^{P}$ & $V^\text{MC}(s_0)$ \\
      \hline
    $\theta$ & 0 & 2 & 1 & 1 \\ 
     \hline
    $\theta_*$ & 8/9 & 2 & 0 & 0.2 \\ 
     \hline
\end{tabular}
\end{center}
The optimality gap is 
\begin{equation*}
    V_\theta(s_0) - V_*(s_0) = 0.8\gamma
\end{equation*}
We will show that the bound in Theorem \ref{lem:gradient_domination} is tight for $\theta$, and proceed by determining the terms on the RHS of the bound. 

First, we calculate the gradient $\nabla_\theta V_\theta(s_0)$ using Theorem \ref{thm:mcrp_gradient}. Under direct parameterization in our MDP, we have that 
\begin{equation*}
    \frac{\partial V_\theta(s_0)}{d\theta_{s_0, a}} = h_\theta(s_0, a). 
\end{equation*}
Using the Lagrangian solutions from the table, we obtain that 
\begin{align*}
    h_\theta(s_0, a_1) = h_\theta(s_0, a_2) = 0 
\end{align*}
which indicates that the magnitude of the gradient is 0. Next, we calculate the residual term $\epsilon_{\text{res}}$ using the Lagrangian solutions:  
\begin{align*}
    &\gamma\Big(\lambda_{\theta, s_0}^{*,P} - \lambda_{*, s_0}^{*,P}\Big)\sum_a \Big(\theta_{s_0, a} - \theta_{*, s_0,a}\Big)\sum_{s'} P(s'|s,a)\xi_{\theta, s_0}(s') = 0.8\gamma.
\end{align*}
The last residual term satisfies  $\epsilon_{U} = 0$ because the constraint expressions $f_i$ are independent of $\theta$. Then we have the desired result, i.e., 
\begin{equation*}
    V_\theta(s_0) - V_*(s_0) = \epsilon_{L}(\theta, \pi_*) = 0.8\gamma. 
\end{equation*}

\subsection{Proof of Proposition \ref{thm:pgd}}\label{proof:pgd}
First, we restate Proposition B.1 from \cite{agarwal2019theory}, built upon results from \citet{ghadimi2016accelerated}, without proof, which we will later use in the proof of our theorem. 
\begin{proposition}[Proposition B.1 from \cite{agarwal2019theory}]\label{prop:b_1}
Let $V_\theta(s_0)$ be $\beta$-smooth in $\theta$ and the update rule is $\theta^+ = \theta - \eta G^\eta(\theta)$. If $\Vert V^\eta \Vert_2 \leq \epsilon$ then 
\begin{equation*}
    \max_{\overline{\pi} \in \Delta(\A)^{|\Ss|}}(\overline{\pi} - \pi)^\top\nabla_\theta V_\theta(s_0) \leq \epsilon\Big(\eta\beta + 1\Big)
\end{equation*}
\end{proposition}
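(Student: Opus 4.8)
The plan is to prove Proposition~\ref{prop:b_1} as the standard Ghadimi--Lan characterization of the gradient mapping, specialized to the Euclidean projection onto the product simplex $\Theta = \Delta(\A)^{|\Ss|}$. Write $f(\theta) := V_\theta(s_0)$ and $g := \nabla_\theta f(\theta)$, and recall that $G^\eta(\theta) = \tfrac1\eta(\theta - \theta^+)$ with $\theta^+ = \text{Proj}_\Theta(\theta - \eta g)$, so $\theta - \theta^+ = \eta G^\eta(\theta)$ and $\|G^\eta(\theta)\|_2 \le \epsilon$ by hypothesis. The first step is to invoke the first-order optimality (variational inequality) of the Euclidean projection: for every $u \in \Theta$, $\langle (\theta - \eta g) - \theta^+,\, u - \theta^+\rangle \le 0$. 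Substituting $\theta - \theta^+ = \eta G^\eta(\theta)$ and dividing by $\eta$ rearranges this to $\langle g,\, u - \theta^+\rangle \ge \langle G^\eta(\theta),\, u - \theta^+\rangle$, which by Cauchy--Schwarz yields the one-sided bound $\langle g,\, u - \theta^+\rangle \ge -\|G^\eta(\theta)\|_2\,\|u-\theta^+\|_2 \ge -\epsilon\,\|u-\theta^+\|_2$. This already expresses that, at the projected point, no feasible direction admits more than $O(\epsilon)$ of first-order decrease.

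The second step converts the gradient evaluated at $\theta^+$ back to the gradient appearing in the stationarity measure, and this is where $\beta$-smoothness enters and produces the $\eta\beta$ factor. By $\beta$-smoothness, $\|\nabla f(\theta^+) - g\|_2 \le \beta\|\theta^+-\theta\|_2 = \eta\beta\|G^\eta(\theta)\|_2 \le \eta\beta\epsilon$, so for any feasible $u$ we have $\langle \nabla f(\theta^+) - g,\, u-\theta^+\rangle \ge -\eta\beta\epsilon\,\|u-\theta^+\|_2$. Adding this to the inequality from the first step gives $\langle \nabla f(\theta^+),\, u-\theta^+\rangle \ge -(1+\eta\beta)\epsilon\,\|u-\theta^+\|_2$ for all $u \in \Theta$, equivalently $\langle \nabla f(\theta^+),\, \theta^+ - u\rangle \le (1+\eta\beta)\epsilon\,\|u-\theta^+\|_2$. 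Normalizing to unit-norm feasible directions (the normalization implicit in the first-order stationarity gap of Definition~\ref{def:gradient_domination}, i.e. dividing through by $\|u-\theta^+\|_2$) then produces the first-order gap bound $\epsilon(\eta\beta+1)$. The identification of $\pi$ in the statement with the projected iterate is harmless, since Proposition~\ref{thm:pgd} only ever applies the gradient-domination inequality at the running iterates $\theta_t = \theta_{t-1}^+$, where the stationarity gap is exactly the one controlled here.

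I expect the main obstacle to be bookkeeping rather than any deep analytic step. Two points require care: (a) the direction of the inequality coming out of the projection variational inequality, since evaluating the gap at $\theta$ versus at the projected point $\theta^+$ determines whether the clean $\eta\beta$ term (from smoothness) or a spurious $\eta\|g\|$ term (from a raw $\langle g,\theta^+-\theta\rangle$ expansion) appears — only the former route gives the stated constant; and (b) the normalization of the maximizing direction $\overline\pi - \pi$, because a literal maximum of the linear functional over the \emph{entire} product simplex is attained at a vertex and carries the $\ell_2$-diameter of $\Theta$ (the source of the $\sqrt{|\Ss|}$ that later surfaces in Proposition~\ref{thm:pgd}). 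Writing the bound as the clean $\epsilon(\eta\beta+1)$ therefore requires interpreting the gap with unit-norm feasible directions and absorbing the set geometry into $\epsilon$; I would make this normalization and the sign convention (so that the maximand is the non-stationarity gap consistent with Definition~\ref{def:gradient_domination}) explicit at the outset. The only genuine analytic ingredients are the projection inequality and the smoothness estimate $\|\nabla f(\theta^+)-g\|_2\le \eta\beta\epsilon$, which combine additively to give the $(\eta\beta+1)$ factor.
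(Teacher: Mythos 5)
Your argument is correct and is exactly the standard Ghadimi--Lan gradient-mapping proof that underlies this result: the paper states Proposition~\ref{prop:b_1} without proof, importing it from \citet{agarwal2019theory} (who in turn build on \citet{ghadimi2016accelerated}), and that proof combines the projection variational inequality with the smoothness estimate $\Vert \nabla f(\theta^+) - \nabla f(\theta)\Vert_2 \le \eta\beta\Vert G^\eta(\theta)\Vert_2 \le \eta\beta\epsilon$ precisely as you do. Your bookkeeping caveats are also the right reading of the (loosely transcribed) statement --- the hypothesis $\Vert V^\eta\Vert_2 \le \epsilon$ should be $\Vert G^\eta(\theta)\Vert_2 \le \epsilon$, and the gap is controlled at the projected iterate over unit-norm feasible directions, which is exactly how the paper invokes the proposition in the proof of Proposition~\ref{thm:pgd}, where the $2\sqrt{|\Ss|}$ diameter factor is accounted for separately.
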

%Now we prove Theorem \ref{thm:pgd}.

\begin{proof}[Proof of Proposition \ref{thm:pgd}]
 Let
 $$G^\eta(\theta) = \frac{1}{\eta}\Big(\theta - P_{\Delta(\A)^{\Ss}}(\theta - \eta \nabla_\theta V_\theta(s_0))\Big)$$
Let $\theta_t$, $V_t$ respectively be the parameter and value parameterized by $\theta_t$ at time $t$. Using our assumption that $V_\theta(s_0)$ is $\beta$-smooth, and from \cite{beck2017first} we have that for step size $\eta = \frac{1}{\beta}$, after $T$ steps of PGD, 
\begin{align*}
    \min_t \Vert G^\eta(\theta_t)\Vert_2 \leq \sqrt{\frac{2\beta(V_0(s_0) - V_*(s_0))}{T}}
\end{align*}
% $$\min_t \Vert G^\eta(\theta_t)\Vert_2 \leq \sqrt{\frac{2\beta(V_0(s_0) - V_*(s_0))}{T}}$$
Then using Proposition \ref{prop:b_1} with $\eta = \sqrt{\frac{2\beta(V_0(s_0) - V_*(s_0))}{T}}$, 
$$\min_t \max_{\theta_t + \delta \in \Delta(\A)^{\Ss}, \Vert \delta \Vert_2 \leq 1} \delta^\top \nabla_\theta V_{\theta_{t+1}}(s_0) \leq (\eta \beta + 1)\sqrt{\frac{2\beta(V_0(s_0) - V_*(s_0))}{T}}$$
Because $\Vert \overline{\pi} - \pi \Vert_2 \leq 2\sqrt{|\Ss|}$, 
\begin{align*}
    \max_{\overline{\pi} \in \Delta(\A)^{|\Ss|}}(\overline{\pi} - \pi)^\top\nabla_\theta V_\theta(s_0) &= 2\sqrt{|\Ss|}\max_{\overline{\pi} \in \Delta(\A)^{|\Ss|}}\frac{1}{\sqrt{2|\Ss|}}(\overline{\pi} - \pi)^\top\nabla_\theta V_\theta(s_0)\\
    &\leq  2\sqrt{|\Ss|}\max_{\theta + \delta \in \Delta(\A)^{\Ss}, \Vert \delta \Vert_2 \leq 1}\delta^\top \nabla_\theta V_\theta(s_0)
\end{align*}
Then using the gradient domination lemma (Lemma \ref{lem:gradient_domination}) and the fact that $\eta\beta = 1$,
\begin{align*}
    \min_t V_{t}(s_0) - V_*(s_0) &\leq \max_t\Big\Vert \frac{d_*^{\xi_{\theta_t}}}{d_{\theta_t}^{\xi_{\theta_t}}}\Big\Vert_\infty 4\sqrt{|\Ss|}\sqrt{\frac{2\beta(V_0(s_0) - V_*(s_0))}{T}} +  \max_t \Big(\epsilon_{L}(\theta_t, \pi_*) + \epsilon_{U}(\theta_t, \pi_*)\Big) \\
    &\leq \max_t\Big\Vert \frac{d_*^{\xi_{\theta_t}}}{d_{\theta_t}^{\xi_{\theta_t}}}\Big\Vert_\infty \sqrt{\frac{32|\Ss|\beta}{(1-\gamma)T}} + \epsilon_L + \epsilon_U 
\end{align*}
where the last line uses Assumptions \ref{assum:eps_l} and \ref{assum:eps_u} and the fact that $V_{0}(s_0) - V_*(s_0) \leq \frac{1}{1-\gamma}$.  
% We can get our required bound that the error on the RHS is $\leq \epsilon$ if we set 
% $$T \geq \max_t \Big\Vert \frac{d_{s_0,\xi_{t}}}{d^\theta_{s_0,\xi_{t}}}\Big\Vert_\infty^2 32|\Ss|\frac{\beta(V_{(0)}(s_0) - V_*(s_0))}{(\epsilon - \text{res} - \text{lin})^2}$$
\end{proof}
%\newpage 

\begin{lemma}[Smoothness of $V_\theta$ for Direct Parameterization]\label{lem:smoothness}
If the gradients of the primal and dual solutions of the Markov coherent risk value function (\ref{eqn:mcr_value}) exist and are bounded everywhere, that is 
%\scriptsize
$$\Big\Vert \frac{d\xi_\theta}{d\theta}\Big\Vert_\infty \leq c\quad \text{and}\quad \Big\Vert \frac{d\lambda_\theta}{d\theta}\Big\Vert_\infty \leq c'$$ 
\normalsize
and the second derivative of the constraints exist and are bounded, that is for all $e \in \mathcal{E}$ and $i \in \mathcal{I}$
%\scriptsize
$$\Big\Vert \frac{d^2g_e(\xi, P)}{dP^2}  \Big\Vert \leq M' \quad\text{and}\quad \Big\Vert \frac{d^2f_i(\xi, P)}{dP^2}  \Big\Vert \leq M'$$
\normalsize
then there exists $\beta$ such that the value $V_\theta$ is $\beta$-smooth for starting state $s_0$, that is 
\begin{equation*}
    \Vert \nabla_\theta V_\theta(s_0) - \nabla_\theta V_{\theta'}(s_0)\Vert_2 \leq \beta \Vert\theta - \theta'\Vert_2
\end{equation*}
\end{lemma}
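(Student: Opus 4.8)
The plan is to bound the operator norm of the Hessian $\nabla_\theta^2 V_\theta(s_0)$ uniformly in $\theta$; smoothness then follows by integrating along the segment between $\theta$ and $\theta'$, since $\Vert\nabla_\theta V_\theta(s_0) - \nabla_\theta V_{\theta'}(s_0)\Vert_2 \leq \sup_{\tilde\theta}\Vert\nabla_\theta^2 V_{\tilde\theta}(s_0)\Vert_{op}\,\Vert\theta - \theta'\Vert_2$. Under direct parameterization, the proof of Lemma~\ref{LEM:GRADIENT_DOMINATION} gives the compact form $\partial V_\theta(s_0)/\partial\theta_{s,a} = d_\theta^{\xi_\theta}(s)\, h_\theta(s,a)$, so it suffices to differentiate the product $d_\theta^{\xi_\theta}(s)\,h_\theta(s,a)$ once more and bound each resulting factor.

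As a preliminary step I would first establish that the gradient itself is bounded, i.e. $\Vert\nabla_\theta V_\theta(s_0)\Vert_2 \leq G_V$ for some constant. This follows immediately from the form above: the reweighted visitation distribution obeys $d_\theta^{\xi_\theta}(s) \leq 1$ because the reweighted kernel $P_\theta^{\xi_\theta}(s'|s) = P_\theta(s'|s)\xi_{\theta,s}(s')$ is a genuine stochastic matrix (its rows sum to $\E_{P_\theta}[\xi_{\theta,s}] = 1$ by the risk-envelope constraint in Assumption~\ref{assum:risk_envelope}), while $h_\theta(s,a)$ is bounded by inspection of~\eqref{eqn:mcr_h_theta} using boundedness of $V_\theta$, of the saddle-point variables $\xi_{\theta,s},\lambda_{\theta,s}$, and the constraint-gradient bound $M$ from Assumption~\ref{assum:risk_envelope}. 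This is the step that cuts the otherwise self-referential structure: when $h_\theta$ is later differentiated, the term $V_\theta(s')$ produces $\nabla_\theta V_\theta(s')$, which is now already controlled.

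Next I would bound the two pieces of the Hessian, $\partial^2 V_\theta/\partial\theta_{\tilde s,\tilde a}\partial\theta_{s,a} = \big(\partial d_\theta^{\xi_\theta}(s)/\partial\theta_{\tilde s,\tilde a}\big)h_\theta(s,a) + d_\theta^{\xi_\theta}(s)\big(\partial h_\theta(s,a)/\partial\theta_{\tilde s,\tilde a}\big)$. For the first piece, write $d_\theta^{\xi_\theta}$ through the resolvent $(I - \gamma P_\theta^{\xi_\theta})^{-1}$ and differentiate using the matrix-inverse rule $\partial_\theta M^{-1} = -M^{-1}(\partial_\theta M)M^{-1}$; the resolvent has operator norm at most $1/(1-\gamma)$, again because $P_\theta^{\xi_\theta}$ is stochastic, and $\partial_\theta P_\theta^{\xi_\theta}$ is bounded since $P_\theta(s'|s)=\sum_a\pi_\theta(a|s)P(s'|s,a)$ is linear in $\theta$ and $\Vert d\xi_\theta/d\theta\Vert_\infty \leq c$ by hypothesis. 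For the second piece I differentiate $h_\theta$ in~\eqref{eqn:mcr_h_theta} term by term: the $\xi_{\theta,s}(V_\theta - \lambda^P_{\theta,s})$ term is handled using $\Vert d\xi_\theta/d\theta\Vert_\infty\leq c$, $\Vert d\lambda_\theta/d\theta\Vert_\infty\leq c'$, and the already-established bound on $\nabla_\theta V_\theta$, while the constraint terms require differentiating $df_i(\xi_\theta,P_\theta)/dP$ and $dg_e(\xi_\theta,P_\theta)/dP$ through both their $\xi_\theta$- and $P_\theta$-dependence, where the $P$-second-derivative bound $M'$ and the kernel derivative $dP_\theta/d\theta$ enter.

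The main obstacle is this last step: controlling $\partial_\theta[df_i/dP]$ and $\partial_\theta[dg_e/dP]$, since the chain rule produces not only the assumed second derivative $d^2 f_i/dP^2$ but also mixed terms coupling the $\theta$-variation of $\xi_\theta$ into the constraint gradient. I would argue that these mixed contributions are bounded using $\Vert d\xi_\theta/d\theta\Vert_\infty\leq c$ together with the affine structure of $g_e$ and the convexity and smoothness of $f_i$ imposed in Assumption~\ref{assum:risk_envelope}; the cleanest route is to observe that each constraint gradient is a bounded, Lipschitz function of $(\xi,P)$ on the compact feasible region, so that composing with the Lipschitz maps $\theta\mapsto\xi_\theta$ and $\theta\mapsto P_\theta$ yields Lipschitz dependence on $\theta$. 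Collecting the bounds on both Hessian pieces then produces a finite $\beta = \beta(c,c',M,M',\gamma,|\Ss|,|\A|)$, which completes the proof.
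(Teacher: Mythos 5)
Your proposal is correct and follows essentially the same route as the paper's proof: both reduce smoothness to a uniform bound on the second derivative of $V_\theta$, exploiting the resolvent $\big(\mathbf{I}-\gamma \widetilde{P}_{\xi}(\alpha)\big)^{-1}$ whose norm is $1/(1-\gamma)$ because the reweighted kernel is stochastic, and then differentiating term by term using the assumed bounds on $d\xi_\theta/d\theta$, $d\lambda_\theta/d\theta$, and the second derivatives of the constraints. The only organizational difference is that the paper bounds the second directional derivative $d^2V_{\theta+\alpha u}/d\alpha^2$ along unit directions after solving the envelope-theorem recursion for $dV_\alpha/d\alpha$ in closed form, whereas you bound the Hessian via the product rule on $d_\theta^{\xi_\theta}(s)\,h_\theta(s,a)$ and use an a priori gradient bound to control the $\nabla_\theta V_\theta(s')$ term arising inside $\partial_\theta h_\theta$.
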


\begin{proof}
Let $\pi_\alpha := \pi_{\theta + \alpha u}$ where $u$ is a unit vector in $\mathbb{R}^d$, and let $V_\alpha$ denote the value function under policy $\pi_\alpha$. 
Note that our value function is 
\begin{align*}
    V_\alpha(s) &= C(s) + \max_{\xi \in \U(P_\alpha)}\sum_{s'}P_\alpha(s'|s)\xi(s')V_\alpha(s')\\
    &= C(s) + \max_{\xi}\min_{\lambda^P, \lambda^{\mathcal{E}}, \lambda^{\mathcal{I}}}L_{\alpha, s}(\xi, \lambda^P, \lambda^{\mathcal{E}}, \lambda^{\mathcal{I}})\\
\end{align*}
and $\forall s \in \Ss$, 
let $(\xi_{\alpha, s}, \lambda_{\alpha, s}^{P}, \lambda_{\alpha, s}^{\mathcal{E}}, \lambda_{\alpha, s}^{\mathcal{I}})$ denote the saddle points of the Lagrangian $L_{\alpha, s}$ under policy $\pi_\alpha$.% To prove the lemma statement, w
We will prove that the second derivative $\frac{d^2V_\alpha}{d^2\alpha}$ is bounded above by the smoothness parameter $\beta$.  
Let $\widetilde{P}(\alpha) \in \mathbb{R}^{|\Ss|\times|\Ss|}$ be the state-state transition matrix under $\pi_\alpha$
\begin{equation*}
    [\widetilde{P}(\alpha)]_{s\rightarrow s'} =  \sum_a \pi_\alpha(a|s) P(s'|s,a)
\end{equation*}
Further, define the $\xi_\alpha$-reweighted transition to be 
\begin{equation*}
    [\widetilde{P}_{\xi}(\alpha)]_{s \rightarrow s'} = [\widetilde{P}(\alpha)\xi_\alpha]_{ s\rightarrow s'} = \xi_{\alpha, s}(s')\sum_a\pi_\alpha(a|s)P(s'|s,a). 
\end{equation*}
For notation simplicity, let 
\begin{equation*}
    y_\alpha(s) = \sum_e \lambda_{\alpha, s}^{\mathcal{E}}(e) \frac{dg_e(\xi_{\alpha, s}, P_\alpha)}{dP} + \sum_i \lambda_{\alpha, s}^{\mathcal{I}}(i) \frac{df_i(\xi_{\alpha, s}, P_\alpha)}{dP}
\end{equation*}
Under our assumptions, we have that 
\begin{enumerate}
    \item $\sum_a\Big|\frac{d\pi_\alpha(a|s)}{d\alpha}\Big|_{\alpha=0}\Big| \leq C_1$
    \item $\sum_a\Big|\frac{d^2\pi_\alpha(a|s)}{d^2\alpha}\Big|_{\alpha=0}\Big| \leq C_2$
    \item $\Vert \xi_{\alpha}\Vert_\infty \leq C_3$ (for $\text{CVaR}_{\alpha}$, $C_3 = 1/\alpha$)
    \item $\Vert \lambda_{\alpha}\Vert_\infty \leq C_4$ (for CVaR, $C_4 = 1$)
    \item  $\Vert \frac{d\xi_{\alpha}}{d\alpha}\Vert_\infty \leq C_5$
    \item  $\Vert \frac{d\lambda_{\alpha}^{*}}{d\alpha}\Vert_\infty \leq C_6$ 
    for all dual solutions $\lambda_\alpha$
    \item $\| \frac{dy_\alpha}{d\alpha}\|_\infty \leq C_7$ as a consequence of 6. and the assumption that the second derivative of the constraints is bounded. 
\end{enumerate}
We have that for an arbitrary vector $x$ that
\begin{align*}
    \displaystyle\max_{\Vert u \Vert_2 = 1}\bigg|\bigg[ \frac{d\widetilde{P}(\alpha)}{d\alpha}x \bigg]_{s}\bigg| &\leq C_1 \Vert x \Vert_\infty \\
     \displaystyle\max_{\Vert u \Vert_2 = 1}\bigg|\bigg[ \frac{d^2\widetilde{P}(\alpha)}{d\alpha^2}x \bigg]_{s}\bigg| &\leq C_2 \Vert x \Vert_\infty 
\end{align*}
We also have 
\begin{align*}
    \displaystyle\max_{\Vert u \Vert_2 = 1}\bigg|\bigg[ \frac{d\widetilde{P}_{\xi}(\alpha)}{d\alpha}x \bigg]_{s}\bigg| &= \max_{\Vert u \Vert_2 = 1} \bigg|\sum_{a,s'}\Big(\frac{d\pi_\alpha(a|s')}{d\alpha}\xi_{\alpha, s}(s') + \pi_\alpha(a|s')\frac{d\xi_{\alpha, s}(s')}{d\alpha} \Big)\Big|_{\alpha=0}P(s'|s,a)x_{a,s'}  \bigg|\\
    &\leq \sum_{a,s} \Big|\frac{d\pi_\alpha(a|s)}{d\alpha}\xi_{\alpha, s}\Big|_{\alpha=0}\Big|P(s'|s,a)|x_{s',a}| + \sum_{a,s} \Big| \pi_\alpha(a|s)\frac{d\xi_{\alpha, s}(s')}{d\alpha}\Big|_{\alpha=0}\Big|P(s'|s,a)|x_{s,a}|\\
    % &\leq \sum_{s'}P(s'|s,a)\Big|\xi_{\alpha, s}(s')\Big|_{\alpha=0}\Big|\Vert x \Vert_\infty\sum_{a}\Big|\frac{d\pi_\alpha(a|s)}{d\alpha}\Big|_{\alpha=0}\Big| \\
    % &\qquad+ \sum_{s'}P(s'|s,a)\Big|\frac{d\xi_{\alpha, s}(s')}{d\alpha}\Big|_{\alpha=0}\Big| \Vert x \Vert_\infty \sum_{a}\pi_\theta(a|s)\\
    &\leq (C_1 C_3 + C_5) \Vert x \Vert_\infty 
\end{align*}
and 
\begin{align*}
    \displaystyle\max_{\Vert u \Vert_2 = 1}\bigg|\bigg[ \frac{d\widetilde{P}(\alpha)}{d\alpha}\frac{d\xi_\alpha}{d\alpha}x \bigg]_{s}\bigg| &= \max_{\Vert u \Vert_2 = 1} \bigg| \sum_{a,s'} \frac{d\pi_\alpha(a|s')}{d\alpha}\frac{d\xi_{\alpha, s}(s')}{d\alpha}\Big|_{\alpha=0}P(s'|s,a)x_{a,s'} \bigg|\\
    &\leq \sum_{a,s}\Big|\frac{d\pi_\alpha(a|s)}{d\alpha}\Big|_{\alpha=0} \Big|\Big|\frac{d\xi_{\alpha, s}(s')}{d\alpha}\Big|_{\alpha=0}\Big|P(s'|s,a)|x_{s,a}\\
    % &\leq \sum_{s'}\Big|\frac{d\xi_{\alpha, s}(s')}{d\alpha}\Big|_{\alpha=0}\Big|P(s'|s,a)\Vert x \Vert_\infty\sum_{a}\Big|\frac{d\pi_\alpha(a|s')}{d\alpha}\Big|_{\alpha=0}\Big|\\
    &\leq C_1 C_5 \Vert x \Vert_\infty .
\end{align*}
We can write $V_\alpha(s_0)$ as
\[
    V_\alpha(s_0) = e^\top_{s_0}\Big(\textbf{I} - \gamma \widetilde{P}_{\xi}(\alpha)\Big)^{-1}C,
\]
where $C \in \R^{|\Ss|}$ is a vector whose $s$-th entry is the cost of state $s$. 
Using the envelope theorem, 
\begin{align*}
    \frac{d}{d\alpha}V_\alpha(s) &= \frac{d}{d\alpha}L_{\alpha, s}(\xi, \lambda)|_{(\xi_{\alpha, s}, \lambda_{\alpha, s})}\\
    &= \gamma  \Big(\sum_{s'}\frac{dP_\alpha(s'|s)}{d\alpha}\xi_{\alpha, s}(s')V_\alpha(s') +  \sum_{s'}P_\alpha(s'|s)\xi_{\alpha, s}(s')\frac{dV_\alpha(s')}{d\alpha} \\
    &\qquad- \lambda_{\alpha, s}^{P}\sum_{s'}\frac{dP_\alpha(s'|s)}{d\alpha}\xi_{\alpha, s}(s') - \sum_{s'}\frac{dP_\alpha(s'|s)}{d\alpha} y_\alpha(s)\Big). 
\end{align*}
Writing this in matrix form and substituting in our previous expression for $V_\alpha$, we get 
\[
    \frac{dV_\alpha(s)}{d\alpha} = \gamma e^\top_{s}\Big(\textbf{I} - \gamma \widetilde{P}_{\xi}(\alpha)\Big)^{-1}\bigg( \frac{d\widetilde{P}(\alpha)}{d\alpha}\xi_\alpha  \Big(\textbf{I} - \gamma \widetilde{P}_{\xi}(\alpha)\Big)^{-1}C - \lambda_{\alpha}^{P}\frac{d\widetilde{P}(\alpha)}{d\alpha}\xi_{\alpha} - \frac{d\widetilde{P}(\alpha)}{d\alpha} y_\alpha \bigg).
\]
Next, we take the second derivative using the chain rule: 
\begin{align*}
    &\frac{d^2V_\alpha(s)}{d\alpha^2} \\
    &= \gamma^2e^\top_{s}\Big(\textbf{I} - \gamma \widetilde{P}_{\xi}(\alpha)\Big)^{-1} \frac{d\widetilde{P}_{\xi}(\alpha)}{d\alpha}\Big(\textbf{I} - \gamma \widetilde{P}_{\xi}(\alpha)\Big)^{-1}\bigg( \frac{d\widetilde{P}(\alpha)}{d\alpha}\xi_\alpha  \Big(\textbf{I} - \gamma \widetilde{P}_{\xi}(\alpha)\Big)^{-1}C - \lambda_{\alpha}^{P}\frac{d\widetilde{P}(\alpha)}{d\alpha}\xi_{\alpha} - \frac{d\widetilde{P}(\alpha)}{d\alpha} y_\alpha \bigg)\\
    &\qquad+ \gamma e^\top_{s}\Big(\textbf{I} - \gamma \widetilde{P}_{\xi}(\alpha)\Big)^{-1}\bigg(  \frac{d^2\widetilde{P}(\alpha)}{d\alpha^2}\xi_\alpha \Big(\textbf{I} - \gamma \widetilde{P}_{\xi}(\alpha)\Big)^{-1}C + \frac{d\widetilde{P}(\alpha)}{d\alpha}\frac{d\xi_\alpha}{d\alpha}\Big(\textbf{I} - \gamma \widetilde{P}_{\xi}(\alpha)\Big)^{-1}C \\
    &\qquad+ \gamma\frac{d\widetilde{P}(\alpha)}{d\alpha}\xi_\alpha \Big(\textbf{I} - \gamma \widetilde{P}_{\xi}(\alpha)\Big)^{-1} \frac{d\widetilde{P}_{\xi}(\alpha)}{d\alpha}\Big(\textbf{I} - \gamma \widetilde{P}_{\xi}(\alpha)\Big)^{-1}C\\
    &\qquad- \frac{d\lambda_{\alpha}^{P}}{d\alpha}\frac{d\widetilde{P}(\alpha)}{d\alpha}\xi_{\alpha} - \lambda_{\alpha}^{P}\Big(\frac{d^2\widetilde{P}(\alpha)}{d\alpha^2}\xi_\alpha +  \frac{d\widetilde{P}(\alpha)}{d\alpha}\frac{d\xi_\alpha}{d\alpha}\Big)\\
    &\qquad- \frac{d^2\widetilde{P}(\alpha)}{d\alpha^2}y_\alpha - \frac{d\widetilde{P}(\alpha)}{d\alpha}\frac{dy_\alpha}{d\alpha}
    \bigg)
\end{align*}
Let $M(\alpha) = \Big(\textbf{I} - \gamma \widetilde{P}_{\xi}(\alpha)\Big)^{-1}$. By using power series expansion of matrix inverse, we can write $M(\alpha)$ as
\[
    M(\alpha) = \sum_{n=0}^\infty \gamma^n\Big(\widetilde{P}(\alpha)\xi_\alpha\Big)^n.
\]
This implies that $M(\alpha) \geq 0$ componentwise and each row of $M(\alpha)$ sums to $\frac{1}{1-\gamma}$, that is $M(\alpha)\textbf{1} = \frac{1}{1-\gamma}\textbf{1}$. This implies 
\[
    \max_{\Vert u \Vert_2 = 1}\Vert M(\alpha) x\Vert_\infty \leq \frac{1}{1-\gamma}\Vert x \Vert_\infty
\]
Then, we have  
\begin{align*}
    \max_{\Vert u \Vert_2 = 1} \bigg| \frac{dV_\alpha(s)}{d\alpha}\Big|_{\alpha = 0} \bigg| &\leq 
    \gamma \Big\Vert M(\alpha)\frac{d\widetilde{P}(\alpha)}{d\alpha}\xi_\alpha M(\alpha) C \Big\Vert_\infty 
    + \gamma \Big\Vert M(\alpha)y_\alpha  \Big\Vert_\infty 
    + \gamma \Big\Vert \lambda_{\alpha}^{P}M(\alpha)\xi_\alpha \Big\Vert_\infty 
    \\
    &\leq \frac{\gamma}{(1-\gamma)^2} C_{max}\Big\Vert \frac{d\widetilde{P}(\alpha)}{d\alpha}\xi_\alpha  \Big\Vert_\infty + \frac{\gamma}{1-\gamma}\Big\Vert y_\alpha  \Big\Vert_\infty + \frac{\gamma}{1-\gamma}C_4\Big\Vert \xi_\alpha \Big\Vert_\infty\\
    \intertext{Note that by Assumption \ref{assum:risk_envelope}, $y_\alpha$ is upper bounded by $\Big(|\mathcal{E}| + |\mathcal{I}|\Big)M C_4$, }
    &\leq \frac{\gamma}{(1-\gamma)^2} C_1C_3C_{max} + \frac{\gamma}{1-\gamma}\Big(|\mathcal{E}| + |\mathcal{I}|\Big)MC_4 + \frac{\gamma}{1-\gamma}C_1C_3.
\end{align*}
Similarly, 
\begin{align*}
    \max_{\Vert u \Vert_2 = 1}\bigg| \frac{d^2V_\alpha(s)}{d\alpha^2}\Big|_{\alpha = 0} \bigg| &\leq \gamma^2 \Big\Vert M(\alpha)\frac{d\widetilde{P}_{\xi}(\alpha)}{d\alpha}M(\alpha)\frac{d\widetilde{P}(\alpha)}{d\alpha}\xi_\alpha M(\alpha) C\Big\Vert_\infty + \gamma^2 \Big\Vert M(\alpha)\frac{d\widetilde{P}_{\xi}(\alpha)}{d\alpha}M(\alpha)y_\alpha \Big\Vert_\infty \\
    &+ \gamma^2\Big\Vert  M(\alpha)\frac{d\widetilde{P}_{\xi}(\alpha)}{d\alpha}M(\alpha)\lambda_{\alpha}^{P}M(\alpha)\xi_\alpha  \Big\Vert_\infty\\
    &+ \gamma \Big\Vert M(\alpha) \frac{d^2\widetilde{P}(\alpha)}{d\alpha^2}\xi_\alpha M(\alpha)C  \Big\Vert_\infty + \gamma \Big\Vert M(\alpha)\frac{d\widetilde{P}(\alpha)}{d\alpha}\frac{d\xi_\alpha}{d\alpha}  M(\alpha)C \Big\Vert_\infty \\
    &+ \gamma^2 \Big\Vert M(\alpha)\frac{d\widetilde{P}(\alpha)}{d\alpha}\xi_\alpha M(\alpha)\frac{d\widetilde{P}_{\xi}(\alpha)}{d\alpha}M(\alpha)C \Big\Vert_\infty\\
    &+ \gamma \Big\Vert M(\alpha)\frac{d\lambda_\alpha^{P}}{d\alpha}\frac{d\widetilde{P}(\alpha)}{d\alpha}\xi_\alpha \Big\Vert_\infty + \gamma \Big\Vert M(\alpha)\lambda_\alpha^{P}\Big(\frac{d^2\widetilde{P}(\alpha)}{d\alpha^2}\xi_\alpha +  \frac{d\widetilde{P}(\alpha)}{d\alpha}\frac{d\xi_\alpha}{d\alpha}\Big)\Big\Vert_\infty \\
    &+ \gamma\Big\Vert M(\alpha) \frac{d^2\widetilde{P}(\alpha)}{d\alpha^2}y_\alpha\Big\Vert_\infty + \gamma \Big\Vert M(\alpha)\frac{d\widetilde{P}(\alpha)}{d\alpha}\frac{df_\alpha}{d\alpha} \Big\Vert_\infty \\
    &\leq 2\frac{\gamma^2}{(1-\gamma)^3}C_1C_4(C_1C_3 + C_5)C_{max} + \frac{\gamma^2}{(1-\gamma)^2}(C_1C_3 + C_5)\Big(|\mathcal{E}| + |\mathcal{I}|\Big)MC_4 \\
    &\qquad+ \frac{\gamma^2}{(1-\gamma)^3}(C_1C_3 + C_5)C_3C_4 + \frac{\gamma}{(1-\gamma)^2}C_{max}\Big(C_2C_3 + C_1C_5\Big) \\
    &\qquad+ \frac{\gamma}{1-\gamma}\Big(C_1C_3C_6 + C_2C_3C_4 + C_1C_5 + C_2\Big(|\mathcal{E}| + |\mathcal{I}|\Big)MC_4 + C_1C_7\Big)\\
    &= \beta.
\end{align*}
Under direct parameterization, $C_1 \leq \sqrt{|\A|}$ and $C_2 = 0$. The result follows using $\beta$ as the smoothness parameter. 
\end{proof}

% \clearpage 

% \subsection{Function Approximation}\label{sec:function_approximation_convergence}

%\clearpage 

\section{Proofs for Section~\ref{SEC:GRADIENT_ESTIMATION}}\label{sec:appendix_gradient_estimation}
\subsection{Proof of Theorem~\ref{THM:CORRECTION}}% 6.1}
\begin{proof}[Proof of Theorem \ref{thm:correction}]
Define the starting state distribution as $d_0(s) = \mathbbm{1}[s = s_0]$. Then given reweighting $\{\xi_s\}_{s\in \Ss}:\Ss \rightarrow \R$ and transitions $P_\theta$ induced by policy $\pi_\theta$, we establish the relation for any $\gamma \in (0, 1]$ that
\allowdisplaybreaks
\begin{align}
    d_\theta^\xi(s') &= (1-\gamma)\sum_{t=0}^\infty \gamma^t P_{\theta}^\xi(s_t=s') \nonumber \\ 
    &= (1-\gamma)d_0(s') + (1-\gamma)\sum_{t=1}^\infty \gamma^t P_{\theta}^\xi(s_t=s'|s_0) \nonumber \\
    &= (1-\gamma)d_0(s') + \gamma(1-\gamma) \sum_{t=0}^\infty\gamma^t P_{\theta}^\xi(s_{t+1}=s') \nonumber \\
    &= (1-\gamma)d_0(s') + \gamma(1-\gamma) \sum_{t=0}^\infty\gamma^t \sum_s P_\theta(s'|s)\xi_{s}(s')P_{\theta}^\xi(s_t=s) \nonumber \\
    &= (1-\gamma)d_0(s') + \gamma \sum_s P_\theta(s'|s)\xi_{s}(s')\Big((1-\gamma)\sum_{t=0}^\infty\gamma^t P_{\theta}^\xi(s_t=s)\Big)\nonumber \\
    &= (1-\gamma)d_0(s') + \gamma \sum_s P_\theta(s'|s)\xi_{s}(s')d_\theta^\xi(s)\label{eq:d_xi_relation}
\end{align}
Using the same lines of reasoning, for $\gamma \in (0, 1]$
\begin{align}
    d_\theta(s') &= (1-\gamma)d_0(s') + \gamma \sum_s P_\theta(s'|s)d_\theta(s)\label{eq:d_relation}
\end{align}
With this, $d_\theta$ can be seen as the invariant distribution of an induced Markov chain which follows $P_\theta$ with probability $\gamma$ and restarts from the initial state $s_0$ with probability $1-\gamma$. Similarly, $d_\theta^\xi$ can be seen as the invariant distribution of an induced Markov chain which follows $P_\theta\xi$ with probability $\gamma$ and restarts from the initial state $s_0$ with probability $1-\gamma$.\\
\\
%Substituting in $w_\xi(s) := d_\theta^\xi(s)/d_\theta(s)$, 
Multiplying both sides of~\eqref{eq:d_relation} by $f(s')$ and summing over $s'$, we have that 
\begin{equation}\label{eq:d_f_relation}
    \sum_{s'}d_\theta(s')f(s') = (1-\gamma)\sum_{s'}d_0(s')f(s') + \gamma \sum_{s, s'} P_\theta(s'|s)d_\theta(s)f(s'). 
\end{equation}
Similarly, 
\begin{align}
    \sum_{s'}d^\xi_\theta(s')f(s') &= (1-\gamma)\sum_{s'}d_0(s')f(s') + \gamma \sum_{s, s'} P_\theta(s'|s)\xi_s(s')d^\xi_\theta(s)f(s') \nonumber\\ 
    \sum_{s'}w_\xi(s')d_\theta(s')f(s') &= (1-\gamma)\sum_{s'}d_0(s')f(s') + \gamma \sum_{s, s'} P_\theta(s'|s)w_\xi(s')d_\theta(s)f(s')\label{eq:dxi_f_relation}
\end{align}
where the last line uses the definition $w_\xi(s) := d_\theta^\xi(s) / d_\theta(s)$. 
The relation in~\eqref{eq:d_f_relation} is equivalent to 
\begin{align}
    0 &= \E_{(s,s') \sim d_\theta}[\gamma f(s') - f(s)] + (1-\gamma)\E_{s \sim d_0}[f(s)]
    \intertext{Then using $w(s)f(s)$ for $f(s)$, }
    &= \E_{(s,s') \sim d_\theta}[\gamma w(s')f(s') - w(s)f(s)] + (1-\gamma)\E_{s \sim d_0}[w(s)f(s)]. \label{eq:plug}
\end{align}
Plugging~\eqref{eq:plug} into~\eqref{eqn:correction}, 
\begin{align}
    % L(w,f) &= \gamma\E_{(s,s') \sim d_\theta}[(\xi_s(s')w(s) - w(s'))f(s')] - \E_{s' \sim d_\theta}[w(s')f(s')] + (1-\gamma)\E_{s \sim d_0}[(1-w(s))f(s)] \label{eq:plug2}
    L(w,f) &= \gamma\E_{(s,s') \sim d_\theta}[\xi_s(s')w(s)f(s')] - \E_{s \sim d_\theta}[w(s))f(s)] + (1-\gamma)\E_{s \sim d_0}[f(s)] \label{eq:plug2}
\end{align}
Thus when $L(w,f) = 0$, \eqref{eq:plug2} is equivalent to~\eqref{eq:d_f_relation}, which means that $w(s) = w_\xi(s)$.
% \begin{equation}\label{eq:ssd_relation}
%     \sum_{s'}w_\xi(s')d_\theta(s') = \sum_{s'}d_0(s') + \gamma  \sum_{s, s'} P_\theta(s'|s)\xi_{s}(s')w_\xi(s)d_\theta(s). 
% \end{equation}
% Rearranging the definition from~\eqref{eqn:correction}, we have
% \begin{align*}
%      0 &= \gamma\E_{s|s' \sim d_\theta}\Big[w(s)\xi_s(s') - w(s')\Big] + \E_{s \sim d_0}\Big[1 - \frac{w(s)}{1-\gamma}\Big] \\
%      \intertext{Taking the expectation of both sides over $s' \sim d_\theta$, }
%      0 &= \gamma\E_{s, s' \sim d_\theta}\Big[w(s)\xi_s(s') - w(s')\Big] + \E_{s \sim d_\theta}\Big[1 - \frac{w(s)}{1-\gamma}\Big] \\
%      &= \gamma\E_{s, s' \sim d_\theta}\Big[w(s)\xi_s(s')\Big] - \E_{s \sim d_\theta}\Big[w(s)\Big] + \sum_{s'}d_0(s')
% \end{align*}
% where $s, s' \sim d_\theta$ refers to the joint distribution $d_\theta(s, s') = P_\theta(s'|s)d_\theta(s)$. 
% Rearranging this expression gives us~\eqref{eq:ssd_relation}. Thus, $w(s) = w_\xi(s)$ if and only if Theorem \ref{thm:correction} is satisfied. 
\end{proof}

% For a given $s'$, it may be difficult to observe multiple $s$ which result in $s'$. We can instead multiply \eqref{eqn:correction} by a function $f(s')$ and average under $s' \sim d$: 
% \begin{align*}
%     L(w,f) &:= \E_{(s,s') \sim d}\Big[ \Delta(w;s,s')f(s') \Big] \\
%     &= \E_{(s,s') \sim d}\Big[\Big(w(s)\xi(s|s') - w(s')\Big)f(s') \Big]
% \end{align*}
% As in Theorem 1, we have $w \propto w_\xi$ if and only if $L(w,f) = 0$ for any function $f$. We can thus estimate $w_\xi$ with the min-max problem: 
% \begin{equation}
%     \min_w \{ D(w) := \max_{f \in \mathcal{F}}L(w/z_n, f)^2\}
% \end{equation}
% where $\F$ is a set of discriminator functions and $z_w := \E_{s \sim d}[w(s)]$ normalizes $w$ to avoid the trivial solution $w = 0$. In practice, we can approximate $\F$ using an RBF kernel, which for discrete state spaces corresponds to the delta kernel. 

\subsection{Proof of Theorem~\ref{THM:STATIONARITY} }%6.2}
\label{proof:algo_stationary}
First, we state and prove an upper bound on the magnitude of gradients using the bias and variance of a gradient oracle in Theorem \ref{thm:thm_4}. 
Suppose we have a function $f:\R^d\rightarrow\R$ which is differentiable, $L$-Lipschitz, and $\beta$-smooth, with an infimum at $f_*$. Suppose have access to a noisy gradient oracle which returns a vector $\widehat{\nabla}_\theta f(x) \in \R^d$ given a query point $x$. The vector is said to be $\sigma,B$-accurate for parameters $\sigma, B \geq 0$ if for all $x \in \R^d$, the quantity $\delta(x) := \widehat{\nabla}_\theta f(x) - \nabla_\theta f(x)$ satisfies 
\begin{equation}\label{eq:condition}
    \Vert \E[\delta(x)|x]\Vert \leq B \quad\text{and}\quad  \E[\Vert \delta(x)\Vert^2|x] \leq 2(\sigma^2 + B^2)
\end{equation}
Then we have the following guarantee for convergence to a stationary point under stochastic gradient descent with the update $x \leftarrow x - \eta \widehat{\nabla}_\theta f(x)$: 
% \begin{theorem}[\citet{liu2020off}, Theorem 4. \kamyar{why we cite this paper?}]\label{thm:thm_4}
% Suppose $f$ is differentiable and $\beta$-smooth, and the approximate gradient oracle satisfies~\eqref{eq:condition} with parameters $(\sigma_k, B_k)$ at iteration $k$. Then after $K$ iterations, stochastic gradient descent with initial point $x_1$ and stepsize $\eta = 1/\beta$ satisfies 
% \begin{equation}\label{eq:sgd_stationarity}
%     \frac{1}{K}\sum_{k=1}^K\E[\Vert \nabla f(x_k) \Vert^2] \leq \frac{2}{K}\Big(f(x_1) - f_*\Big) + \frac{2}{\beta K}\sum_{k=1}^K\Big(\sigma_k^2 + B_k^2\Big). 
% \end{equation}
% \end{theorem}
\begin{theorem}\label{thm:thm_4}
Suppose $f$ is differentiable, $L$-Lipschitz, and $\beta$-smooth, and the approximate gradient oracle satisfies~\eqref{eq:condition} with parameters $(\sigma_k, B_k)$ for all iterations $k$. Then after $K$ iterations, stochastic gradient descent with initial point $x_1$ and stepsize $\eta = \frac{1}{\beta \sqrt{K}}$ satisfies 
% change 
\begin{equation}\label{eq:sgd_stationarity}
    % \frac{1}{K}\sum_{k=1}^K\E[\Vert \nabla f(x_k) \Vert^2] \leq \frac{2}{K}\Big(f(x_1) - f_*\Big) + \frac{2}{\beta K}\sum_{k=1}^K\Big(\sigma_k^2 + B_k^2\Big). 
    \frac{1}{K} \sum_{k=1}^K \E[ \Vert \nabla f(x_k) \Vert^2 ] \leq  \frac{2\beta}{\sqrt{K}}(f(x_1) - f_*) + \sum_{k=1}^K \frac{2}{K\sqrt{K}} (\sigma_k^2 + B_k^2) + \frac{2L}{K}B_k 
\end{equation}
Additionally, suppose there exists constants $B, \sigma$ such that $B_k \leq B$ and $\sigma_k \leq \sigma$ for all $k$. Then 
\begin{equation}
    \frac{1}{K} \sum_{k=1}^K \E[ \Vert \nabla f(x_k) \Vert^2 ] \leq \frac{2\beta}{\sqrt{K}}(f(x_1) - f_*) + \frac{2}{\sqrt{K}}(\sigma^2 + B^2) + 2LB . 
\end{equation}
\end{theorem}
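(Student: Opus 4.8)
The plan is to run the classical descent argument for a $\beta$-smooth objective, carefully tracking how the oracle's bias $B_k$ and second moment enter at each iteration. First I would apply the smoothness inequality to consecutive iterates,
\begin{equation*}
f(x_{k+1}) \leq f(x_k) + \langle \nabla f(x_k), x_{k+1}-x_k\rangle + \frac{\beta}{2}\Vert x_{k+1}-x_k\Vert^2,
\end{equation*}
and substitute the update $x_{k+1}-x_k = -\eta\widehat{\nabla}f(x_k) = -\eta\big(\nabla f(x_k)+\delta_k\big)$, writing $g_k := \nabla f(x_k)$ and $\delta_k := \delta(x_k)$. Taking the conditional expectation $\E[\,\cdot\mid x_k\,]$, setting $b_k := \E[\delta_k\mid x_k]$, and expanding $\E[\Vert g_k+\delta_k\Vert^2\mid x_k] = \Vert g_k\Vert^2 + 2\langle g_k, b_k\rangle + \E[\Vert\delta_k\Vert^2\mid x_k]$ produces
\begin{equation*}
\E[f(x_{k+1})\mid x_k] \leq f(x_k) - \eta\Big(1-\tfrac{\beta\eta}{2}\Big)\Vert g_k\Vert^2 - \eta(1-\beta\eta)\langle g_k, b_k\rangle + \frac{\beta\eta^2}{2}\E[\Vert\delta_k\Vert^2\mid x_k].
\end{equation*}

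The step I expect to require the most care is controlling the cross term $-\eta(1-\beta\eta)\langle g_k,b_k\rangle$ so the final constants match the statement rather than leaving a spurious residual. The trick is to combine the linear cross term from $\langle g_k, x_{k+1}-x_k\rangle$ with the one arising inside the quadratic \emph{before} bounding, so the coefficient is exactly $1-\beta\eta$. The stepsize $\eta = \frac{1}{\beta\sqrt{K}}$ gives $\beta\eta = \frac{1}{\sqrt{K}}\leq 1$, which simultaneously makes $1-\beta\eta\geq 0$ and $1-\tfrac{\beta\eta}{2}\geq \tfrac12$. The first of these lets me bound the full cross term by $\eta L B_k$ via $-\langle g_k,b_k\rangle \leq \Vert g_k\Vert\,\Vert b_k\Vert \leq L B_k$, using $L$-Lipschitzness ($\Vert g_k\Vert\leq L$) and the bias bound $\Vert b_k\Vert\leq B_k$; the second keeps the descent coefficient on $\Vert g_k\Vert^2$ bounded below by $\eta/2$. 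Applying the variance bound $\E[\Vert\delta_k\Vert^2\mid x_k]\leq 2(\sigma_k^2+B_k^2)$ to the last term gives $\beta\eta^2(\sigma_k^2+B_k^2)$.

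Rearranging to isolate $\frac{\eta}{2}\Vert g_k\Vert^2$, taking full expectations, summing over $k=1,\dots,K$ and telescoping the $f(x_k)-\E[f(x_{k+1})]$ differences (bounding the terminal value below by $f_*$) yields
\begin{equation*}
\frac{\eta}{2}\sum_{k=1}^K \E[\Vert g_k\Vert^2] \leq f(x_1)-f_* + \sum_{k=1}^K\Big(\eta L B_k + \beta\eta^2(\sigma_k^2+B_k^2)\Big).
\end{equation*}
Multiplying through by $\frac{2}{\eta K}$ and substituting $\eta=\frac{1}{\beta\sqrt{K}}$ turns the telescoped term into $\frac{2\beta}{\sqrt{K}}(f(x_1)-f_*)$, the bias term into $\frac{2L}{K}B_k$, and the variance term into $\frac{2}{K\sqrt{K}}(\sigma_k^2+B_k^2)$ (both summed), matching \eqref{eq:sgd_stationarity}. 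The second inequality then follows by substituting the uniform bounds $B_k\leq B$ and $\sigma_k\leq\sigma$, using $\sum_{k=1}^K B_k\leq KB$ and $\sum_{k=1}^K(\sigma_k^2+B_k^2)\leq K(\sigma^2+B^2)$.
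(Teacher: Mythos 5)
Your proposal is correct and follows essentially the same route as the paper's proof: apply the $\beta$-smoothness inequality to consecutive iterates, expand the squared update, bound the cross term by $\eta L B_k$ via Cauchy--Schwarz with the Lipschitz and bias bounds, bound the second moment of $\delta_k$ by the oracle condition, then telescope and substitute $\eta = \frac{1}{\beta\sqrt{K}}$. The only cosmetic difference is that you take conditional expectations before the full expectation and absorb the factor $1-\beta\eta \le 1$ slightly earlier; the constants match exactly.
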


\begin{proof}
Since $f$ is $\beta$-smooth, we have 
\begin{align*}
    f(x_{k+1}) &\leq f(x_k) + \langle \nabla f(x_k), x_{k+1} - x_k \rangle + \frac{\beta}{2}\Vert x_{k+1} - x_k \Vert^2 \\ 
    &= f(x_k) -  \eta\langle \nabla f(x_k), \widehat{\nabla} f(x_k) \rangle + \frac{\beta\eta^2}{2}\Vert \widehat{\nabla} f(x_k) \Vert^2 \\ &= f(x_k) - \eta\langle \nabla f(x_k),\delta(x_k) + \nabla f(x_k) \rangle + \frac{\beta\eta^2}{2}\Vert \delta(x_k) + \nabla f(x_k) \Vert^2 \\
    &= f(x_k) + \Big(\frac{\beta\eta^2}{2} - \eta\Big)\Vert \nabla f(x_k) \Vert^2 - (\eta - \beta \eta^2) \langle \nabla f(x_k), \delta(x_k)\rangle + \frac{\beta \eta^2}{2} \Vert \delta(x_k)\Vert^2
\end{align*}
where the second step follows from definition of $\widehat{\nabla} f(x)$ and the third step uses the definition of $\delta(x)$. Taking the expectations of both sides and using the properties of~\eqref{eq:condition}, 
\begin{align*}
    \E[f(x_{k+1})] \leq \E[f(x_k)] + \Big(\frac{\beta\eta^2}{2} - \eta\Big)\E[\Vert \nabla f(x_k) \Vert^2] + (\eta - \beta \eta^2)LB_k + \beta\eta^2(\sigma_k^2 + B_k^2), 
\end{align*}
Then summing over iterations $k=1,...K$, 
\begin{align*}
    \E[f(x_{K+1})] \leq f(x_1) + \Big(\frac{\beta\eta^2}{2} - \eta\Big) \sum_{k=1}^K \E[ \Vert \nabla f(x_k) \Vert^2 ] + \sum_{k=1}^K(\eta - \beta \eta^2)LB_k + \beta\eta^2(\sigma_k^2 + B_k^2),
\end{align*}
Using the fact that $f(x_{K+1}) \geq f_*$ and rearranging, 
\begin{align*}
     \Big(\eta - \frac{\beta\eta^2}{2}\Big)\sum_{k=1}^K \E[ \Vert \nabla f(x_k) \Vert^2 ]  &\leq f(x_1) - f_* + \sum_{k=1}^K(\eta - \beta \eta^2)LB_k + \beta\eta^2(\sigma_k^2 + B_k^2),
\end{align*}
Now choosing $\eta =\frac{1}{\sqrt{K}\beta}$, note that $\eta - \beta \eta^2  \leq \frac{1}{\sqrt{K}\beta}$ and $\eta - \frac{1}{2}\beta \eta^2 \geq \frac{1}{2\sqrt{K}\beta}$. Plugging these inequalities in, 
\begin{align*}
    \frac{1}{2\sqrt{K}\beta} \sum_{k=1}^K \E[ \Vert \nabla f(x_k) \Vert^2 ]  &\leq f(x_1) - f_* + \sum_{k=1}^K\frac{L}{\sqrt{K}\beta}B_k + \frac{1}{\beta K}(\sigma_k^2 + B_k^2) \\
    \intertext{Rearranging, }
    \frac{1}{K} \sum_{k=1}^K \E[ \Vert \nabla f(x_k) \Vert^2 ] &\leq  \frac{2\beta}{\sqrt{K}}(f(x_1) - f_*) + \sum_{k=1}^K \frac{2L}{K}B_k + \frac{2}{K\sqrt{K}} (\sigma_k^2 + B_k^2)
\end{align*}
giving the first statement in the theorem. To achieve the second, use the upper bound $B \geq B_k$ and $\sigma \geq \sigma_k$ for all $k$: 
\begin{align*}
    \frac{1}{K} \sum_{k=1}^K \E[ \Vert \nabla f(x_k) \Vert^2 ] &\leq \frac{2\beta}{\sqrt{K}}(f(x_1) - f_*) + \frac{2}{\sqrt{K}}(\sigma^2 + B^2) + 2LB 
\end{align*}
\end{proof}
Next, we use Theorem \ref{thm:thm_4} to prove our convergence result in Theorem \ref{thm:stationarity}: 
\begin{proof}[Proof of Theorem \ref{thm:stationarity}]
We determine $\sigma_k, B_k$ for Algorithm \ref{algo:ssd}. For short, first define $p_\theta(s,a) = \frac{\partial \log \pi_\theta(a|s)}{\partial \theta}$. Then the true gradient $\nabla_\theta V$ is given by 
\begin{equation*}
    \nabla_\theta V = \E_{s,a \sim d_\theta^{\xi_\theta}}\Big[p_\theta(s,a)h_\theta(s,a)\Big] = \E_{d_\theta}\Big[w(s)p_\theta(s,a)h_\theta(s,a)\Big]
\end{equation*}
and the estimated gradient is $\widehat{\nabla}_\theta V$: 
\begin{equation*}
    \widehat{\nabla}_\theta V = \hat{w}(s)p_\theta(s,a)\hat{h}_\theta(s,a)
\end{equation*}
We can bound the bias as 
\begin{align*}
    \Vert \E[\widehat{\nabla}_\theta V - \nabla_\theta V | \theta]\Vert &= \Big\Vert \E_{d_\theta}\Big[w(s)p_\theta(s,a)h_\theta(s,a) -  \hat{w}(s)p_\theta(s,a)\hat{h}_\theta(s,a)\Big] \Big\Vert \\ 
    &\leq \E_{d_\theta}\Big[\Vert (w - \hat{w})p_\theta h_\theta \Vert\Big] + \E_{d_\theta}\Big[\Vert\hat{w}p_\theta \Big(h - h_\theta \Big)\Vert\Big] \\
    &\leq \E_{d_\theta}\Big[|w - \hat{w}|\Vert p_\theta \Vert |h_\theta| \Vert\Big] + \E_{d_\theta}\Big[\hat{w} \Vert p_\theta\Vert |h - h_\theta |\Big] \\ 
    &\leq G\epsilon_w \sqrt{\E_{d_\theta}[h_\theta^2]} + G\epsilon_h\sqrt{\E_{d_\theta}[\hat{w}^2]}
\end{align*}
where the last two inequalities follow from Cauchy-Schwarz, and the last inequality uses Assumption \ref{assum:stationary}. We can upper bound the last term as 
\begin{align*}
    \E_{d_\theta}[\hat{w}^2] &\leq \E_{d_\theta}[w^2] + \E_{d_\theta}[(w - \hat{w}))^2] \leq \sigma_w^2 + \epsilon_w^2
\end{align*}
which gives us 
\begin{equation*}
    \Vert \E[\widehat{\nabla}_\theta V - \nabla_\theta V | \theta]\Vert \leq GC_{max}\epsilon_w + G\epsilon_h\sqrt{\sigma_w^2 + \epsilon_w^2}
\end{equation*}
Similarly, the variance is bounded as 
\begin{align*}
     \E[\Vert\widehat{\nabla}_\theta V - \nabla_\theta V\Vert^2 | \theta] &\leq 2\E_{d_\theta}\Big[\Vert (w - \hat{w})p_\theta h_\theta \Vert^2\Big] + \E_{d_\theta}\Big[\Vert\hat{w}p_\theta \Big(h - h_\theta \Big)\Vert^2\Big] \\
     &\leq 2G^2C_{max}^2\epsilon_w^2 + 2 G^2\epsilon_h^2(\sigma_w^2 + \epsilon_w^2)
\end{align*}
Plugging the bias and variance into Theorem \ref{thm:thm_4} gives the result. 
\end{proof}

\subsection{Experiment Details}\label{appendix:hyperparams}
\paragraph{Hyperparameters}
We use a separate neural network for the policy and critic, both optimized using the Adam optimizer. For solving the optimization problems \eqref{eqn:mcr_value_lagrang} and \eqref{eqn:correction}, we use CVXPY \citep{diamond2016cvxpy,agrawal2018rewriting}. For our experiments, we used the hyperparameters in Table \ref{table:cliffwalk_params}. 
\begin{table}[!htbp]
    \centering
    \begin{tabular}{ll}
        \textbf{Hyperparameters} &  \\ 
        \toprule 
        $\gamma$ & 1.0 \\ 
        learning rate (actor) & 0.001 \\
        learning rate (critic) & 0.001 \\
        batch size (actor) & all \\
        batch size (critic) & 512 \\
        number of iterations (actor) & 1 \\
        number of iterations (critic) & 10 \\
        number of trajectories & 200\\
        \bottomrule
    \end{tabular}
    \caption{Hyperparameters for CliffWalk experiments.}
    \label{table:cliffwalk_params}
\end{table}

\paragraph{Learned State Distribution Correction. }
The learned state distribution correction $w(s)$ for the $\alpha=0.25$ policy in the stochastic Cliffwalk from Figure~\ref{fig:cliffwalk} is shown in Figure~\ref{fig:learned_w} for each of the $4\times 12$ Cliffwalk states. Initially, high state correction weights are given to states with high $\xi$ weights (Figure~\ref{fig:learned_xi}). After training converges to the deterministic policy within 5K episodes, the agent takes only a single path along the topmost squares of the Cliffwalk. As a result, each of the states along this path has equal weight $w(s) = 1/(\text{length of path})$, and the interior states that the policy does not travel to have $w(s) = 0$. 

\begin{figure}[!htp]
\hspace*{-2cm}    
    \centering
    \includegraphics[width=1.25\linewidth]{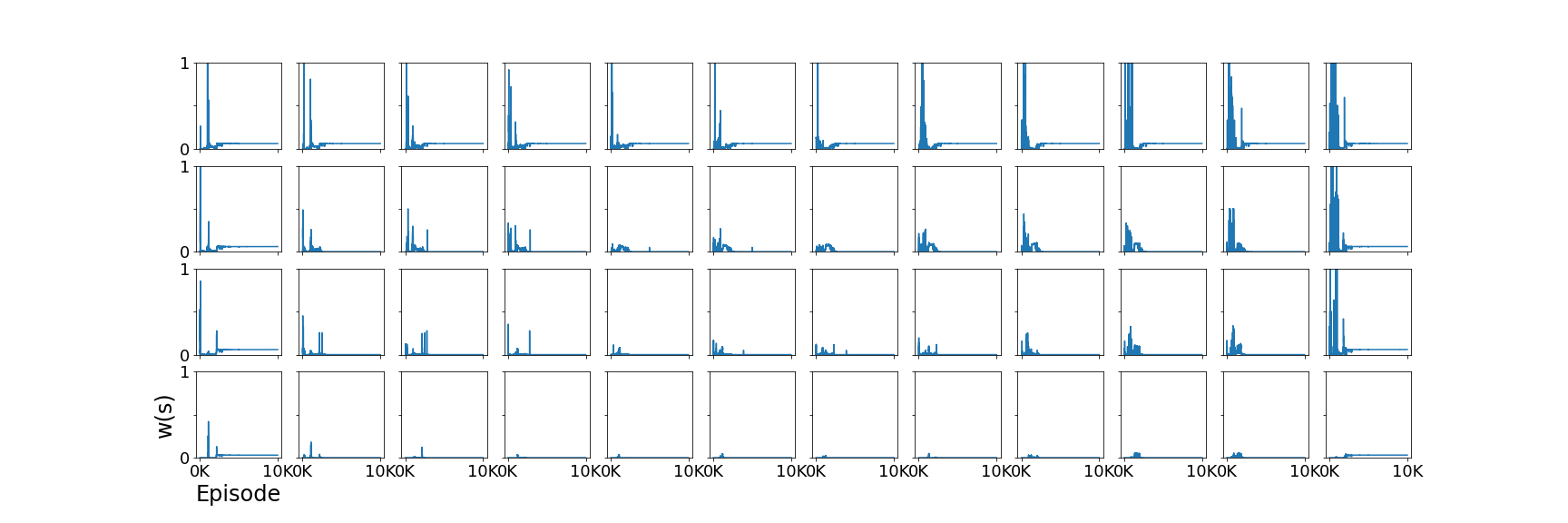}
    \caption{Learned state distribution correction $w(s)$ over 10k training episodes from the $\alpha = 0.25$ policy in the stochastic Cliffwalk in Figure~\ref{fig:cliffwalk}. Each square corresponds to one state in the grid, e.g. the start state is on the bottom left.  }
    \label{fig:learned_w}
\end{figure}

%\clearpage
\paragraph{Learned Lagrangian Variables.}
The $\xi_s(s')$ for the $\alpha=0.25$ policy in the stochastic Cliffwalk from Figure~\ref{fig:cliffwalk} is shown in Figure~\ref{fig:learned_xi} for each of the $4\times 12$ Cliffwalk states. For the start state (bottom left grid), initially $\xi_{s_0}(s_\text{cliff})$ corresponding to walking right and falling into the cliff is high. 
After the policy learns a deterministic path around 3-4K episodes, the $\xi_s(s')$ corresponding to the deterministic transition $s'$ such that $P_\theta(s'|s) = 1$ for each $s$ must also be 1 due to the constraint that $E_{P_\theta}[\xi] = 1$. For $s'$ where $P_\theta(s'|s) = 0$, the $\xi_s(s')$ are free variables and do not affect the value.  

\begin{figure}[!htp]
\hspace*{-2cm}    
    \centering
    \includegraphics[width=1.25\linewidth]{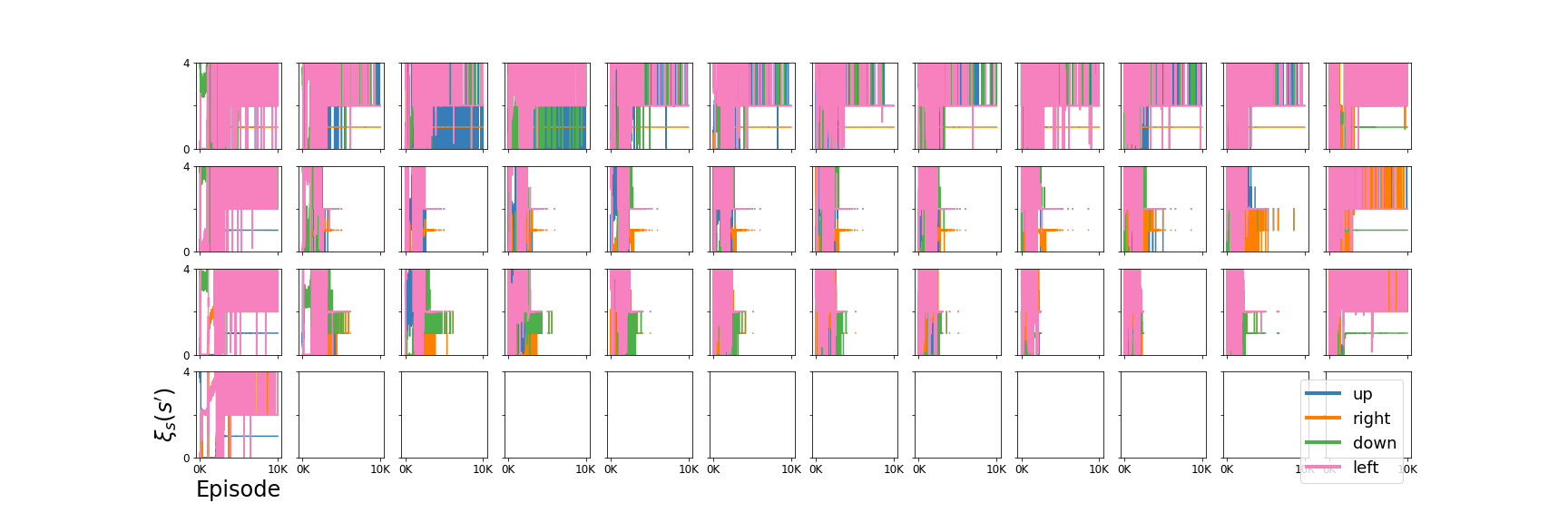}
    \caption{Learned state distribution correction $\xi_s(s')$ over 10k training episodes from the $\alpha = 0.25$ policy in the stochastic Cliffwalk in Figure~\ref{fig:cliffwalk}. Each square corresponds to one state in the grid, e.g. the start state is on the bottom left. For each state $s$, the color of the line corresponds to next state $s'$ resulting from each of the 4 actions taken (see legend). }
    \label{fig:learned_xi}
\end{figure}
%\clearpage

\subsection{Function Approximation for \texorpdfstring{$w, \xi, \lambda$}{w,x,l}}\label{sec:ssd_function}
Finally, in Algorithm \ref{algo:ssd_function} we present a method for learning $w, \xi, \lambda$ in the function approximation setting, such as large or continuous state spaces, where the use of convex solvers may become difficult. 
In such cases, $w$ and $\xi, \lambda$ can be learned using neural networks for a total of five networks--the actor, critic, $w$, $\xi$, and $\lambda$. For $\cvara$, for example, only $\lambda^P$ needs to be learned since the constraint that $\xi \in [0, \frac{1}{\alpha}]$ can naturally be implemented by using a Softmax activation in the final layer of the $\xi$ network, and multiplying the output by $\frac{1}{\alpha}$. The $\lambda^P$ network controls the constraint that $\E_{P_\theta}[\xi] = 1$. \\
\\
The $w$ network seeks to minimize $L(w,f)$ in Theorem~\ref{thm:correction}, an algorithm for which is given in Algorithm 2 of \cite{liu2018breaking}. In practice, for discrete state spaces the delta kernel can be used for $f$, and in continuous state spaces an RBF kernel with radius set to be the median of distances between states in the batch can be used for $f$ \citep{liu2020off}. The $\xi, \lambda$ networks seek to maximize and minimize the MCR objective~\eqref{eqn:mcr_gradient}, respectively. The gradients of the MCR objective with respect to $\xi$ and $\lambda$ are given in Lemma \ref{lem:lagrang_gradient}. 
We recommend that a higher learning rate is using for the $\lambda$ network and a lower learning rate is used for the $\xi$ network. 
The Lagrangian can be cast as a two-player zero sum game with a bilinear term between $\xi$ and $\lambda$, and min-max gradient descent for such objectives has been proven to converge only with finite timescale separation \citep{fiez2020gradient}. 
We do not include experiments as it is beyond the scope of our paper, but hope that algorithms such as Algorithm \ref{algo:ssd_function} can serve as a basis for future work in scaling coherent risk policy gradient algorithms to larger or continuous state spaces. 
\begin{algorithm}[!htb]
\SetAlgoLined
\KwIn{Risk envelope $\U$, learning rates $\{\eta_w, \eta_c, \eta_\xi, \eta_\lambda, \eta \}$, batch sizes $\{B_w, B_c, B_l\}$, updates $\{N_w, N_c, N_l\}$. }
\textbf{Initialize: } Policy network $\theta$, critic network $\theta_c$, $w$ network $\theta_w$, $\xi$ network $\phi$, $\lambda$ network $\psi$\; 
\For{episode $k=1...K$ }{
Generate $N$ trajectories $\tau_k = \{s_0^{(n)}, a_0^{(n)}, c_0^{(n)}, ...\}_{n = 1}^N$ using $\pi_{\theta}$\; 
Pad $\tau_k$ if necessary\;
    \For{state ratio updates $i=1...N_w$}{
        Sample a minibatch $B_w \sim \tau_k$\; 
        Perform one update to $\theta_w$ according to Algorithm~\ref{algo:w_update} with stepsize $\eta_w$\; 
    }
    \For{critic updates $i=1...N_c$}{
        Sample a minibatch $B_c \sim \tau_k$\; 
        Calculate $l_{\theta_c} = \frac{1}{|B_c|}\sum_{(s,s') \sim B_c}\bigg(V_{\theta_c}(s) - \Big(C(s) + L_{\theta, s}(\xi_\phi, \lambda_{\psi})\Big)\bigg)^2$ with $L_{\theta,s}$ from~\eqref{eqn:mcr_value_lagrang}\;
        $\theta_c \leftarrow \theta_c + \eta_c \frac{\partial l_{\theta_c}}{\partial \theta_c}$\;
    }
    \For{$\xi, \lambda$ updates $i=1...N_l$}{
        Sample a minibatch $B_l \sim \tau_k$\; 
        Calculate gradients $\frac{\partial V}{\partial \theta_\xi}$ and $\frac{\partial V}{\partial \theta_\xi}$ according to Lemma \ref{lem:lagrang_gradient}\;
        $\theta_\xi \leftarrow \theta_\xi + \eta_\xi  \frac{\partial V}{\partial \theta_\xi}$\;
        $\theta_\lambda \leftarrow \theta_\lambda + \eta_\lambda  \frac{\partial V}{\partial \theta_\lambda}$\;
    }
    Compute $\widehat h(s,a)$ in \eqref{eqn:mcr_h_theta}\;
    Compute $\nabla_\theta V_\theta = \sum_{s,a,s' \sim \tau_k} w_{\theta_w}(s) \nabla_\theta \log \pi_{\theta_k}(a|s)\widehat{h}(s,a)$\;
    Update $\theta \leftarrow \theta - \eta \nabla_\theta V_\theta$\;
    }
\caption{Stationary State Distribution Reweighting Actor-Critic with Function Approximation }\label{algo:ssd_function}
\end{algorithm}

\clearpage
\begin{algorithm}[!htb]
\SetAlgoLined
\KwIn{$w$ network parameters $\theta_w$, learning rate $\eta_w$, batch of transitions $B_w$, weightings $\xi$, discount factor $\gamma \in (0, 1]$, kernel $k$. }
    \textbf{Compute} $\widehat{L}(w) = \frac{1}{|B_w|}\sum_{i, j \in B_w} \Delta(w; s_i, s_i')\Delta(w; s_j, s_j')k(s_i', s_j')$\; 
    where $\Delta(w, s, s') = w(s)\xi_s(s') - w(s')$.\;
    \textbf{Update} the parameter by $\theta_w \leftarrow \theta_w - \eta_w \nabla_\theta \widehat{L}(w_{\theta_w} / z_{w_{\theta_w}})$\;
    where $z_{w_{\theta_w}}$ is a normalization constant $z_{w_{\theta_w}} = \frac{1}{|B_w|}\sum_{i \in B_w}w_{\theta_w}(s_i)$. \;
\caption{Single $w$ Update (adapted from Algorithm 2 in \citet{liu2018breaking})}
\label{algo:w_update}
\end{algorithm}

\begin{lemma}[Gradient of MCR w.r.t $\xi, \lambda$. ]\label{lem:lagrang_gradient}
Let the policy network be parameterized by $\theta$, the $\xi$-network parameterized by $\phi$, and the $\lambda$-network be parameterized by $\psi$. Then the gradient of the objective $V_{\theta,\phi,\psi}$ with respect to $\phi$ is 
\begin{equation}
    \nabla_\phi V_{\theta,\phi,\psi}(s) = \E_{\xi_\phi}\bigg[\sum_{t=0}^\infty \gamma^t h_\phi(s_t) |s_0 = s; \pi_\theta \bigg]
\end{equation}
where 
\begin{equation}
    h_\phi(s) = \gamma \sum_{s'}P_\theta(s'|s)\nabla_\phi \xi_\phi(s'|s)\bigg(V_{\theta,\phi,\psi}(s') - \lambda_\psi(p|s) - \sum_e \lambda_\psi(e|s)\frac{dg_e(\xi_\phi, P_\theta)}{d\xi(s')} - \sum_i \lambda_\psi(i|s)\frac{df_i(\xi_\phi, P_\theta)}{d\xi(s')}\bigg)
\end{equation}
The gradient with respect to $\psi$ is 
\begin{equation}
    \nabla_\psi V_{\theta,\phi,\psi}(s) = \E_{\xi_\phi}\bigg[\sum_{t=0}^\infty \gamma^t h_\psi(s_t) |s_0 = s; \pi_\theta \bigg]
\end{equation}
where 
\begin{equation}
    h_\psi(s) = -\gamma \bigg(\nabla_\psi \lambda_\psi(p|s)\Big(\sum_{s_1}P_\theta(s'|s)\xi_\psi(s'|s) - 1 \Big) + \sum_e \nabla_\psi \lambda_\psi(e|s) g_e(\xi_\phi, P_\theta) + \sum_i \nabla_\psi \lambda_\psi(i|s) f_i(\xi_\phi, P_\theta)\bigg)
\end{equation}
\end{lemma}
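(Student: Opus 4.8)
The plan is to treat the value $V_{\theta,\phi,\psi}$ as the fixed point of the Bellman-type recursion induced by the Lagrangian~\eqref{eqn:mcr_value_lagrang},
$$V_{\theta,\phi,\psi}(s) = C(s) + \gamma L_{\theta,s}(\xi_\phi, \lambda_\psi),$$
and to differentiate this identity directly with respect to $\phi$ (and then $\psi$), unfolding the resulting recursion exactly as in the proof of Theorem~\ref{thm:mcrp_gradient} in Appendix~\ref{sec:appendix_value}. The essential difference is that in Theorem~\ref{thm:mcrp_gradient} the pair $(\xi,\lambda)$ sits at the saddle point of $L_{\theta,s}$, so the envelope theorem~\citep{milgrom2002envelope} discards $\nabla_\theta \xi$ and $\nabla_\theta \lambda$; here $\xi_\phi$ and $\lambda_\psi$ are parameterized independently of the saddle condition, so those derivatives are precisely the quantities we must retain and propagate.

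For the $\phi$-gradient, I would apply the product and chain rules to each term of $\gamma L_{\theta,s}(\xi_\phi,\lambda_\psi)$. The term $\sum_{s'}P_\theta(s'|s)\xi_\phi(s'|s)V_{\theta,\phi,\psi}(s')$ splits into an explicit piece $\sum_{s'}P_\theta \nabla_\phi \xi_\phi \cdot V$ and a recursive piece $\sum_{s'}P_\theta \xi_\phi \nabla_\phi V$. The normalization and constraint terms contribute only explicit pieces; using the chain rule $\nabla_\phi g_e(\xi_\phi,P_\theta)=\sum_{s'}\frac{dg_e(\xi_\phi,P_\theta)}{d\xi(s')}\nabla_\phi \xi_\phi(s'|s)$ (and likewise for $f_i$), every explicit contribution carries the common factor $\nabla_\phi \xi_\phi(s'|s)$. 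Grouping by $s'$ collects exactly the bracket in the definition of $h_\phi$, so the identity reduces to
$$\nabla_\phi V_{\theta,\phi,\psi}(s) = h_\phi(s) + \gamma \sum_{s'}P_\theta(s'|s)\xi_\phi(s'|s)\,\nabla_\phi V_{\theta,\phi,\psi}(s').$$
Since the recursive kernel is the $\xi_\phi$-reweighted transition $P_\theta(\cdot|s)\xi_\phi(\cdot|s)$, unfolding this fixed-point equation yields the discounted sum $\E_{\xi_\phi}[\sum_{t\ge 0}\gamma^t h_\phi(s_t)\mid s_0=s;\pi_\theta]$, which is the claimed expression.

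The $\psi$-gradient follows the same template but is simpler: since $\lambda_\psi$ enters $L_{\theta,s}$ only through the normalization and constraint terms and not through the bilinear value term, differentiating produces no explicit $V$-contribution, and the recursive piece again comes solely from the $\xi_\phi$-weighted value. Collecting the explicit terms gives $h_\psi(s)$ with its overall minus sign inherited from the $-\lambda^P(\cdots)-\sum_e\lambda^{\mathcal E}g_e-\sum_i\lambda^{\mathcal I}f_i$ structure of the Lagrangian, and the same unfolding over $P_\theta\xi_\phi$ delivers the result. I expect the routine but delicate part to be the bookkeeping of the chain rule through $g_e$ and $f_i$ — correctly converting $\nabla_\phi g_e$ into the per-$s'$ factors $\frac{dg_e}{d\xi(s')}\nabla_\phi\xi_\phi(s'|s)$ so that they merge cleanly into $h_\phi$ — together with justifying that differentiation commutes with the infinite unfolding. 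The latter is handled exactly as in Theorem~\ref{thm:mcrp_gradient}: under the boundedness assumptions on $V_{\theta,\phi,\psi}$, on $\xi_\phi,\lambda_\psi$, and on the constraint derivatives (Assumption~\ref{assum:risk_envelope} and Corollary~\ref{cor:smoothness}), the map is a $\gamma$-contraction with bounded per-step derivatives, so the Neumann-series unfolding of $(\mathbf I-\gamma P_\theta\xi_\phi)^{-1}$ converges and legitimizes interchanging the gradient with the sum.
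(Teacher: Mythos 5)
Your proposal is correct and follows essentially the same route as the paper's proof, which simply invokes the chain rule on the Lagrangian form~\eqref{eqn:mcr_value_lagrang} and the unfolding technique from the proof of Theorem~\ref{thm:mcrp_gradient}; your version just makes explicit the key point that, without the saddle-point condition, the envelope theorem no longer discards $\nabla\xi_\phi$ and $\nabla\lambda_\psi$, so these become the retained $h_\phi$ and $h_\psi$ terms.
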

\begin{proof}
The proof follows from applying the chain rule to the Lagrangian formulation of the objective~\eqref{eqn:mcr_value_lagrang} using the same techniques as the proof of Theorem~\ref{thm:mcrp_gradient} in Appendix~\ref{sec:appendix_value}. 
\end{proof}

%\clearpage 

\end{document}